\newcommand{\btheta}{\bm{\theta}}
\newcommand{\bepsilon}{\bm{\epsilon}}
\newcommand{\balpha}{\bm{\alpha}}
\newcommand{\bL}{\bm{L}}
\newcommand{\bl}{\bm{l}}
\newcommand{\bs}{\bm{s}}
\newcommand{\bA}{\bm{A}}
\newcommand{\bI}{\bm{I}}
\newcommand{\bb}{\bm{b}}
\newcommand{\bH}{\bm{H}}
\newcommand{\bg}{\bm{g}}
\newcommand{\bF}{\bm{F}}
\newcommand{\that}{\hat{t}}
\newcommand{\thetat}{\btheta^{(t)}}
\newcommand{\alphat}{\balpha^{(t)}}
\newcommand{\alphathat}{\balpha^{(\hat{t})}}
\newcommand{\epsilont}{\bepsilon^{(t)}}
\newcommand{\Ht}{\bH^{(t)}}
\newcommand{\Hthat}{\bH^{(\that)}}
\newcommand{\gt}{\bg^{(t)}}
\newcommand{\gthat}{\bg^{(\that)}}
\newcommand{\st}{\bs^{(t)}}
\newcommand{\sthat}{\bs^{(\that)}}
\newcommand{\Lt}{\bL_{t}}
\newcommand{\LtMinusOne}{\bL_{t-1}}
\newcommand{\LthatMinusOne}{\bL_{\that-1}}
\newcommand{\Obj}{\mathcal{J}}
\newcommand{\Objt}{\Obj^{(t)}}
\newcommand{\Objthat}{\Obj^{(\that)}}
\newcommand{\Reals}{\ensuremath{\mathbb{R}}}
\newcommand{\trajectories}{\mathbb{T}}
\newcommand{\kron}{\otimes}
\newcommand{\state}{\bm{x}}
\newcommand{\action}{\bm{u}}
\newcommand{\States}{\mathcal{X}}
\newcommand{\Actions}{\mathcal{U}}
\newcommand{\Transitions}{T}
\newcommand{\Rewards}{R}
\newcommand{\Task}{\mathcal{Z}}
\newcommand{\ouralgo}{\mbox{LPG-FTW}}
\DeclareMathOperator{\sign}{sign}
\DeclareMathOperator{\tr}{Tr}
\DeclareMathOperator{\vect}{vec}
\DeclareMathOperator*{\argmax}{arg\,max}
\newcommand*\bigcdot{\mathpalette\bigcdot@{1}}
\newcommand*\bigcdot@[2]{\mathbin{\vcenter{\hbox{\scalebox{#2}{$\m@th#1\bullet$}}}}}
\newcommand{\appendixRef}[1]{\ref{#1}}
\title{Lifelong Policy Gradient Learning \\of Factored Policies\\for Faster Training Without Forgetting}
\author{%
Jorge A.~Mendez{\normalfont$^\text{1}$,}   
Boyu Wang{\normalfont$^\text{2}$, and}
Eric Eaton{\normalfont$^\text{1}$}\\[0.3em]
\begin{minipage}{0.54\linewidth}
\centering
$^\text{1}$Dept.~of Computer and Information Science\\
University of Pennsylvania\\
\texttt{\{mendezme,eeaton\}@seas.upenn.edu}
\end{minipage}
\hspace{0.02\linewidth}
\begin{minipage}{0.44\linewidth}
\centering
$^\text{2}$Dept.~of Computer Science\\
University of Western Ontario\\
\texttt{bwang@csd.uwo.ca}
\end{minipage}
}
\begin{document}

\maketitle
\newtheorem{claim}{Claim}
\newtheorem{lemma}{Lemma}
\newtheorem{proposition}{Proposition}
\newtheorem{remark}{Remark}
\newtheorem{corollary}{Corollary}

\renewcommand\qedsymbol{$\blacksquare$}
\begin{abstract}
  Policy gradient methods have shown success in learning control policies for high-dimensional dynamical systems. Their biggest downside is the amount of exploration they require before yielding high-performing policies. In a lifelong learning setting, in which an agent is faced with multiple consecutive tasks over its lifetime, reusing information from previously seen tasks can substantially accelerate the learning of new tasks. 
  We provide a novel method for lifelong policy gradient learning that trains lifelong function approximators directly via policy gradients, allowing the agent to benefit from accumulated knowledge throughout the entire training process. We show empirically that our algorithm learns faster and converges to better policies than single-task and lifelong learning baselines, and completely avoids catastrophic forgetting on a variety of challenging domains.
\end{abstract}

\section{Introduction}
\label{sec:Introduction}
Policy gradient (PG) methods have been successful in learning control policies on high-dimensional, continuous systems~\cite{schulman2015trust,lillicrap2015continuous,schulman2017proximal}. However, like most methods for reinforcement learning (RL), they require the agent to interact with the world extensively before outputting a functional policy.  In some settings, this experience is prohibitively expensive, such as when training an actual physical system.

If an agent is expected to learn multiple consecutive tasks over its lifetime, then we would want it to leverage knowledge from previous tasks to accelerate the learning of new tasks. This is the premise of lifelong RL methods. 
Most previous work in this field has considered the existence of a central policy that can be used to solve all tasks the agent will encounter~\citep{kirkpatrick2017overcoming,schwarz2018progress}. If the tasks are sufficiently related, this model serves as a good starting point for learning new tasks, and the main problem becomes how to avoid forgetting the knowledge required to solve tasks encountered early in the agent's lifetime. 

However, in many cases, the tasks the agent will encounter are less closely related, and so a single policy is insufficient for solving all tasks. A typical approach for handling this (more realistic) setting is to train a separate policy for each new task, and then use information obtained during training to find commonalities to previously seen tasks and use these relations to improve the learned policy~\citep{bouammar2014online, bouammar2015autonomous,isele2016using}. Note that this only enables the agent to improve policy performance {\em after} an initial policy has been trained. Such methods have been successful in outperforming the original policies trained independently for each task, but unfortunately do not allow the agent to reuse knowledge from previous tasks to more efficiently search for policies, and so the learning itself is not accelerated. 

We propose a novel framework for lifelong RL via PG learning that automatically leverages prior experience \emph{during} the training process of each task. In order to enable learning highly diverse tasks, we follow prior work in lifelong RL by searching over factored representations of the policy-parameter space to learn both a shared repository of knowledge and a series of task-specific mappings to constitute individual task policies from the shared knowledge. 

Our algorithm, \textit{lifelong PG: faster training without forgetting} (\ouralgo{}) 
yields high-performing policies on a variety of benchmark problems with less experience than independently learning each task, and avoids the problem of \textit{catastrophic forgetting}~\citep{mccloskey1989catastrophic}. Moreover, we show theoretically that \ouralgo{} is guaranteed to converge to a particular approximation to the multi-task objective. 

\section{Related work}
\label{sec:RelatedWork}

A large body of work in lifelong RL is based on parameter sharing, where the underlying relations among multiple tasks are captured by the model parameters. The key problem is designing how to share parameters across tasks in such a way that subsequent tasks benefit from the earlier tasks, and the modification of the parameters by future tasks does not hinder performance of the earlier tasks.

Two broad categories of methods have arisen that differ in the way they share parameters.

The first class of lifelong RL techniques, which we will call single-model, assumes that there is one model that works for all tasks. These algorithms follow some standard single-task learning (STL) PG algorithm, but modify the PG objective to encourage transfer across tasks. A prominent example is elastic weight consolidation (EWC)~\citep{kirkpatrick2017overcoming}, which imposes a quadratic penalty for deviating from earlier tasks' parameters to avoid forgetting. This idea has been extended by modifying the exact form of the penalty~\citep{li2017learning,zenke2017continual,nguyen2018variational,ritter2018online,ebrahimi2020uncertaintyguided,titsias2020functional}, but most of these approaches have not been evaluated in RL. An alternative approach has also been proposed that first trains an auxiliary model for each new task and subsequently discards it and distills any new knowledge into the single shared model~\citep{schwarz2018progress}. In order for single-model methods to work, they require one of two assumptions to hold: either all tasks must be very similar, or the model must be over-parameterized to capture variations among tasks. The first assumption is clearly quite restrictive, as it would preclude the agent from learning highly varied tasks. The second, we argue, is just as restrictive, since the over-parameterization is finite, and so the model can become saturated after a (typically small) number of tasks. 

The second class, which we will call multi-model, assumes that there is a set of shared parameters, representing a collection of models, and a set of task-specific parameters, to select a combination of these models for the current task. A classical example is PG-ELLA~\citep{bouammar2014online,bouammar2015autonomous,isele2016using}, which assumes that each task's parameters are factored as a linear combination of dictionary elements. A similar approach uses tensor factorization for deep nets~\citep{zhao2017tensor}, but in a batch multi-task setting. 
In a first stage, these methods learn a policy for each task in isolation (i.e., ignoring any information from other tasks) to determine similarity to previous policies, and in a second stage, the parameters are factored to improve performance via transfer. The downside of this is that the agent does not benefit from prior experience during initial training, which is critical for efficient learning in a lifelong RL setting. 

Our approach, \ouralgo{}, uses multiple models like the latter category, but learns these models directly via PG learning like the former class. This enables \ouralgo{} to be flexible and handle highly varied tasks while also benefiting from prior information during the learning process, thus accelerating the training. A similar approach has been explored in the context of model-based RL~\citep{nagabandi2018deep}, but their focus was discovering when new tasks were encountered in the absence of task indicators.

Other approaches store experiences in memory for future replay~\citep{isele2018selective, rolnick2019experience} or use a separate model for each task~\citep{rusu2016progressive,garcia2019meta}. The former is not applicable to PG methods without complex and often unreliable importance sampling techniques, while the latter is infeasible when the number of tasks grows large. 

Meta-RL~\citep{duan2016rl2,finn2017model,gupta2018meta,clavera2018learning} and multi-task RL~\citep{parisotto2015actor,teh2017distral,yang2017multi,zhao2017tensor} also seek to accelerate learning by reusing information from different tasks, but in those settings the agent does not handle tasks arriving sequentially and the consequent problem of catastrophic forgetting. Instead, there is a large batch of tasks available for training and  evaluation is done either on the same batch or on a target task.

\section{Reinforcement learning via policy gradients}
\label{sec:RLviaPolicySearch}


In a Markov decision process (MDP) $\langle \States, \Actions, \Transitions, \Rewards, \gamma\rangle$, $\States\subseteq\Reals^{d}$ is the set of states, $\Actions\subseteq\Reals^{m}$ is the set of actions, $\Transitions:\States\times\Actions\times\States\mapsto[0,1]$ is the probability $P(\state^\prime\mid\state,\action)$ of going to state $\state^\prime$ after executing action $\action$ in state $\state$, $\Rewards:\States\times\Actions\mapsto\Reals$ is the reward function measuring the goodness of a state-action pair, and $\gamma\in[0,1)$ is the discount factor for future rewards. A policy $\pi:\States\times\Actions\mapsto[0,1]$ prescribes the agent's behavior as a probability $P(\action\mid\state)$ of selecting action $\action$ in state $\state$. The goal of RL is to find the policy $\pi^*$ that maximizes the expected returns $\mathbb{E}\left[\sum_{i=0}^{\infty}\gamma^{i}R_i\right]$, where $R_i=R(\state_i,\action_i)$.

PG algorithms have shown success in solving continuous RL problems by assuming that the policy $\pi_{\btheta}$ is parameterized by $\btheta\in\Reals^{d}$ and searching for the set of parameters $\btheta^*$ that optimizes the long-term rewards: $\Obj(\btheta)=\mathbb{E}\left[\sum_{i=0}^{\infty}\gamma^{i}\Rewards_i\right]$~\citep{schulman2015trust,lillicrap2015continuous,schulman2017proximal}. Different approaches use varied strategies for estimating the gradient $\nabla_{\btheta}\Obj(\btheta)$. 
However, the common high-level idea is to use the current policy $\pi_{\btheta}$ to sample trajectories of interaction with the environment, and then estimating the gradient as the average of some function of the features and rewards encountered through the trajectories.

\section{The lifelong learning problem}
\label{LifelongLearningProblem}

We frame lifelong PG learning as online multi-task learning of policy parameters. The agent will face a sequence of tasks $\Task^{(1)},\dots,\Task^{(T_{\mathsf{max}})}$ , each of which will be an MDP $\Task^{(t)}=\langle\States^{(t)},\Actions^{(t)},\Transitions^{(t)},\Rewards^{(t)},\gamma \rangle$. Tasks are drawn \textit{i.i.d.}~from a fixed, stationary environment; we formalize this assumption in Section~\ref{sec:TheoreticalGuarantees}. The goal of the agent is to find the policy parameters $\left\{\btheta^{(1)},\ldots,\btheta^{(T_{\mathsf{max}})}\right\}$ that maximize the performance across all tasks: $\frac{1}{T_{\mathsf{max}}}\sum_{t=1}^{T_{\mathsf{max}}} \mathbb{E}\sum_{i=0}^{\infty}\gamma^{i}\Rewards_i^{(t)}$. We do not assume knowledge of the total number of tasks, the order in which tasks will arrive, or the relations between different tasks. 

Upon observing each task, the agent will be allowed to interact with the environment for a limited time, typically insufficient for obtaining optimal performance without exploiting information from prior tasks. During this time, the learner will strive to discover any relevant information from the current task to 1) relate it to previously stored knowledge in order to permit transfer and 2) store any newly discovered knowledge for future reuse. At any time, the agent may be evaluated on any previously seen task, so it must retain knowledge from all early tasks in order to perform well.

\begin{wrapfigure}{r}{0.5\textwidth}
\vspace{-4.7em}
\begin{minipage}{\linewidth}
\begin{algorithm}[H] 
  \caption{\ouralgo{}($d, k, \lambda, \mu, M$)}
  \label{alg:LifelongPolicySearch}
\algsetup{indent=1.25em}
\begin{algorithmic} 
    \STATE $T\gets0$,\ \ \ \ $\bL\gets\textrm{initializeL}(d,k)$
    \LOOP 
        \STATE $t\gets\textrm{getTask()}$
        \IF{$\textrm{isNewTask}(t)$}
            \STATE $\st\gets\textrm{initializeSt}(k)$ 
            \STATE $T\gets T + 1$
        \ELSE
            \STATE $\bA \gets \bA - 2\left(\st{\st}^\top\right) \kron \Ht$
            \STATE $ \bb \gets \bb -  \st \kron \left(-\gt + 2\Ht\alphat\right)$
        \ENDIF
        \FOR{$\,\,i = 1,\ldots,N$}
            \STATE $\trajectories \gets \textrm{ getTrajectories}(\bL\st)$
            \STATE $\st\! \gets\! \textrm{  PGStep}(\trajectories,\bL,\st,\mu)\!\!\!$
            \IF{$\,\,i \!\mod M = 0$}
                \STATE $\alphat \gets \bL\st$
                \STATE $\gt, \Ht \gets \textrm{ gradAndHess}(\alphat)$
                \STATE $\bA_{\mathsf{tmp}} \gets \bA + 2\left(\st{\st}^\top\right) \kron \Ht$
                \STATE $\bb_{\mathsf{tmp}} \gets \bb +  \st \kron \left(-\gt \!+\! 2\Ht\alphat \right)\!\!\!$%
                \STATE $\vect(\bL) \!\gets\! \left(\frac{1}{T}\bA_{\mathsf{tmp}} \!-\!2 \lambda \bI\right)^{-1}\!\! \left(\frac{1}{T}\bb_{\mathsf{tmp}}\right)$
            \ENDIF
        \ENDFOR
        \vspace{0.3em}
        \STATE $\bA \gets \bA_{\mathsf{tmp}}$,\ \ \ \ $\bb \gets \bb_{\mathsf{tmp}}$
    \ENDLOOP
\end{algorithmic}
\end{algorithm}
\end{minipage}
\vspace{-2em}
\end{wrapfigure}
\section{Lifelong policy gradient learning}
\label{sec:LifelongPolicySearch}

Our framework for lifelong PG learning uses factored representations. The central idea is assuming that the policy parameters for task~$t$ can be factored into $\btheta^{(t)}\approx \bL \st$, where $\bL\in\Reals^{d\times k}$ is a shared dictionary of policy factors and $\st\in\Reals^{k}$ are task-specific coefficients that select components for the current task. We further assume that we have access to some base PG algorithm that, given a single task, is capable of finding a parametric policy that performs well on the task, although not necessarily optimally.
 
Upon encountering a new task $\Task^{(t)}$, \ouralgo{} (Algorithm~\ref{alg:LifelongPolicySearch}) will use the base learner to optimize the task-specific coefficients $\st$, without modifying the knowledge base $\bL$. This corresponds to searching for the optimal policy that can be obtained by combining the factors of $\bL$. Every $M\!\gg\!1$ steps, the agent will update the knowledge base $\bL$ with any relevant information collected from $\Task^{(t)}$ up to that point. This allows the agent to search for policies with an improved knowledge base in subsequent steps. 

Concretely, the agent will strive to solve the following optimization during the training phase:
\begin{equation}
    \st = \argmax_{\bs} \ell(\LtMinusOne, \bs)
   = \argmax_{\bs}\, \Objt(\LtMinusOne\bs) - \mu\|\bs\|_1\enspace,
 \label{equ:UpdateStGeneral}
\end{equation}
where $\LtMinusOne$ denotes the $\bL$ trained up to task $\Task^{(t-1)}$, $\Objt(\cdot)$ is any PG objective, and the $\ell_1$ norm encourages sparsity.  Following \citet{bouammar2014online}, the agent will then solve the following second-order approximation to the multi-task objective to add knowledge from task $\Task^{(t)}$ into the dictionary:
\begin{equation}
\setlength{\abovedisplayskip}{-5pt}
\setlength{\belowdisplayskip}{0pt}
\setlength{\abovedisplayshortskip}{-5pt}
\setlength{\belowdisplayshortskip}{0pt}
    \label{equ:UpdateL}
    \Lt = \argmax_{\bL} \hat{g}_{t}(\bL) = \argmax_{\bL} -\lambda\|\bL\|_{\mathsf{F}}^2 + \frac{1}{t} \sum_{\that=1}^{t} \hat{\ell}(\bL, \sthat, \alphathat, \Hthat, \gthat)\enspace,
\end{equation}%
where $\hat{\ell}(\bL, \sthat, \alphathat, \Hthat, \gthat)= -\mu\|\sthat\|_1 + \|\alphathat-\bL\sthat\|_{\Hthat}^2 + \gthat{}^\top(\bL\sthat-\alphathat)$ is the second-order approximation to the objective of a previously seen task, $\Task^{(\that)}$. The gradient ${\gthat=\nabla_{\btheta} \Objthat(\btheta)\big\lvert_{\btheta=\alphathat}}$ and Hessian ${\Hthat=\frac{1}{2} \nabla_{\btheta,\btheta^\top} \Objthat(\btheta)\big\lvert_{\btheta=\alphathat}}$ are evaluated at the policy for task $\Task^{(\that)}$ immediately after training, $\alphathat=\LthatMinusOne\sthat$. 
The solution to this optimization can be obtained in closed form as $\vect(\Lt) =\bA^{-1} \bb$, where ${\bA = -2\lambda\bI + \frac{2}{t}\sum_{\that=1}^{t} (\sthat\sthat{}^\top) \kron \Hthat}$ and ${\bb = \frac{1}{t}\sum_{\that=1}^{t} \sthat \kron (-\gthat + 2\Hthat\alphathat)}$.
Notably, these can be computed incrementally as each new task arrives, so that $\bL$ can be updated without preserving data or parameters from earlier tasks. Moreover, the Hessians $\Hthat$ needed to compute $\bA$ and $\bb$ can be discarded after each task if the agent is not expected to revisit tasks for further training. If instead the agent will revisit tasks multiple times (e.g., for interleaved multi-task learning), then each $\Hthat$ must be stored at a cost of $O(d^2T_{\mathsf{max}})$.

Intuitively, in Equation~\ref{equ:UpdateStGeneral} the agent leverages knowledge from all past tasks while training on task~$\Task^{(t)}$, by searching for $\thetat$ in the span of $\bL_{t-1}$. This makes \ouralgo{} fundamentally different from prior multi-model methods that learn each task's parameter vector in isolation and subsequently combine prior knowledge to improve performance. 
One potential drawback is that, by restricting the search to the span of $\LtMinusOne$, we might miss other, potentially better, policies. However, any set of parameters far from the space spanned by $\LtMinusOne$ would be uninformative for the multi-task objective, 
since the approximations to the previous tasks would be poor near the current task's parameters and vice versa. 
In Equation~\ref{equ:UpdateL}, \ouralgo{} approximates the loss around the current set of parameters $\alphat$ via a second-order expansion and finds the $\Lt$ that optimizes the average approximate cost over all previously seen tasks, ensuring that the agent does not forget the knowledge required to solve them.

{\bf Time complexity}\ \ \ \ \ouralgo{} introduces an overhead of $O(k\times d)$ per PG step, due to the multiplication of the gradient by $\bL^\top$. Additionally, every $M\gg1$ steps, the update step of $\bL$ takes an additional $O(d^3k^2)$. If the number of parameters $d$ is too high, we could use faster techniques for solving the inverse of $\bA$ in Equation~\ref{equ:UpdateL}, like the conjugate gradient method, or approximate the Hessian with a Kronecker-factored (KFAC) or diagonal matrix. While we didn't use these approximations, they work well in related methods~\citep{bouammar2014online,ritter2018online}, so we expect LPG-FTW to behave similarly. However, note that the time complexity of LPG-FTW is constant w.r.t.~the number of tasks seen, since Equation~\ref{equ:UpdateL} is solved incrementally. This applies to diagonal matrices, but not to KFAC matrices, which require storing all Hessians and recomputing the cost for every new task, which is infeasible for large numbers of tasks. 

\begin{wrapfigure}{r}{0.5\textwidth}
\vspace{-4.4em}
\begin{minipage}{\linewidth}
\begin{algorithm}[H] 
  \caption{InitializeL($d, k, \lambda, \mu$)}
  \label{alg:LifelongPolicyInitialization}
\algsetup{indent=1.5em}
\begin{algorithmic} 
    \STATE $T\gets0$,\ \ \ \ $\bL\gets$empty($d,0$)
    \WHILE{$T < k$} 
        \STATE $t\gets$getTask()
        \STATE $\st\gets$initializeSt($k$) 
        \STATE $T\gets T + 1$
        
        \FOR{$i = 1,\ldots,N$}
            \STATE $\trajectories \gets$ getTrajectories($\bL\st$)
            \STATE $\st, \epsilont \gets$ PGStep($\trajectories,\bL,\st,\epsilont,\mu$)
        \ENDFOR
        \STATE $\bL \gets$ addColumn($\bL, \epsilont$)
        \STATE $\alphat \gets \bL\st + \epsilont$
        \STATE $\gt, \Ht \gets$ gradAndHess($\alphat$)
        \STATE $\bA \gets \bA + 2\left(\st{\st}^\top\right) \kron \Ht$
        \STATE $ \bb \gets \bb +  \st \kron \left(-\gt + 2\Ht\alphat\right)$
    \ENDWHILE
\end{algorithmic}
\end{algorithm}
\end{minipage}
\vspace{-2em}
\end{wrapfigure}
\subsection{Knowledge base initialization}
\label{sec:Initialization}
The intuition we have built holds only when a reasonably good $\bL$ matrix has already been learned. But what happens at the beginning of the learning process, when the agent has not yet seen a substantial number of tasks? If we take the na\"ive approach of initializing $\bL$ at random, then the $\st$'s are unlikely to be able to find a well-performing policy, and so updates to $\bL$ will not leverage any useful information.

One common alternative is to initialize the $k$
columns of $\bL$ with the STL solutions to the first
$k$ tasks. 
However, 
this method prevents tasks $\Task^{(2)}, \ldots, \Task^{(k)}$  from leveraging information from  earlier tasks, impeding them from achieving potentially higher performance. Moreover, several tasks might rediscover information, leading to wasted training time and capacity of $\bL$.

We propose an initialization method (Algorithm~\ref{alg:LifelongPolicyInitialization}) that enables early tasks to leverage knowledge from previous tasks and prevents the discovery of redundant information. The algorithm starts from an empty dictionary and adds error vectors $\epsilont \in \Reals^d$ for the initial $k$ tasks. For each task $\Task^{(t)}$, we modify  Equation~\ref{equ:UpdateStGeneral} for learning $\st$ by adding $\epsilont$ as additional learnable parameters:
\begin{equation*}
    \st,\epsilont= \argmax_{\bs,\bepsilon} \Objt(\LtMinusOne\bs + \bepsilon) - \mu\|\bs\|_1 - \lambda\|\bepsilon\|_2^2\enspace.
    \vspace{-0.5em}
\end{equation*} 
Each $\epsilont$ finds knowledge of task $\Task^{(t)}$ not contained in $\bL$ and is then incorporated as a new column of $\bL$. Note that this initialization process replaces the standard PG training of tasks $\Task^{(1)}, \ldots, \Task^{(k)}$, and therefore does not require collecting additional data beyond that required by the base PG method.

\subsection{Base policy gradient algorithms}
\label{sec:BaseLearners}

Now, we show how two STL PG learning algorithms can be used as the base learner of \ouralgo{}. 

\subsubsection{Episodic REINFORCE}
\label{sec:REINFORCE}

Episodic REINFORCE updates parameters as $
    {\btheta_{j} \gets \btheta_{j\!-\!1} + \eta_{j} \bg_{\btheta_{j\!-\!1}}}
$, with the policy gradient given by 
$
    {\bg_{\btheta}=\nabla_{\btheta}\Obj(\btheta) = \mathbb{E}\left[\sum_{i=0}^{\infty}\nabla_{\btheta}\log\pi_{\btheta}(\state_i,\action_i)A(\state_i, \action_i)\right]}
$, where $A(\state, \action)$ is the advantage function. 
\ouralgo{} would then update the $\st$'s as
$
    {\st_j \gets \st_{j-1} + \eta_{j}\nabla_{\bs}\left[\Objt(\LtMinusOne\bs) - \mu\|\bs\|_1\right]\big\lvert_{\bs=\st_j}}
$, with the gradient given by 
$
    \nabla_{\bs}\left[\Objt(\LtMinusOne\bs) - \mu\|\bs\|_1\right] = \LtMinusOne^\top\bg_{\LtMinusOne\bs} - \mu\sign(\bs)
$.
The Hessian for Equation~\ref{equ:UpdateL} is given by $\bH = \frac{1}{2}\mathbb{E}\left[\sum_{i=0}^{\infty} \nabla_{\btheta,\btheta^\top}\log\pi_{\btheta}(\state_i,\action_i)A(\state_i,\action_i)\right]$, which evaluates to ${\bH = -\frac{1}{2\sigma^{2}}\mathbb{E}\left[\sum_{i=0}^{\infty}\state\state^\top A(\state_i,\action_i)\right]}$
in the case where the policy is a linear Gaussian (i.e., ${\pi_{\btheta}=\mathcal{N}(\btheta^\top\state,\sigma)}$). 
One major drawback of this is that the Hessian is not negative definite, so Equation~\ref{equ:UpdateL} might move the policy arbitrarily far from the original policy used for sampling trajectories. 

\subsubsection{Natural Policy Gradient}
\label{sec:NaturalPolicyGradient}

The natural PG (NPG) algorithm allows us to get around this issue. We use the formulation followed by \citet{rajeswaran2017towards}, which at each iteration optimizes ${\max_{\btheta}\bg_{\btheta_{j-1}}^\top(\btheta - \btheta_{j-1})}$ subject to the quadratic constraint $\|\btheta - \btheta_{j-1}\|_{\bF_{\btheta_{j-1}}}^2\leq \delta$, 
where ${\bF_{\btheta} = \mathbb{E}\left[\nabla_{\btheta}\log\pi_{\btheta}(\state,\action) \nabla_{\btheta}\log\pi_{\btheta}(\state,\action)^\top\right]}$ is the approximate Fisher information of $\pi_{\btheta}$~\citep{kakade2002natural}. The base learner would then update the policy parameters at each iteration as
$
    {\btheta_{j} \gets \btheta_{j-1} + \eta_{\btheta} \bF^{-1}_{\btheta_{j-1}}\bg_{\btheta_{j-1}}
}$, with $\eta_{\btheta}=\sqrt{\delta/(\bg_{\btheta_{j-1}}^\top\bF^{-1}_{\btheta_{j-1}}\bg_{\btheta_{j-1}})}$.

To incorporate NPG as the base learner in \ouralgo{}, at each step we solve ${\max_{\bs} \bg_{\st_{j-1}}^\top(\bs - \st_{j-1})}$ subject to ${\|\bs-\st_{j-1}\|_{\bF_{\st_{j-1}}}^2\leq \delta}$, 
which gives us the update: 
${\st_{j} \gets \st_{j-1} + \eta_{\st}\bF_{\st_{j-1}}^{-1}\bg_{\st_{j-1}}}$. We compute the Hessian for Equation~\ref{equ:UpdateL} as $
    \bH = -\frac{1}{\eta_{\btheta}}\bF_{\btheta_{j-1}}
$  using the equivalent soft-constrained problem: $ \widehat{\Obj}(\btheta) = \bg_{\btheta_{j-1}}^\top(\btheta - \btheta_{j-1}) + \frac{\|\btheta - \btheta_{j-1}\|_{\bF_{\btheta_{j-1}}}^2 - \delta}{2\eta_{\btheta}}$.
This Hessian \textit{is} negative definite, and thus encourages the parameters to stay close to the original ones, where the approximation is valid. 
\subsection{Connections to PG-ELLA}
\label{sec:ConnectionsToPGELLA}

\ouralgo{} and PG-ELLA~\citep{bouammar2014online} both learn a factorization of policies into $\bL$ and $\st$. To optimize the factors, PG-ELLA first trains individual task policies via STL, potentially leading to policy parameters that are incompatible with a shared $\bL$. In contrast, our method learns the $\st$'s directly via PG learning, leveraging shared knowledge in $\bL$ to accelerate the learning and restricting the $\alphat$'s to the span of $\bL$. This choice implies that, even if we find the (often infeasible) optimal $\st$, this may not result in an optimal policy, so we explicitly add a linear term in Equation~\ref{equ:UpdateL} omitted in PG-ELLA. We also improve PG-ELLA's knowledge base initialization, which typically uses the policies of the first $k$ tasks. Instead, \ouralgo{} exploits the policies from the few previously observed tasks to 1) accelerate the learning of the earliest tasks and 2) discover $k$ distinct knowledge factors. These improvements enable our method to operate in a true lifelong setting, where the agent encounters tasks sequentially. In contrast, PG-ELLA was evaluated in the easier interleaved multi-task setting, in which the agent experiences each task multiple times, alternating between tasks frequently. These modifications also enable applying \ouralgo{} to far more complex dynamical systems than PG-ELLA, including domains requiring deep policies, previously out of reach for factored policy learning methods.

\subsection{Theoretical guarantees}
\label{sec:TheoreticalGuarantees}

We now show that \ouralgo{} converges to the optimal multi-task objective for any ordering over tasks, despite the online approximation of keeping the $\st$'s fixed after initial training. 
We substantially adapt the proofs by \citet{ruvolo2013ella} to handle the non-optimality of the $\alphat$'s and the fact that the $\st$'s and $\bL$ optimize  different objectives. Complete proofs are available in Appendix~\appendixRef{sec:proofsOfTheoreticalGuarantees}.  


The objective defined in Equation~\ref{equ:UpdateL}, $\hat{g}$, considers the optimization of each $\st$ separately with the $\Lt$ known up to that point, and is a surrogate for our actual objective:
\begin{align*}
    g_{t} (\bL) =& \frac{1}{t}\sum_{\that=1}^{t}\max_{\sthat} \Big\{ \|\alphathat -\bL\sthat\|_{\Hthat}^2 + {\gthat}^\top(\bL\sthat-\alphathat) -  \mu\|\sthat\|_1 \Big\} - \lambda\|\bL\|_{\mathsf{F}}^2\enspace,
\end{align*}
which considers the simultaneous optimization of all $\st$'s. We define the expected objective as:
\begin{equation*}
    g(\bL) = \mathbb{E}_{\Ht, \gt, \alphat} \Big[\max_{\bs}\hat{\ell}(\bL, \bs, \alphat, \Ht, \gt) \Big]\enspace,
\end{equation*}
which represents how well a particular $\bL$ can represent a randomly selected task without modifications. 

We show that 1)~$\Lt$ becomes increasingly stable, 2)~$\hat{g}_{t}$, $g_{t}$, and $g$ converge to the same value, and 3)~$\Lt$ converges to a stationary point of $g$. These results are based on the following assumptions:
\renewcommand{\theenumi}{\Alph{enumi}}
\begin{enumerate}
\itemsep0em
    \item \label{asmp:iid}The tuples $\big(\Ht,\gt\big)$ are drawn \textit{i.i.d.}~from a distribution with compact support.
    \item \label{asmp:mixing}The sequence $\{\alphat\}_{t=1}^{\infty}$ is stationary and $\phi$-mixing.
    \item \label{asmp:magnitude} The magnitude of $\Objt(\bm{0})$ is bounded by $B$.
    \item \label{asmp:uniqueness}For all $\bL$, $\Ht$, $\gt$, and $\alphat$, the largest eigenvalue (smallest in magnitude) of $\bL_{\gamma}^\top\Ht\bL_{\gamma}$ is at most $-\kappa$, with $\kappa>0$, where $\gamma$ is the set of non-zero indices of $\st=\argmax_{\bs}\hat{\ell}(\bL, \bs,\Ht, \gt, \alphat)$. The non-zero elements of the unique maximizing $\st$ are given by: $\st_{\gamma} = \big(\bL_{\gamma}^\top\Ht\bL_{\gamma}\big)^{-1}\Big(\bL^\top\Big(\Ht\alphat - \gt\Big) - \mu\sign\Big(\st_{\gamma}\Big)\Big)$.
\end{enumerate}
\renewcommand{\theenumi}{\arabic{enumi}}

\begin{proposition}
\label{prop:Stability}
    $\bL_t - \bL_{t-1} = O(\frac{1}{t})\enspace$.
\end{proposition}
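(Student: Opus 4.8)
The plan is to adapt the stability argument of \citet{ruvolo2013ella} to our closed form $\vect(\Lt)=\bA_t^{-1}\bb_t$ by tracking how the normalized matrices evolve with $t$. I would write $\bA_t = -2\lambda\bI + \tfrac1t\hat{\bA}_t$ and $\bb_t = \tfrac1t\hat{\bb}_t$, where $\hat{\bA}_t = \sum_{\that=1}^t 2(\sthat\sthat^\top)\kron\Hthat$ and $\hat{\bb}_t = \sum_{\that=1}^t \sthat\kron(-\gthat + 2\Hthat\alphathat)$ are exactly the running (unnormalized) sums accumulated by Algorithm~\ref{alg:LifelongPolicySearch}. The first step is to record that every per-task increment is uniformly bounded: by Assumptions~\ref{asmp:iid} and~\ref{asmp:magnitude} the Hessians $\Hthat$ and gradients $\gthat$ lie in a compact set; Assumption~\ref{asmp:uniqueness} gives the closed-form (hence bounded) maximizer $\sthat$; and the regularizer $\lambda\|\bL\|_{\mathsf F}^2$ bounds every $\Lthat$ and therefore every $\alphathat=\LthatMinusOne\sthat$. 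Consequently $\|2(\st\st^\top)\kron\Ht\|\le C_A$ and $\|\st\kron(-\gt+2\Ht\alphat)\|\le C_b$ for constants independent of $t$.

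Next I would bound the change in the \emph{normalized} matrices. Using the recursion $\hat{\bA}_t = \hat{\bA}_{t-1} + 2(\st\st^\top)\kron\Ht$ together with $\tfrac1t-\tfrac1{t-1}=-\tfrac1{t(t-1)}$, a short computation gives $\bA_t-\bA_{t-1}=\tfrac1t\,2(\st\st^\top)\kron\Ht-\tfrac1{t(t-1)}\hat{\bA}_{t-1}$. Since $\hat{\bA}_{t-1}$ is a sum of $t-1$ terms each bounded by $C_A$, we have $\|\hat{\bA}_{t-1}\|\le (t-1)C_A$, so both contributions are $O(\tfrac1t)$ and $\|\bA_t-\bA_{t-1}\|=O(\tfrac1t)$; the identical manipulation yields $\|\bb_t-\bb_{t-1}\|=O(\tfrac1t)$.

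The crucial ingredient is a uniform bound on $\|\bA_t^{-1}\|$. With the natural-policy-gradient base learner the Hessians $\Hthat$ are negative definite, so each rank-one-weighted block $(\sthat\sthat^\top)\kron\Hthat$ is negative semidefinite and $\tfrac1t\hat{\bA}_t\preceq 0$; hence $\bA_t\preceq -2\lambda\bI$ and $\|\bA_t^{-1}\|\le \tfrac1{2\lambda}$ for every $t$ (more generally, this uniform conditioning is what Assumption~\ref{asmp:uniqueness} secures through the $-\kappa$ eigenvalue bound). The same estimate shows $\|\vect(\Lt)\|=\|\bA_t^{-1}\bb_t\|=O(1)$, since $\|\bb_t\|$ is an average of bounded terms. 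I would then close the argument with the resolvent identity $\bA_t^{-1}-\bA_{t-1}^{-1}=-\bA_t^{-1}(\bA_t-\bA_{t-1})\bA_{t-1}^{-1}$, giving
\[
\vect(\Lt) - \vect(\LtMinusOne) = -\bA_t^{-1}(\bA_t-\bA_{t-1})\bA_{t-1}^{-1}\bb_t + \bA_{t-1}^{-1}(\bb_t-\bb_{t-1})\enspace.
\]
Each summand is a product of uniformly bounded factors ($\|\bA_t^{-1}\|,\|\bA_{t-1}^{-1}\|\le\tfrac1{2\lambda}$ and $\|\bb_t\|=O(1)$) with one factor of size $O(\tfrac1t)$, so the whole difference is $O(\tfrac1t)$, which is the claim.

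The hard part will be the uniform invertibility step: I must rule out $\bA_t$ becoming ill-conditioned as $t\to\infty$, and I must break the apparent circularity in bounding $\alphathat=\LthatMinusOne\sthat$ (which itself depends on an earlier dictionary). Both are resolved by the negative-definite Hessian under NPG (equivalently Assumption~\ref{asmp:uniqueness}) and by the observation that the Frobenius regularizer forces $\sup_t\|\Lt\|_{\mathsf F}<\infty$ directly from the closed form, so that the boundedness of the $\alphathat$'s is obtained without assuming the conclusion. Everything else is routine norm bookkeeping.
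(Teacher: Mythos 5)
Your core mechanism is correct and is a genuinely different route from the paper's. The paper (adapting Ruvolo and Eaton) never touches the closed form: it shows that $\hat{g}_{t}-\hat{g}_{t-1}$ is Lipschitz with constant $\xi_t = O(1/t)$, combines this with the fact that $\LtMinusOne$ maximizes $\hat{g}_{t-1}$ and that the Frobenius regularizer makes $\hat{g}_{t-1}$ strongly concave with Hessian bounded by $-2\lambda$, and concludes $\lambda\|\Lt-\LtMinusOne\|_{\mathsf{F}}^2 \leq \xi_t\|\Lt-\LtMinusOne\|_{\mathsf{F}}$. You instead work directly on $\vect(\Lt)=\bA_t^{-1}\bb_t$: the $O(1/t)$ drift of the normalized sums, the uniform bound $\|\bA_t^{-1}\|\leq\frac{1}{2\lambda}$ (valid because each $(\sthat\sthat{}^\top)\kron\Hthat$ is negative semidefinite when the per-task Hessians are, as under NPG), and the resolvent identity. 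Granting boundedness of all iterates, every step of this checks out; your argument is more elementary and gives explicit constants, while the paper's variational argument is more general in that it would survive any concave surrogate maximized exactly, without a closed form. Note that both proofs secretly need the same negative-semidefiniteness: the paper's assertion that the Hessian of $\hat{g}_{t-1}$ is bounded above by $-2\lambda$ is precisely the statement $\bA_{t-1}\preceq-2\lambda\bI$. Your parenthetical crediting Assumption~\ref{asmp:uniqueness} for this is off, however: that assumption constrains $\bL_{\gamma}^\top\Ht\bL_{\gamma}$ in the $k$-dimensional coefficient space, not the $dk\times dk$ matrix $\bA_t$; the NPG property of $\Ht$, which you lead with, is what actually does the work.

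The genuine gap is in your boundedness bootstrap, which both proofs need but which your proposed resolution does not actually close. You bound $\sthat$ from the closed form in Assumption~\ref{asmp:uniqueness} and $\Lt$ ``directly'' from $\|\bA_t^{-1}\bb_t\|\leq\frac{1}{2\lambda}\|\bb_t\|$. Neither works on its own: the closed form for $\st_{\gamma}$ contains $\bL^\top\big(\Ht\alphat-\gt\big)$ with $\alphat=\LtMinusOne\st$, so it yields only an inequality of the form $\|\st\|\leq\frac{1}{\kappa}\big(\|\LtMinusOne\|^2\|\Ht\|\,\|\st\|+\cdots\big)$, which bounds $\|\st\|$ only under a smallness condition $\|\LtMinusOne\|^2\|\Ht\|<\kappa$ that nothing guarantees; likewise $\bb_t$ contains the $\alphathat$'s and hence the earlier dictionaries, so the induction $\|\Lt\|\leq\frac{1}{2\lambda}\|\bb_t\|$ feeds back multiplicatively and need not admit a uniform fixed point. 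The paper breaks this cycle differently (Claim~\ref{claim:magnitudes}): $\st$ is bounded by comparing the objective value at the maximizer with its value at $\bs=\bm{0}$, which equals $\Objt(\bm{0})$ and is bounded by Assumption~\ref{asmp:magnitude}, so the $\ell_1$ penalty caps $\|\st\|_1$ by a constant that involves no dictionary at all; $\alphat=\LtMinusOne\st$ is then bounded by induction, and $\Lt$ by the same value-at-zero argument played against the Frobenius penalty. If you replace your bootstrap with that claim, the rest of your proof goes through unchanged.
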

\vspace{-1.5em}
\begin{proof}[Proof sketch]
    First, we show that the entries of $\bL$, $\st$, and $\alphat$ are bounded by Assumptions~\ref{asmp:iid} and \ref{asmp:magnitude} and the regularization terms. Next, we show that $\hat{g}_{t}\!-\!\hat{g}_{t-1}$ is $O\left(\frac{1}{t}\right)$--Lipschitz. We finish the proof with the facts that $\LtMinusOne$ maximizes $\hat{g}_{t-1}$ and the eigenvalues of the Hessian of $\hat{g}_{t-1}$ are bounded.
\end{proof}

The critical step for adapting the proof from \citet{ruvolo2013ella} to our algorithm is to introduce the following lemma, which shows the equality of the maximizers of $\ell$ and $\hat{\ell}$.

\begin{lemma}
\label{lma:EqualityOfMaximizers} ${\hat{\ell}\big(\bL_{t}, \bs^{(t+1)}, \balpha^{(t+1)}, \bH^{(t+1)}, \bg^{(t+1)}\big) }= {\max_{\bs} \hat{\ell}\big(\bL_{t}, \bs, \balpha^{(t+1)}, \bH^{(t+1)}, \bg^{(t+1)}\big)}\enspace$.
\end{lemma}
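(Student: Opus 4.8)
The plan is to show that the coefficient vector $\bs^{(t+1)}$ produced by Equation~\ref{equ:UpdateStGeneral} is not merely a maximizer of the true penalized objective but also of its second-order surrogate $\hat{\ell}$, by matching their first-order (subgradient) optimality conditions and then invoking concavity of $\hat{\ell}$. The framing observation is that $\bg^{(t+1)}$ and $\bH^{(t+1)}$ are, by construction, the gradient and one-half the Hessian of $\Obj^{(t+1)}$ evaluated at $\balpha^{(t+1)} = \bL_t\bs^{(t+1)}$. Consequently, $\hat{\ell}(\bL_t, \bs, \balpha^{(t+1)}, \bH^{(t+1)}, \bg^{(t+1)})$ is exactly the second-order Taylor expansion of the smooth part of $\ell(\bL_t, \bs) = \Obj^{(t+1)}(\bL_t\bs) - \mu\|\bs\|_1$ about $\bs^{(t+1)}$, with the nonsmooth $\ell_1$ term left intact, so the two functions differ only by the additive constant $\Obj^{(t+1)}(\balpha^{(t+1)})$ plus higher-order terms that do not affect the gradient at $\bs^{(t+1)}$. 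The whole argument then reduces to making this stationarity match precise.

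First, I would record the optimality condition for $\bs^{(t+1)}$ coming from Equation~\ref{equ:UpdateStGeneral}. Since $\bs^{(t+1)} = \argmax_{\bs} \Obj^{(t+1)}(\bL_t\bs) - \mu\|\bs\|_1$, the chain rule through $\bL_t$ together with the subdifferential of the $\ell_1$ penalty gives $\bL_t^\top\bg^{(t+1)} \in \mu\,\partial\|\bs^{(t+1)}\|_1$, where I use that $\nabla_{\btheta}\Obj^{(t+1)}$ evaluated at $\bL_t\bs^{(t+1)} = \balpha^{(t+1)}$ is exactly $\bg^{(t+1)}$. Next I would compute the subdifferential of $\hat{\ell}(\bL_t, \cdot, \balpha^{(t+1)}, \bH^{(t+1)}, \bg^{(t+1)})$ at $\bs^{(t+1)}$. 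The crucial cancellation occurs in the quadratic term: its $\bs$-gradient is $-2\bL_t^\top\bH^{(t+1)}(\balpha^{(t+1)} - \bL_t\bs)$, which vanishes at $\bs = \bs^{(t+1)}$ precisely because $\balpha^{(t+1)} = \bL_t\bs^{(t+1)}$. The linear term contributes $\bL_t^\top\bg^{(t+1)}$ and the penalty contributes $-\mu\,\partial\|\bs^{(t+1)}\|_1$, so the stationarity condition for $\hat{\ell}$ collapses to exactly $\bL_t^\top\bg^{(t+1)} \in \mu\,\partial\|\bs^{(t+1)}\|_1$, identical to the condition from the first step. Hence $\bs^{(t+1)}$ is a critical point of $\hat{\ell}$.

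Finally, I would upgrade ``critical point'' to ``global maximizer.'' The only curvature of $\hat{\ell}$ in $\bs$ comes from the quadratic form with Hessian $2\bL_t^\top\bH^{(t+1)}\bL_t$; when $\bH^{(t+1)}$ is negative semidefinite (which holds for the NPG base learner, where $\bH = -\eta_{\btheta}^{-1}\bF$ with $\bF \succeq 0$), this form is negative semidefinite, and since the $\ell_1$ penalty is concave and the linear term affine, $\hat{\ell}(\bL_t, \cdot, \ldots)$ is concave. This is the key point: although $\Obj^{(t+1)}$, and hence $\ell$, need not be concave, the surrogate $\hat{\ell}$ is, so any critical point of it is a global maximizer, which yields the claimed equality of values; Assumption~\ref{asmp:uniqueness} additionally pins $\bs^{(t+1)}$ down as the \emph{unique} maximizer. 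The main obstacle is handling the nonsmoothness of the $\ell_1$ term rigorously: the argument must be phrased with subdifferentials throughout, and one must verify that the subdifferential of the composite $\Obj^{(t+1)}(\bL_t\bs)$ separates cleanly into $\bL_t^\top\bg^{(t+1)}$ plus the $\ell_1$ part so that the two stationarity conditions can be compared term by term. A secondary subtlety is that the needed concavity relies on global negative semidefiniteness of $\bH^{(t+1)}$ (as furnished by NPG), which is slightly stronger than the active-set condition literally stated in Assumption~\ref{asmp:uniqueness}.
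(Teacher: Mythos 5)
Your proposal is correct, and its core is the same as the paper's proof: both arguments match first-order optimality conditions of $\ell$ and $\hat{\ell}$ at $\bs^{(t+1)}$, with the key cancellation that the gradient of the quadratic term $\|\balpha^{(t+1)}-\bL_t\bs\|_{\bH^{(t+1)}}^2$ vanishes at $\bs=\bs^{(t+1)}$ precisely because $\balpha^{(t+1)}=\bL_t\bs^{(t+1)}$, so both conditions collapse to $\bL_t^\top\bg^{(t+1)}\in\mu\,\partial\|\bs^{(t+1)}\|_1$.

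The difference lies in the concluding step, and it is worth comparing. The paper passes from stationarity to maximality by directly invoking Assumption~\ref{asmp:uniqueness}, which stipulates that $\hat{\ell}$ has a unique maximizer of a prescribed form; this is terse, since stationarity plus uniqueness of the maximizer does not, by itself, rule out that $\bs^{(t+1)}$ is merely a saddle point or local minimizer of $\hat{\ell}$. You instead derive \emph{global} maximality from concavity of $\hat{\ell}$ in $\bs$, which is clean and self-contained but requires $\bH^{(t+1)}\preceq 0$ globally --- a hypothesis satisfied by the NPG base learner (where $\bH=-\frac{1}{\eta_{\btheta}}\bF$ with $\bF\succeq 0$) but, as you correctly flag, strictly stronger than the active-set curvature condition in Assumption~\ref{asmp:uniqueness}, and false for episodic REINFORCE, whose Hessian the paper itself notes is not negative definite. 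So your route buys rigor at the stationarity-to-maximality step (and your use of subdifferentials is more careful than the paper's $\sign(\cdot)$ notation), at the cost of proving the lemma under a narrower hypothesis than the one the paper's theory section nominally operates with; the paper's route covers all base learners admitted by its assumptions but leans on an assumption that essentially encodes the desired conclusion.
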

\vspace{-1.5em}
\begin{proof}[Proof sketch]
    The fact that the single-task objective $\hat{\ell}$ is a second-order approximation at $\bs^{(t+1)}$ of $\ell$, along with the fact that  $\bs^{(t+1)}$ is a maximizer of $\ell$, implies that $\bs^{(t+1)}$ is also a maximizer of $\hat{\ell}$.
\end{proof}

\vspace{-1em}
\begin{proposition}
\label{prop:Convergence}
    \begin{multicols}{2}
    \begin{enumerate}
    \itemsep-0.5em
        \addtolength{\itemindent}{-2em}
        \item \label{part:firstPropositionConvergence} $\hat{g}_{t}(\bL_{t})$ converges a.s.
        \addtolength{\itemindent}{6.4em}
        \item \mbox{\label{part:secondPropositionConvergence}$g_{t}(\bL_{t}) -\hat{g}_{t}(\bL_{t})$ converges a.s. to 0}
        \addtolength{\itemindent}{-2.8em}
        \item \label{part:thirdPropositionConvergence} {$g_{t}(\bL_{t}) - \hat{g}(\bL_{t})$ converges a.s. to 0}
        \item \label{part:fourthPropositionConvergence} $g(\bL_{t})$ converges a.s.
        \addtolength{\itemindent}{-1.6em}
    \end{enumerate}
    \end{multicols}
\end{proposition}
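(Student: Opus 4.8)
The plan is to follow the online-dictionary-learning convergence template of Mairal as adapted by \citet{ruvolo2013ella}, establishing the four parts in the order \ref{part:firstPropositionConvergence}, \ref{part:thirdPropositionConvergence}, \ref{part:secondPropositionConvergence}, \ref{part:fourthPropositionConvergence} and chaining them at the end. Throughout, let $\mathcal{F}_t$ denote the filtration generated by the first $t$ tasks' tuples $\xi^{(\that)}=(\alphathat,\Hthat,\gthat)$ together with the learned $\sthat$, and recall from the proof of Proposition~\ref{prop:Stability} that the entries of $\bL_t$, $\sthat$, and $\alphathat$ are uniformly bounded; consequently $\hat{g}_{t}(\bL_t)$, $g_{t}(\bL_t)$, and $g(\bL_t)$ lie in a fixed compact interval, and $\hat\ell(\bL,\bs,\xi)$ is uniformly bounded and Lipschitz in $\bL$ over the relevant compact domain. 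I write $\hat{g}(\bL)=g(\bL)-\lambda\|\bL\|_{\mathsf F}^2$ for the regularized expected objective of part~\ref{part:thirdPropositionConvergence}.

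\textbf{Part~\ref{part:firstPropositionConvergence} (quasi-martingale).} Set $u_t=\hat{g}_{t}(\bL_t)$ and decompose $u_{t+1}-u_t=[\hat{g}_{t+1}(\bL_{t+1})-\hat{g}_{t+1}(\bL_t)]+[\hat{g}_{t+1}(\bL_t)-\hat{g}_{t}(\bL_t)]$. The first bracket is $\geq0$ since $\bL_{t+1}$ maximizes $\hat{g}_{t+1}$. For the second, the incremental-average identity gives
\[
\hat{g}_{t+1}(\bL_t)-\hat{g}_{t}(\bL_t)=\tfrac{1}{t+1}\big[\hat\ell(\bL_t,\bs^{(t+1)},\xi^{(t+1)})-(\hat{g}_{t}(\bL_t)+\lambda\|\bL_t\|_{\mathsf F}^2)\big].
\]
Here Lemma~\ref{lma:EqualityOfMaximizers} is the crucial step: it identifies $\bs^{(t+1)}$ as a maximizer of $\hat\ell(\bL_t,\cdot,\xi^{(t+1)})$, so the bracketed loss equals $\max_{\bs}\hat\ell(\bL_t,\bs,\xi^{(t+1)})$, whose conditional expectation is $g(\bL_t)$. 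Hence $\mathbb{E}[u_{t+1}-u_t\mid\mathcal{F}_t]\geq\tfrac{1}{t+1}\big(\hat{g}(\bL_t)-\hat{g}_{t}(\bL_t)\big)$, so the negative part of the conditional drift is at most $\tfrac{1}{t+1}\|\hat{g}-\hat{g}_{t}\|_\infty$. Uniform convergence of the empirical process yields $\mathbb{E}\|\hat{g}-\hat{g}_{t}\|_\infty=O(t^{-1/2})$, making $\sum_t\mathbb{E}[(\text{drift})_-]$ summable; Fisk's quasi-martingale theorem applied to the bounded process $-u_t$ gives a.s.\ convergence of $u_t=\hat{g}_{t}(\bL_t)$, and as a byproduct the a.s.\ summability of $\sum_t\tfrac{1}{t+1}\big(g_{t}(\bL_t)-\hat{g}_{t}(\bL_t)\big)$ (using the part~\ref{part:thirdPropositionConvergence} rate to pass from $\hat g$ to $g_t$ inside the sum).

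\textbf{Parts~\ref{part:thirdPropositionConvergence} and \ref{part:secondPropositionConvergence}.} Part~\ref{part:thirdPropositionConvergence} is a uniform strong law: $g_{t}(\bL)-\hat{g}(\bL)=\tfrac1t\sum_{\that}\max_{\bs}\hat\ell(\bL,\bs,\xi^{(\that)})-g(\bL)$ is the empirical-minus-expected deviation of a uniformly bounded, equi-Lipschitz family indexed by the compact set of admissible $\bL$, so a uniform SLLN forces $\sup_{\bL}|g_{t}(\bL)-\hat{g}(\bL)|\to0$ a.s., and evaluating at $\bL_t$ gives the claim. For part~\ref{part:secondPropositionConvergence} set $a_t=g_{t}(\bL_t)-\hat{g}_{t}(\bL_t)\geq0$ (the maximum over codes dominates the value at the fixed $\sthat$). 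The byproduct of part~\ref{part:firstPropositionConvergence} gives $\sum_t a_t/(t+1)<\infty$ a.s.; separately, using $\bL_t-\bL_{t-1}=O(1/t)$ from Proposition~\ref{prop:Stability}, the Lipschitzness of $\hat\ell$ in $\bL$, and the fact that appending one summand perturbs an average by $O(1/t)$, I obtain the bounded-variation estimate $|a_{t+1}-a_t|=O(1/t)$. The standard lemma that a nonnegative sequence with $\sum_t a_t/t<\infty$ and $|a_{t+1}-a_t|=O(1/t)$ must vanish then yields $a_t\to0$.

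\textbf{Part~\ref{part:fourthPropositionConvergence} and chaining.} Subtracting part~\ref{part:secondPropositionConvergence} from part~\ref{part:thirdPropositionConvergence} gives $\hat{g}(\bL_t)-\hat{g}_{t}(\bL_t)\to0$ a.s.; combined with part~\ref{part:firstPropositionConvergence} this shows the regularized expected objective $\hat{g}(\bL_t)$ (equivalently $g(\bL_t)$) converges a.s.

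\textbf{Main obstacle.} Two steps depart from the i.i.d.\ argument of \citet{ruvolo2013ella}. First, Assumption~\ref{asmp:mixing} supplies only a $\phi$-mixing (not independent) sequence $\{\alphat\}$, so the identity $\mathbb{E}[\max_{\bs}\hat\ell(\bL_t,\bs,\xi^{(t+1)})\mid\mathcal{F}_t]=g(\bL_t)$ holds only up to an error governed by the mixing coefficients, and both this step and the uniform SLLN in part~\ref{part:thirdPropositionConvergence} must be replaced by their $\phi$-mixing analogues; summability of the mixing coefficients is what keeps the quasi-martingale bound finite. Second, and more delicate, is the bounded-variation estimate $|a_{t+1}-a_t|=O(1/t)$: controlling how the maximizer $\argmax_{\bs}\hat\ell(\bL_t,\bs,\xi)$ moves as $\bL_t$ is perturbed requires Assumption~\ref{asmp:uniqueness} (a unique maximizer with curvature bounded above by $-\kappa$), and I expect this to be the technical crux of the proof.
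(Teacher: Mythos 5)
Your proposal follows essentially the same route as the paper's proof: the same quasi-martingale process $u_t=\hat{g}_t(\bL_t)$ decomposed via Lemma~\ref{lma:EqualityOfMaximizers}, the same Donsker-type $O(t^{-1/2})$ bound on the negative conditional variations leading to Fisk's theorem and the summability byproduct, the same nonnegative-sequence lemma (with Proposition~\ref{prop:Stability}) for part~\ref{part:secondPropositionConvergence}, and a Glivenko-Cantelli/uniform-SLLN argument plus chaining for parts~\ref{part:thirdPropositionConvergence} and~\ref{part:fourthPropositionConvergence}; your explicit bookkeeping of the $\lambda\|\bL\|_{\mathsf{F}}^2$ term and your flagging of the $\phi$-mixing subtlety in the conditional-expectation step are refinements of, not departures from, the paper's argument. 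One slip to correct: the empirical-process rate applies to $\|\hat{g}-g_t\|_\infty$ (expected maximum versus empirical average of maxima), not to $\|\hat{g}-\hat{g}_t\|_\infty$, since $\hat{g}_t$ evaluates at the fixed historical $\st$'s and its gap to $g_t$ is precisely what part~\ref{part:secondPropositionConvergence} must establish; your subsequent split using $g_t-\hat{g}_t\geq 0$ and the ``part 3 rate'' shows you intended exactly this, and it is also how the paper argues.
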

\vspace{-2em}
\begin{proof}[Proof sketch]
    First, using Lemma~\ref{lma:EqualityOfMaximizers}, we show that the sum of negative variations of the stochastic process $u_{t}=\hat{g}_{t}(\Lt)$ is bounded. Given this result, we show that $u_{t}$ is a quasi-martingale that converges almost surely (Part~\ref{part:firstPropositionConvergence}). This fact, along with a simple lemma of positive sequences, allows us to prove Part~\ref{part:secondPropositionConvergence}. The final two parts can be shown due to the equivalence of $g$ and $g_{t}$ as $t\to\infty$.
\end{proof}

\begin{proposition}
\label{prop:StationaryPoint}
    The distance between $\Lt$ and the set of all stationary points of $g$ converges a.s. to 0.
\end{proposition}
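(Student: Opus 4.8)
The plan is to reduce the statement to showing that $\nabla g(\Lt)\to 0$ almost surely, and then to conclude by a compactness argument on the stationary set. Since Equation~\ref{equ:UpdateL} defines $\Lt=\argmax_{\bL}\hat{g}_{t}(\bL)$ as the unconstrained maximizer of a surrogate that is quadratic (hence smooth) in $\bL$, the first-order condition $\nabla\hat{g}_{t}(\Lt)=0$ holds exactly. I would then transfer optimality across the three objectives in two links, $\hat{g}_{t}\rightsquigarrow g_{t}$ and $g_{t}\rightsquigarrow g$, establishing $\nabla g_{t}(\Lt)\to 0$ and then $\nabla g(\Lt)\to 0$. Throughout I rely on the fact, already established in the proof of Proposition~\ref{prop:Stability}, that Assumptions~\ref{asmp:iid} and \ref{asmp:magnitude} together with the regularizers confine $\Lt$, $\st$, and $\alphat$ to a fixed compact set $\mathcal{C}$, so that the $\Hthat$, $\gthat$, and $\sthat$ are uniformly bounded.

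For the first link, consider the gap $b_{t}(\bL)=g_{t}(\bL)-\hat{g}_{t}(\bL)$. Because $g_{t}$ maximizes each inner term over $\sthat$ while $\hat{g}_{t}$ evaluates it at the fixed code, $b_{t}\ge 0$ everywhere, and by Part~\ref{part:secondPropositionConvergence} of Proposition~\ref{prop:Convergence} we have $b_{t}(\Lt)\to 0$ a.s. Both $g_{t}$ and $\hat{g}_{t}$ have gradients that are Lipschitz on $\mathcal{C}$ with a constant $L$ independent of $t$, since the bounded $\Hthat$ control the curvature. I would then invoke the elementary fact that any nonnegative $C^{1}$ function $f$ with $L$-Lipschitz gradient satisfies $\|\nabla f(x)\|^{2}\le 2L f(x)$ (apply the descent lemma at $x-\tfrac{1}{L}\nabla f(x)$ and use $f\ge 0$). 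Taking $f=b_{t}$ at $x=\Lt$ gives $\|\nabla b_{t}(\Lt)\|^{2}\le 2L\,b_{t}(\Lt)\to 0$, and since $\nabla\hat{g}_{t}(\Lt)=0$ we obtain $\nabla g_{t}(\Lt)=\nabla b_{t}(\Lt)\to 0$ a.s. This is exactly where Lemma~\ref{lma:EqualityOfMaximizers} pays off: it guarantees that the surrogate $\hat{g}_{t}$ and the re-optimized objective $g_{t}$ agree to first order at the iterates, so that controlling their scalar gap controls the gradient mismatch even though $\st$ and $\bL$ optimize different objectives.

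For the second link, I would show $\sup_{\bL\in\mathcal{C}}\|\nabla g_{t}(\bL)-\nabla g(\bL)\|\to 0$ a.s. Under Assumption~\ref{asmp:uniqueness}, the inner maximizer $\bs^{\star}(\bL)$ is unique with a stable active set, so by Danskin's theorem both $g_{t}$ and $g$ are continuously differentiable with $\nabla g_{t}(\bL)=-2\lambda\bL+\frac{1}{t}\sum_{\that=1}^{t}\nabla_{\bL}\hat{\ell}\big(\bL,\bs^{\star}(\bL),\alphathat,\Hthat,\gthat\big)$ and $\nabla g(\bL)=-2\lambda\bL+\Expectation\big[\nabla_{\bL}\hat{\ell}(\bL,\bs^{\star}(\bL),\alphat,\Ht,\gt)\big]$, where the nonsmooth $\ell_{1}$ term is constant in $\bL$ and drops out. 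The integrand is continuous in $\bL$ and in the data, which has compact support by Assumption~\ref{asmp:iid}, so a parametrized strong law of large numbers (uniform Glivenko--Cantelli) yields the uniform gradient convergence. Combining the two links, $\|\nabla g(\Lt)\|\le\|\nabla g(\Lt)-\nabla g_{t}(\Lt)\|+\|\nabla g_{t}(\Lt)\|\to 0$ a.s.

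Finally, I would close with a standard topological argument. The stationary set $S=\{\bL:\nabla g(\bL)=0\}$ is closed, $\nabla g$ is continuous, and $\{\Lt\}\subset\mathcal{C}$ is bounded. If $\mathrm{dist}(\Lt,S)\not\to 0$, some subsequence stays $\varepsilon$-away from $S$; by compactness it has a limit $\bL_{\infty}\notin S$, yet continuity forces $\nabla g(\bL_{\infty})=\lim\nabla g(\Lt)=0$, a contradiction. Hence $\mathrm{dist}(\Lt,S)\to 0$ a.s. I expect the main obstacle to be the second link: verifying the $C^{1}$ structure of $g_{t}$ and $g$ through the envelope theorem and the uniform gradient law under Assumption~\ref{asmp:uniqueness}. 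This is precisely the step most affected by our departures from PG-ELLA---the non-optimality of the $\alphat$'s and the extra linear term $\gthat^{\top}(\bL\sthat-\alphathat)$ in $\hat{\ell}$---since these change the closed form of $\bs^{\star}(\bL)$ and hence the Lipschitz-stability of its active set that the envelope argument requires.
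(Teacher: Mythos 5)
Your proof is correct, but it takes a genuinely different route from the paper's. The paper argues at the level of \emph{function values}: using $g_{t}\geq\hat{g}_{t}$ everywhere, the value convergence $g_{t}(\Lt)-\hat{g}_{t}(\Lt)\to 0$ from Proposition~\ref{prop:Convergence}, and a directional first-order Taylor expansion at $\Lt$ along an arbitrary direction $\bm{U}$ with step sizes $h_{t}\to 0$, it forces $\lim_{t\to\infty}\nabla g(\Lt)=\lim_{t\to\infty}\nabla\hat{g}_{t}(\Lt)=\bm{0}$ in one shot --- the classical ``touching functions must share a tangent'' argument. You instead split the comparison into two links: first the descent-lemma inequality $\|\nabla b_{t}(\Lt)\|^{2}\leq 2L\,b_{t}(\Lt)$ applied to the nonnegative gap $b_{t}=g_{t}-\hat{g}_{t}$, which is an elegant and more elementary way to convert the value gap into a gradient bound; and second a \emph{uniform convergence of gradients} $\sup_{\bL}\|\nabla g_{t}(\bL)-\nabla g(\bL)\|\to 0$. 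This second link is where your route is more expensive than the paper's: the paper only ever needs the value-level uniform law $\|g_{t}-g\|_{\infty}\to 0$, which it already established inside Proposition~\ref{prop:Convergence}, whereas you need a Glivenko--Cantelli property for the gradient class, and this must be verified not under independence but under Assumption~\ref{asmp:mixing} (the $\alphat$'s are only stationary and $\phi$-mixing); the Lipschitz parametrization of the envelope gradient from the paper's Lipschitz lemma (via the stability of $\beta$) does make this plausible with the same machinery, but it is an extra verification the paper never has to perform. Two smaller remarks: your attribution of the first link to Lemma~\ref{lma:EqualityOfMaximizers} is indirect --- that lemma enters only through Part~\ref{part:secondPropositionConvergence} of Proposition~\ref{prop:Convergence}, which is what you actually invoke; and your closing compactness argument (bounded iterates, closed stationary set, continuity of $\nabla g$) is a point in your favor, since the paper passes from $\nabla g(\Lt)\to\bm{0}$ to the distance statement without spelling it out. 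Finally, when applying the descent lemma, note that the probe point $\Lt-\frac{1}{L}\nabla b_{t}(\Lt)$ may exit the set where Lipschitzness was proven; since all gradients are uniformly bounded, it stays in a fixed compact enlargement where the same bounds hold, so this is a one-line fix rather than a gap.
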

\vspace{-1.5em}
\begin{proof}[Proof sketch]
    We use the fact that $\hat{g}_{t}$ and $g$ have Lipschitz gradients with constants independent of $t$. This fact, combined with the fact that $\hat{g}_{t}$ and $g$ converge almost surely completes the proof.
\end{proof}

\section{Experimental evaluation}
\label{sec:ExperimentalEvaluation}

We evaluated our method on a range of complex continuous control domains, showing a substantial increase in learning speed and a dramatic reduction in catastrophic forgetting. See Appendix~\appendixRef{sec:app_ExperimentalSetting} for additional details. Code and training videos are available at~\href{https://github.com/GRASP-ML/LPG-FTW}{\texttt{github.com/GRASP-ML/LPG-FTW}} and on the last author's website.

\paragraph{Baselines} We compared against  STL, which does not transfer knowledge across tasks, using NPG as described in Section~\ref{sec:NaturalPolicyGradient}. We then chose EWC~\citep{kirkpatrick2017overcoming} from the single-model family, which places a quadratic penalty for deviating from earlier tasks' parameters. 
Finally, we compared against PG-ELLA, described in Section~\ref{sec:ConnectionsToPGELLA}. 
All lifelong algorithms used NPG as the base learning method.

\paragraph{Evaluation procedure} 
We chose the hyper-parameters of NPG to maximize the performance of STL on a single task, and used those hyper-parameters for all agents. For EWC, we searched for the regularization parameter over five tasks on each domain. For \ouralgo{} and PG-ELLA, we fixed all regularization parameters to $10^{-5}$ and the number of columns in $\bL$ to $k\!\!=\!\!5$, unless otherwise noted. In \ouralgo{}, we  used the simplest setting for the update schedule of $\bL$, $M\!\!=\!\!N$. All experiments were repeated over five trials with different random seeds for parameter initialization and task ordering.

\subsection{Empirical evaluation on OpenAI Gym MuJoCo domains}
\label{sec:ExperimentalResultsSimple}

We first evaluated \ouralgo{} on simple MuJoCo environments from OpenAI Gym~\citep{todorov2012mujoco, brockman2016openai}. We selected the HalfCheetah, Hopper, and Walker-2D environments, and created two different evaluation domains for each: a \textit{gravity} domain, where each task corresponded to a random gravity value between $0.5g$ and $1.5g$, and a \textit{body-parts} domain, where the size and mass of each of four parts of the body (head, torso, thigh, and leg) was randomly set to a value between $0.5\times$ and $1.5\times$ its nominal value. These choices led to highly diverse tasks, as we show in Appendix~\appendixRef{sec:taskDiscrepancy}. We generated tasks using the {\tt gym-extensions}~\citep{henderson2017multitask} package, but modified it so each body part was scaled independently.

We created $T_{\mathsf{max}}=20$ tasks for HalfCheetah and Hopper domains, and $T_{\mathsf{max}}=50$ for Walker-2D domains. The agents were allowed to train on each task for a fixed number of iterations before moving on to the next. For these simple experiments, all agents used linear policies. For the Walker-2D \textit{body-parts} domain, we set the capacity of $\bL$ to $k=10$, since we found empirically that it required a higher capacity. The NPG hyper-parameters were tuned without body-parts or gravity modifications.

\begin{figure}[t!]
\captionsetup[subfigure]{aboveskip=0pt,belowskip=4pt}
    \begin{subfigure}[b]{0.35\textwidth}
        \includegraphics[height=2.0cm, trim={0.2cm 5.8cm 0.9cm 2.1cm}, clip]{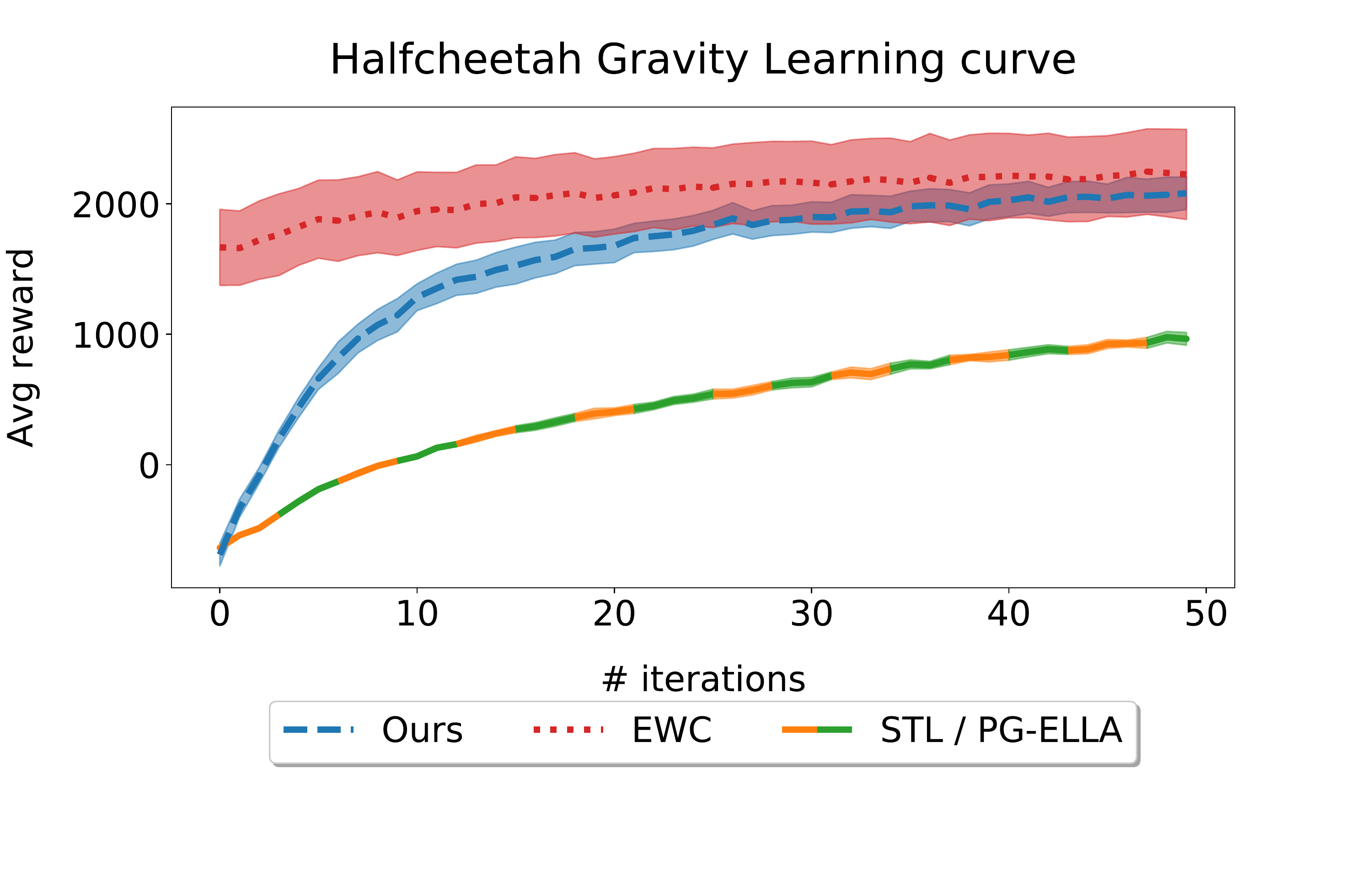}
        \caption{HalfCheetah \textit{gravity}}
    \end{subfigure}%
    \begin{subfigure}[b]{0.325\textwidth}
        \includegraphics[height=2.0cm, trim={1.6cm 5.8cm 0.9cm 2.1cm}, clip]{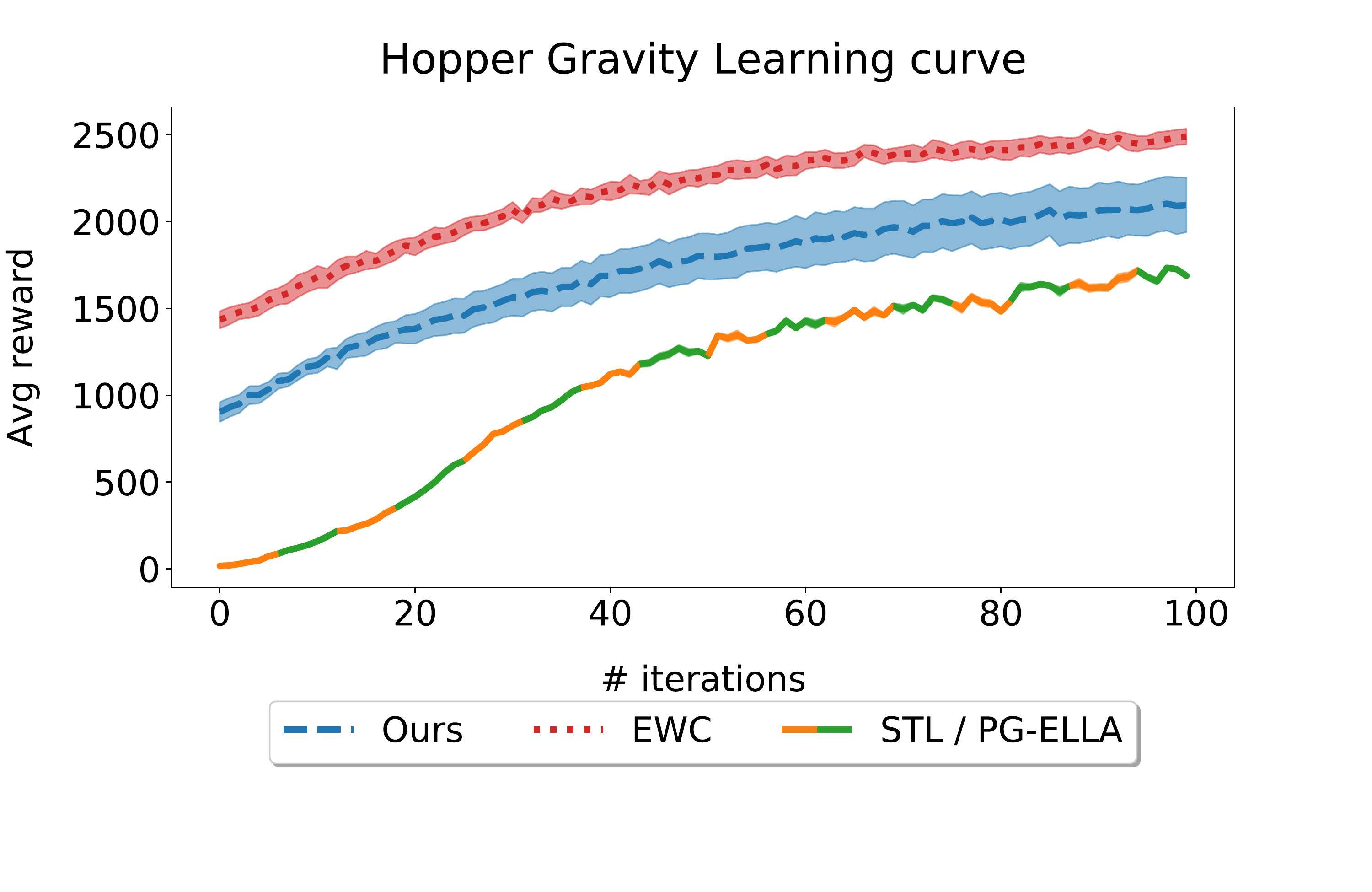}
         \caption{Hopper \textit{gravity}}
    \end{subfigure}%
    \begin{subfigure}[b]{0.325\textwidth}
        \includegraphics[height=2.0cm, trim={1.6cm 5.8cm 0.9cm 2.1cm}, clip]{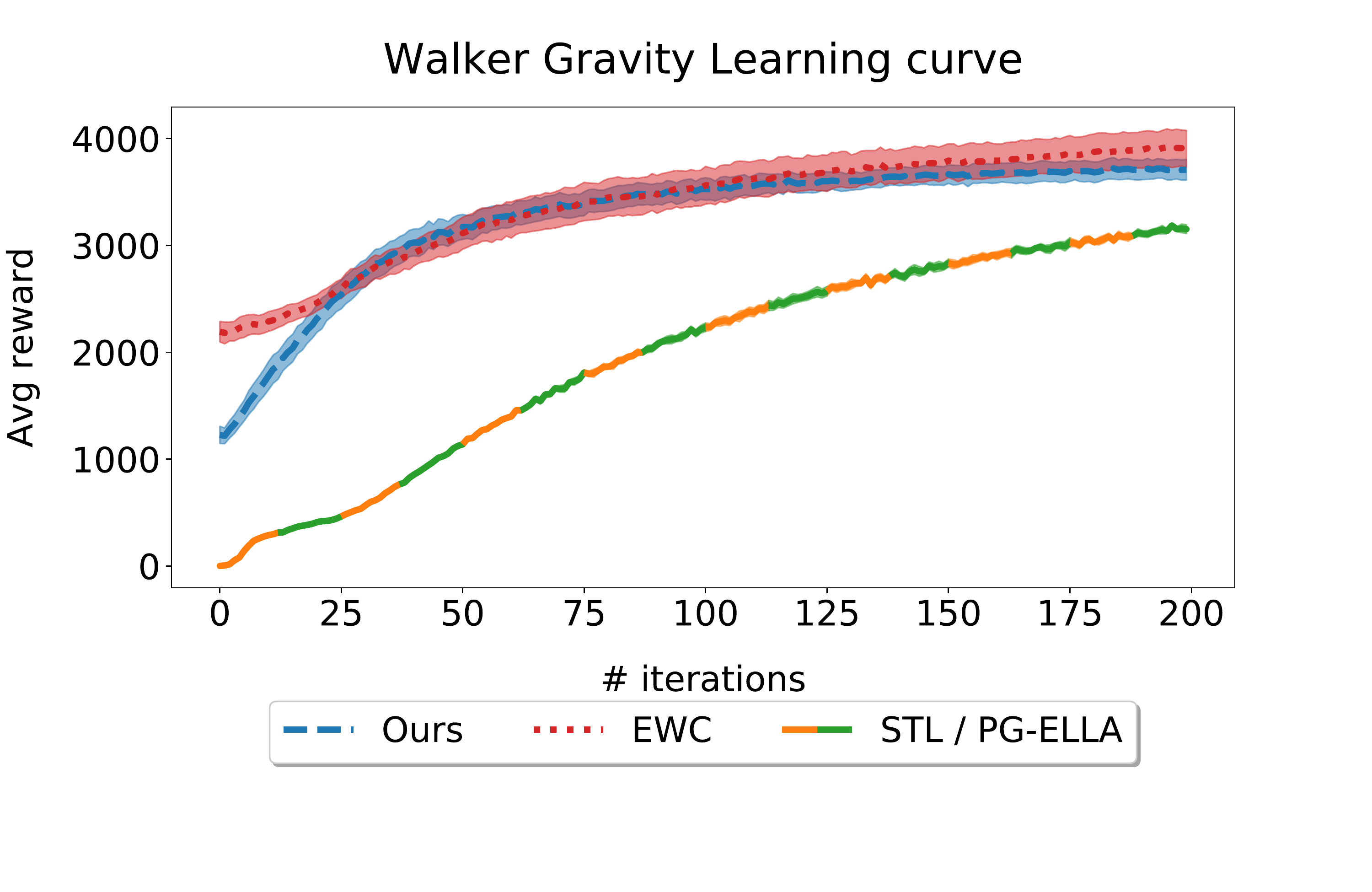}
         \caption{Walker-2D \textit{gravity}}
    \end{subfigure}\\
    \begin{subfigure}[b]{0.35\textwidth}
        \includegraphics[height=2.23cm, trim={0.05cm 4.3cm 0.9cm 2.1cm}, clip]{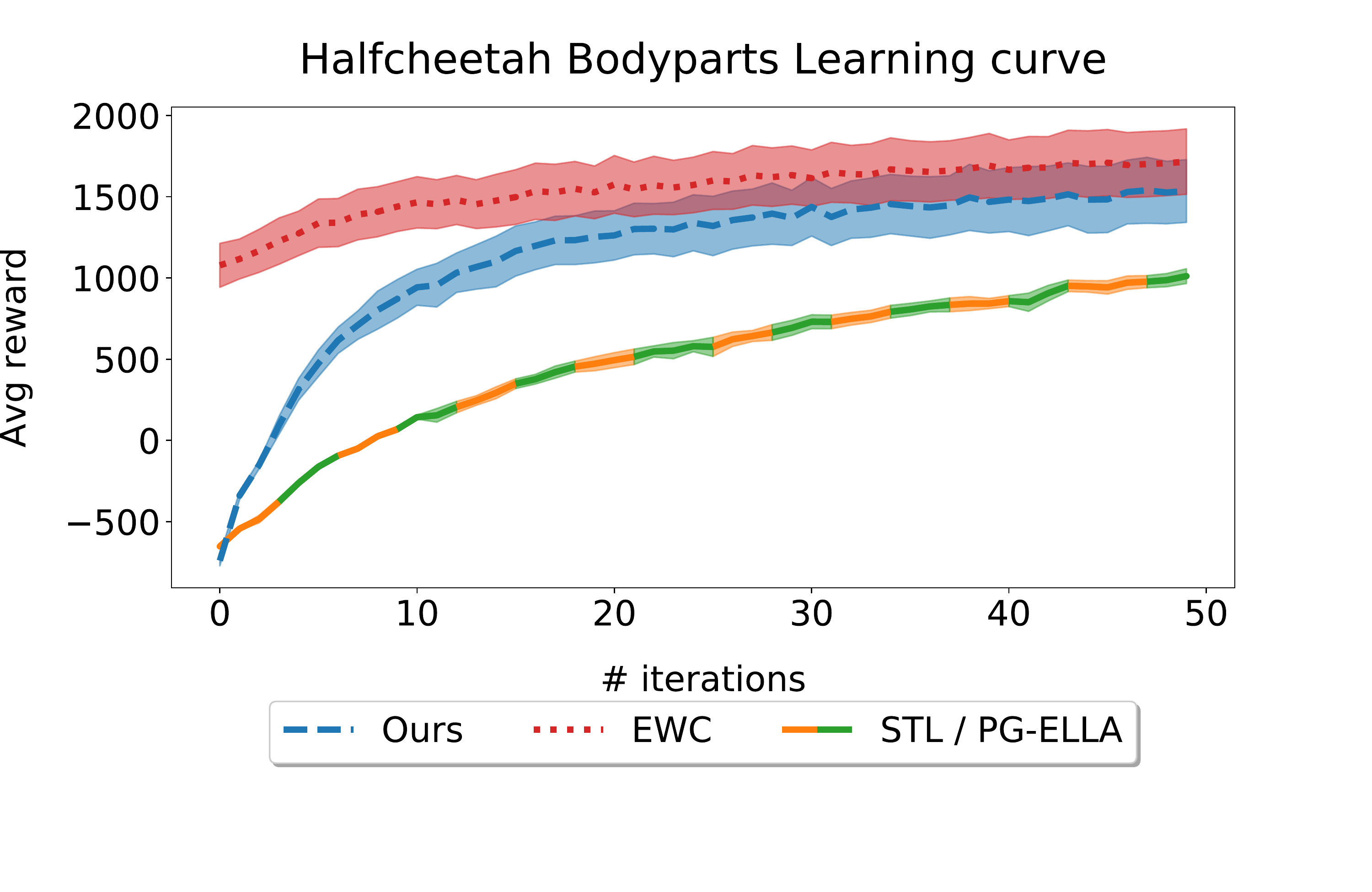}
        \caption{HalfCheetah \textit{body-parts}}
         \vspace{-0.5em}
    \end{subfigure}%
    \begin{subfigure}[b]{0.325\textwidth}
        \includegraphics[height=2.23cm, trim={1.6cm 4.3cm 0.9cm 2.1cm}, clip]{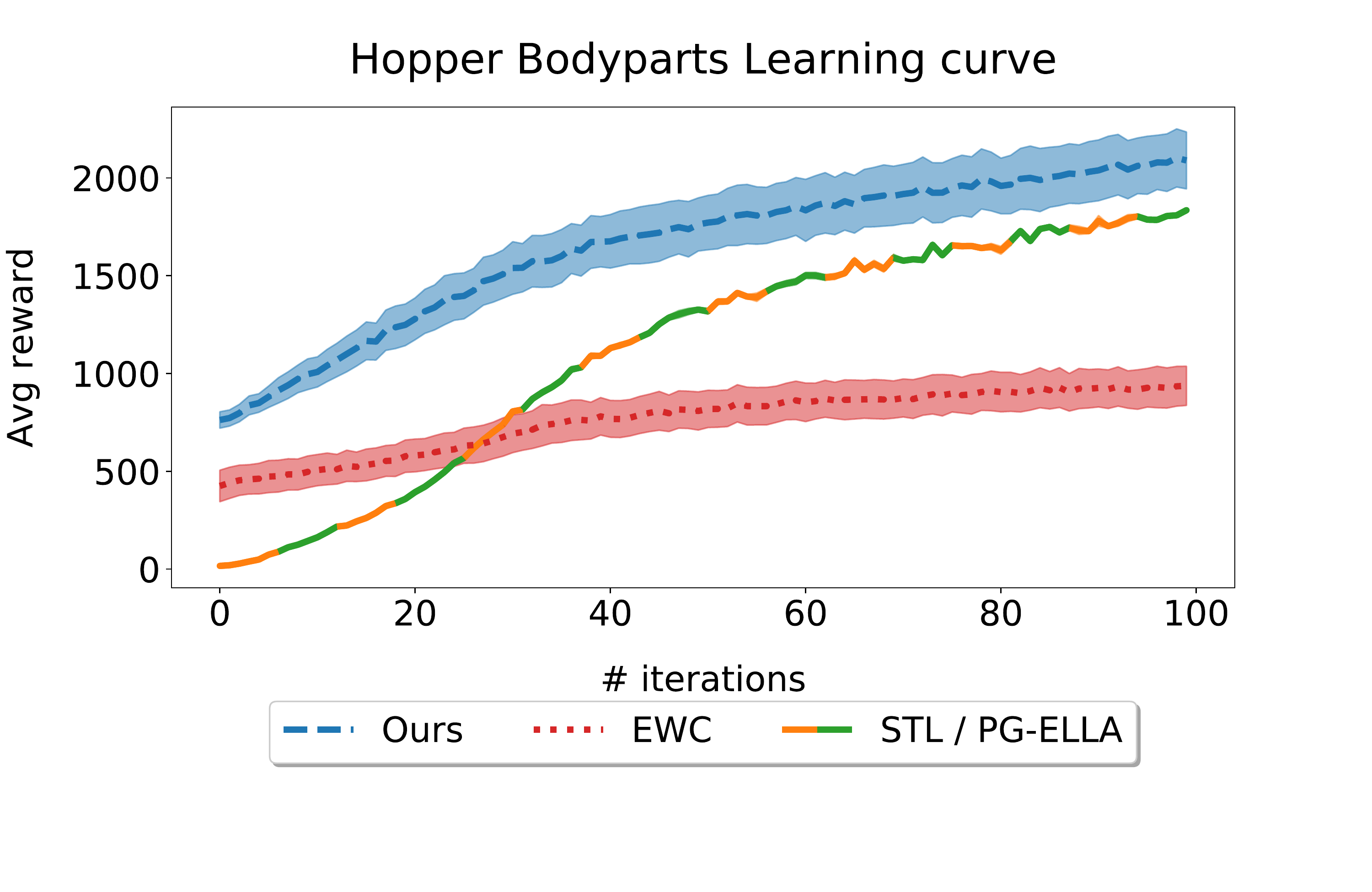}
         \caption{Hopper \textit{body-parts}}
         \vspace{-0.5em}
    \end{subfigure}%
    \begin{subfigure}[b]{0.325\textwidth}
        \includegraphics[height=2.23cm, trim={1.6cm 4.3cm 0.9cm 2.1cm}, clip]{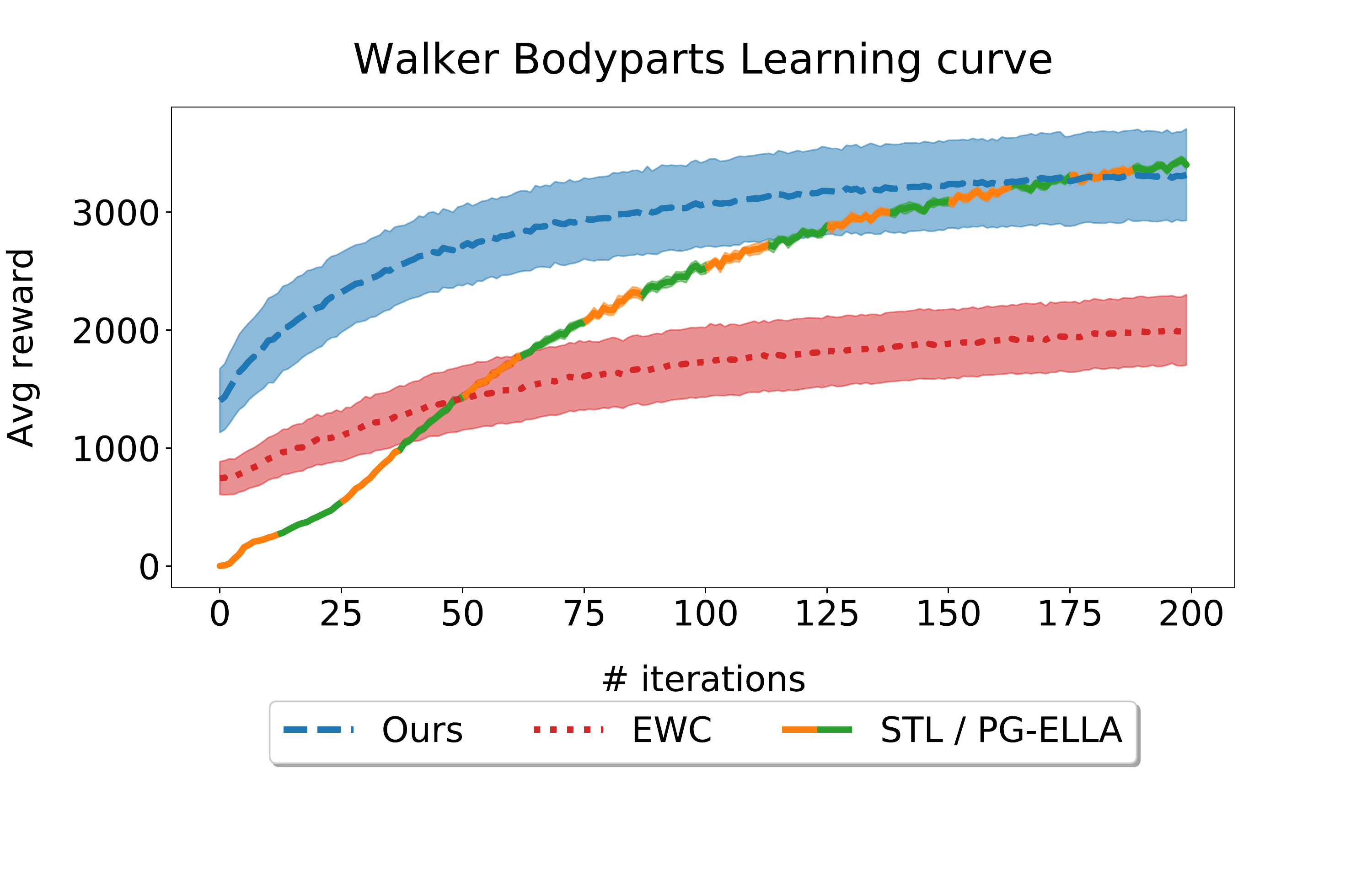}
         \caption{Walker-2D \textit{body-parts}}
         \vspace{-0.5em}
    \end{subfigure}\\
    \begin{subfigure}[b]{\textwidth}
        \centering
        \includegraphics[width=0.4\linewidth, trim={0.1cm 0.1cm 0.1cm 0.1cm}, clip]{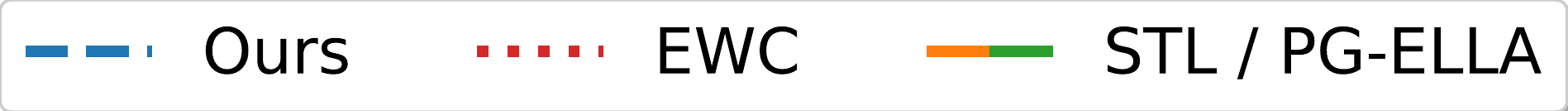}
    \end{subfigure}
    \caption{Average performance during training across all tasks for six MuJoCo domains. \ouralgo{} is consistently faster than STL and PG-ELLA (which by definition learn at the same pace) in achieving proficiency, and achieves better final performance in five domains and equivalent performance in the remaining one. EWC is faster and converges to higher performance than \ouralgo{} in some domains, but completely fails to learn in others. Shaded error bars denote standard error over five random task orderings and parameter initializations.}
    \label{fig:learningCurves}
\end{figure}
\begin{figure}[t!]
\captionsetup[subfigure]{aboveskip=0pt,belowskip=8pt}
    \begin{subfigure}[b]{0.35\textwidth}
        \includegraphics[height=2.4cm, trim={0.8cm 0.8cm 0.9cm 2.2cm}, clip]{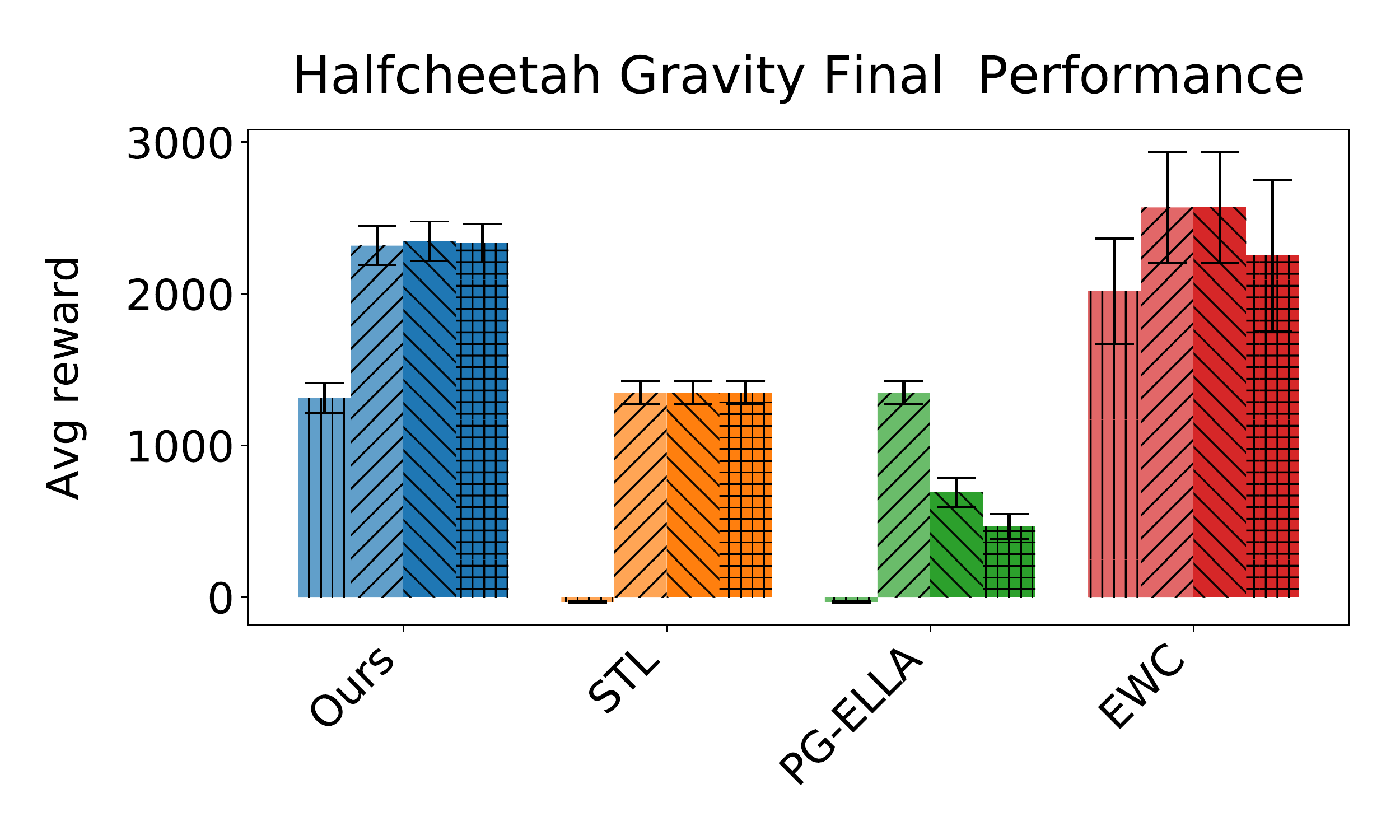}
        \caption{HalfCheetah \textit{gravity}}
    \end{subfigure}%
    \begin{subfigure}[b]{0.325\textwidth}
        \includegraphics[height=2.4cm, trim={2.3cm 0.8cm 0.9cm 2.2cm}, clip]{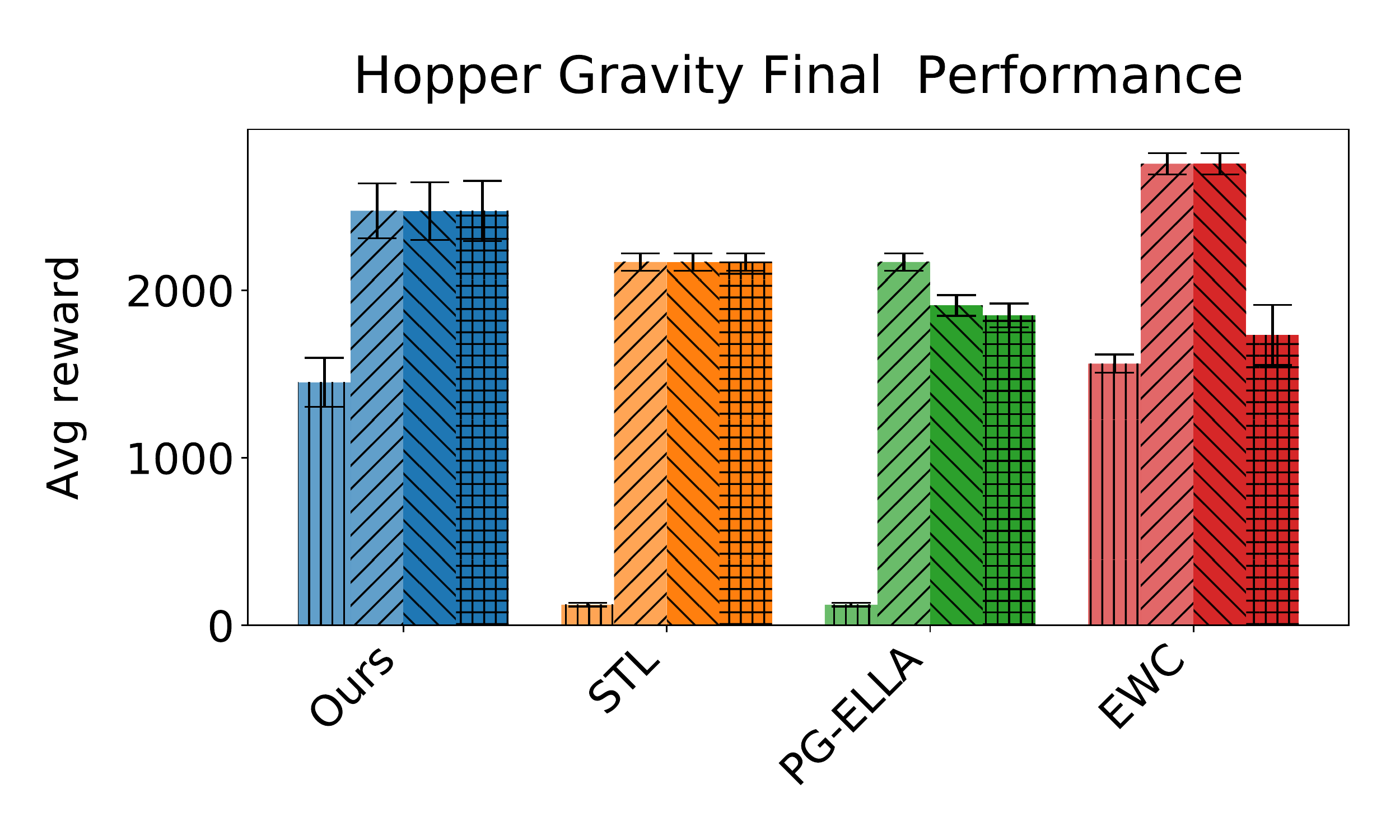}
         \caption{Hopper \textit{gravity}}
    \end{subfigure}%
    \begin{subfigure}[b]{0.325\textwidth}
        \includegraphics[height=2.4cm, trim={2.3cm 0.8cm 0.9cm 2.2cm}, clip]{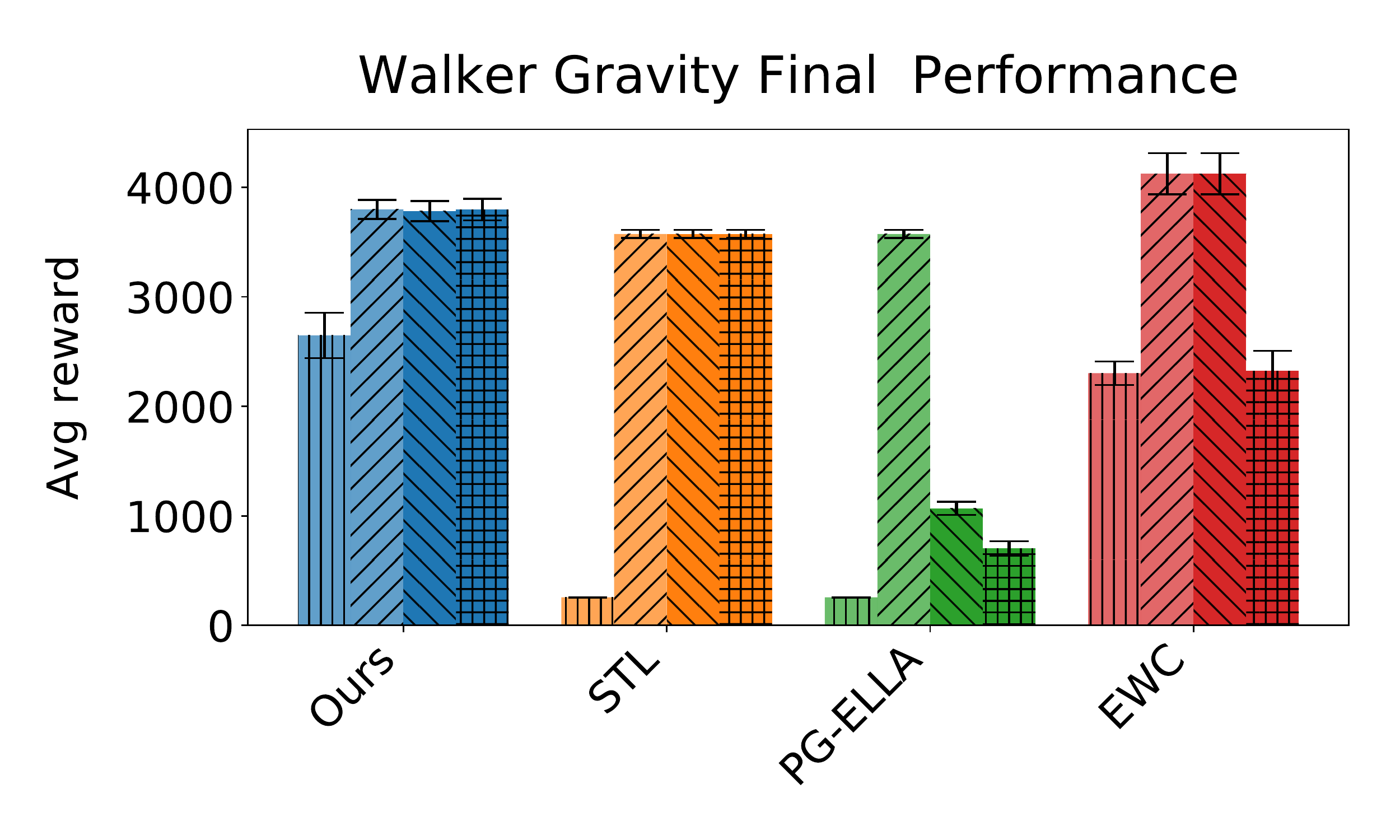}
         \caption{Walker-2D \textit{gravity}}
    \end{subfigure}\\
    \begin{subfigure}[b]{0.35\textwidth}
        \includegraphics[height=2.4cm, trim={0.8cm 0.8cm 0.9cm 2.2cm}, clip]{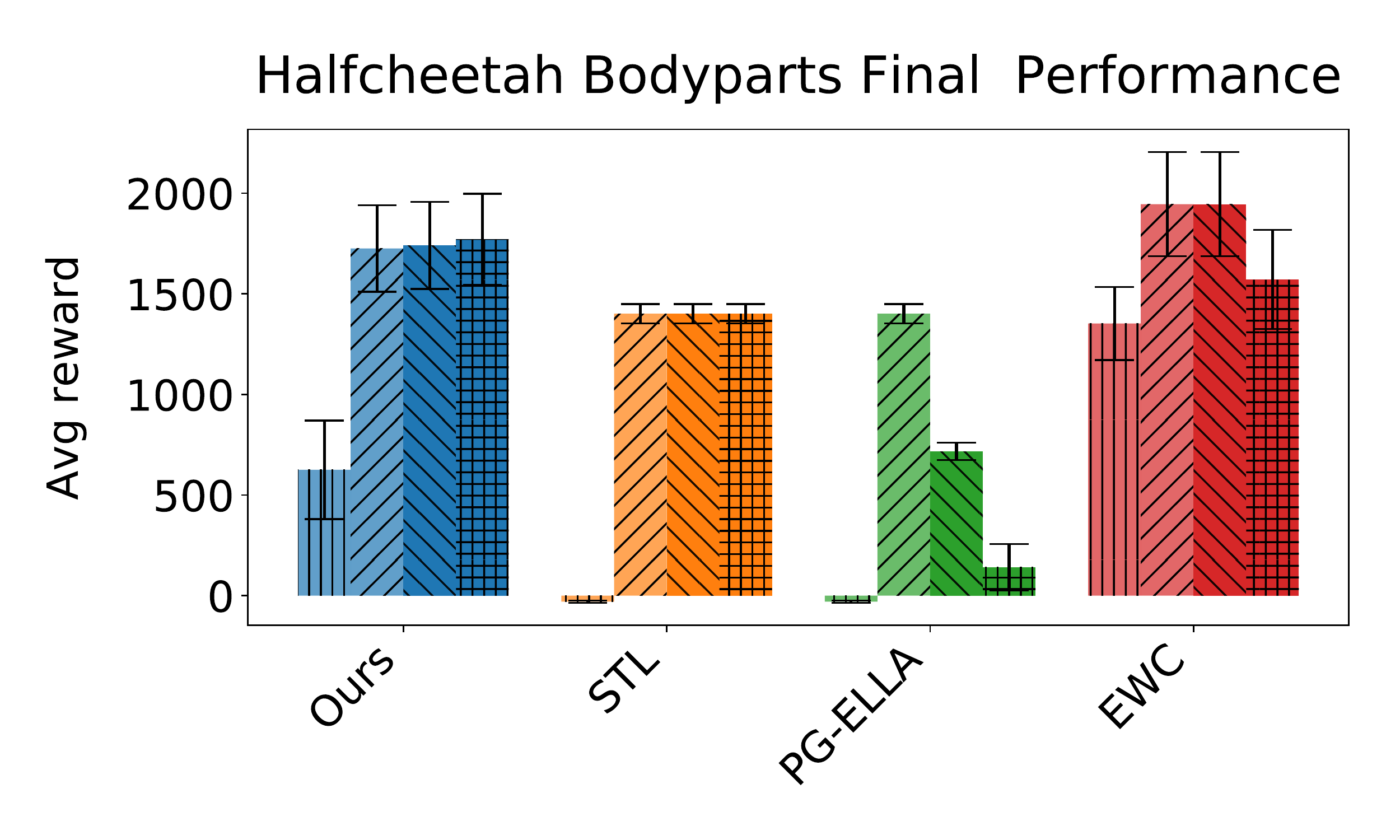}
        \caption{HalfCheetah \textit{body-parts}}
        \vspace{-0.5em}
    \end{subfigure}%
    \begin{subfigure}[b]{0.325\textwidth}
        \includegraphics[height=2.4cm, trim={2.3cm 0.8cm 0.9cm 2.2cm}, clip]{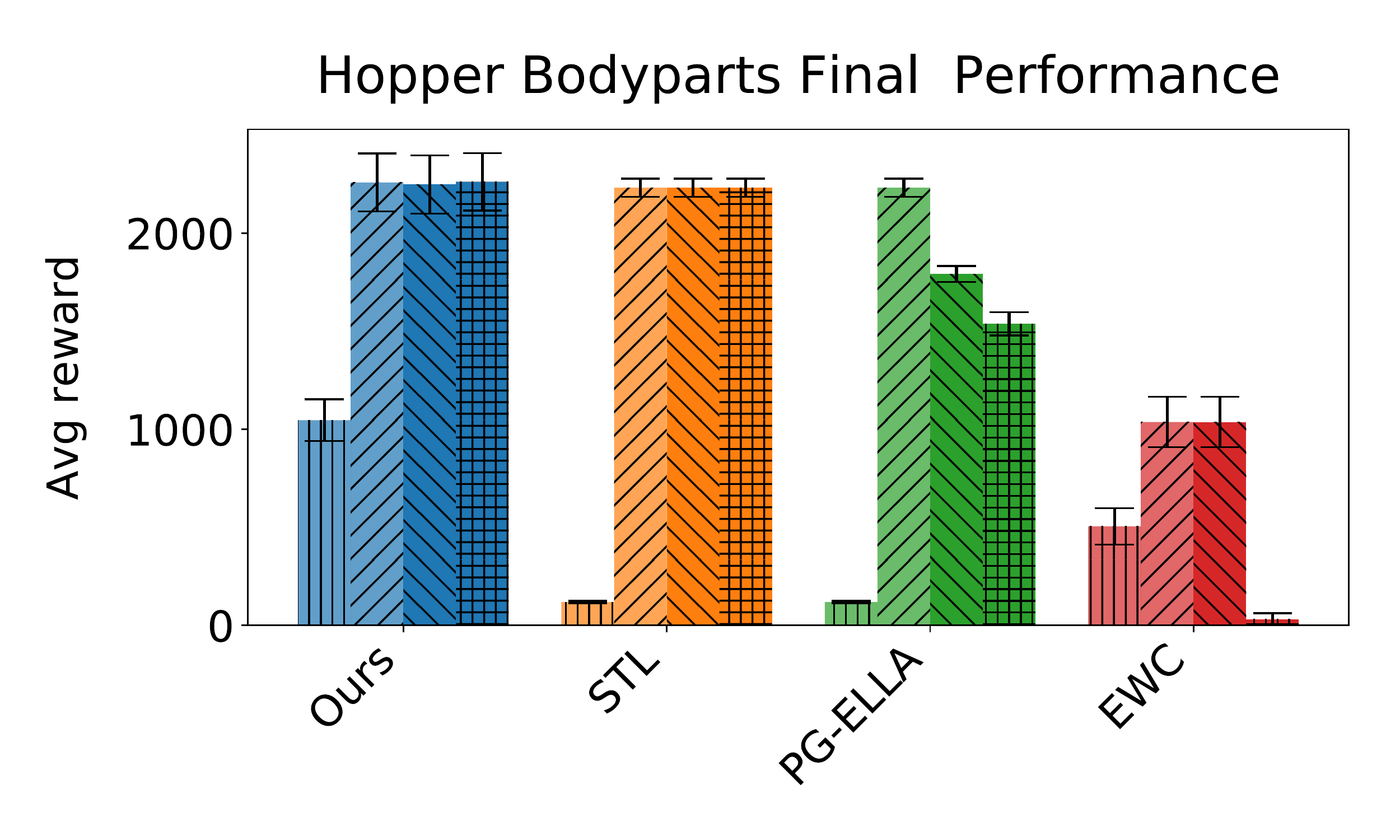}
         \caption{Hopper \textit{body-parts}}
        \vspace{-0.5em}
    \end{subfigure}%
    \begin{subfigure}[b]{0.325\textwidth}
        \includegraphics[height=2.4cm, trim={2.3cm 0.8cm 0.9cm 2.2cm}, clip]{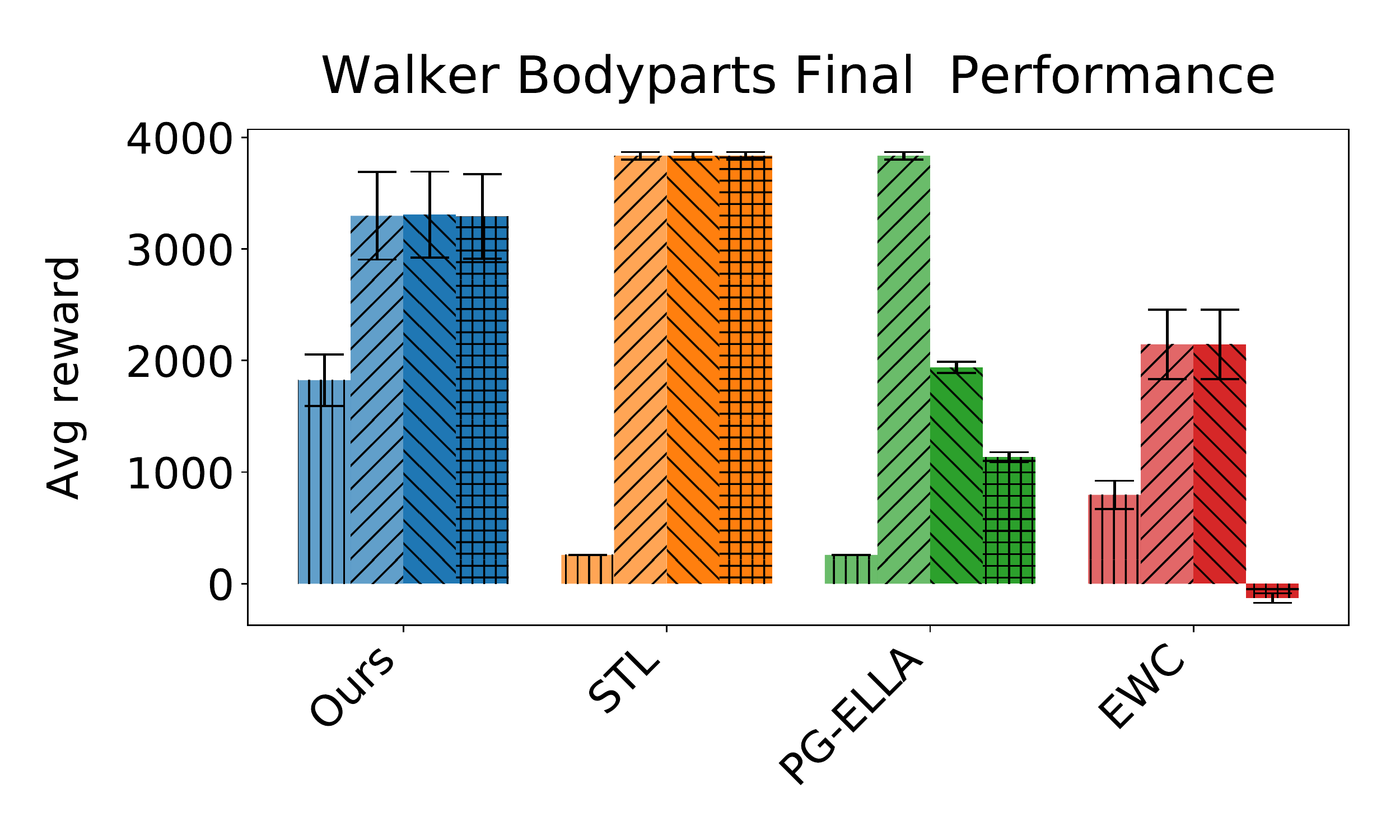}
         \caption{Walker-2D \textit{body-parts}}
        \vspace{-0.5em}
    \end{subfigure}\\
    \begin{subfigure}[b]{\textwidth}
        \centering
        \includegraphics[width=0.4\linewidth, trim={0.1cm 0.1cm 0.1cm 0.1cm}, clip]{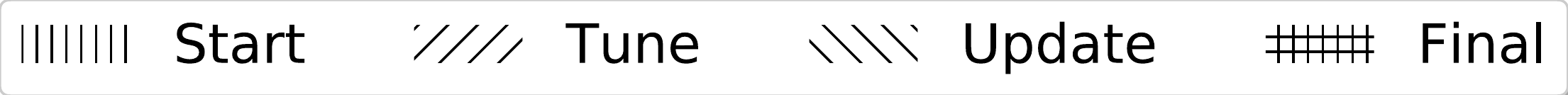}
    \end{subfigure}
    \caption{Average performance at the beginning of training (start), after all training iterations (tune, equivalent to the final point in Figure~\ref{fig:learningCurves}), after the update step for PG-ELLA and \ouralgo{} (update), and after all tasks have been trained (final). The update step in \ouralgo{} never hinders performance, and even after all tasks have been trained performance is maintained. PG-ELLA always performed worse than STL. EWC suffered from catastrophic forgetting in five  domains, in two resulting in degradation below initial performance. Error bars denote standard error over five random task orderings and parameter initializations.}
    \vspace{-1em}
    \label{fig:testPerformance}
\end{figure}

Figure~\ref{fig:learningCurves} shows the average performance over all tasks as a function of the NPG training iterations. \ouralgo{} consistently learned faster than STL, and obtained higher final performance on five out of the six domains. Learning the task-specific coefficients $\st$ directly via policy search increased the learning speed of \ouralgo{}, whereas PG-ELLA was limited to the learning speed of STL, as indicated by the shared learning curves. EWC was faster than \ouralgo{} in reaching high-performing policies in four domains, primarily due to the fact that EWC uses a single shared policy across all tasks, which enables it to have starting policies with high performance. However, EWC failed to even match the STL performance in two of the domains. We hypothesize that this was due to the fact that the tasks were highly varied (particularly in the \textit{body-parts} domains, since there were four different axes of variation), and the single shared policy was unable to perform well in all domains. Appendix~\appendixRef{sec:EWCAdditionalResults} shows an evaluation with various versions of EWC attempting to alleviate these issues.

Results in Figure~\ref{fig:learningCurves} consider only how fast the agent learns a new task using information from earlier tasks. 
PG-ELLA and \ouralgo{} then perform an update step (Equation~\ref{equ:UpdateL} for \ouralgo{}) where they incorporate knowledge from the current task into $\bL$. The third bar from the left per each algorithm in Figure~\ref{fig:testPerformance} shows the average performance after this step, revealing that \ouralgo{} maintained performance, whereas PG-ELLA's performance decreased. This is because \ouralgo{} ensures that the approximate objective is computed near points in the parameter space that the current basis $\bL$ can generate, by finding $\alphat$ via a search over the span of $\bL$. 
A critical component of lifelong learning algorithms is their ability to avoid catastrophic forgetting. To evaluate the capacity of \ouralgo{} to retain knowledge from earlier tasks, we evaluated the policies obtained from the knowledge base $\bL$ trained on all tasks, without modifying the $\st$'s. The rightmost bar in each algorithm in Figure~\ref{fig:testPerformance} shows the average final performance across all tasks. \ouralgo{} successfully retained knowledge of all tasks, showing no signs of catastrophic forgetting on any of the domains. The PG-ELLA baseline suffered from forgetting in all domains, and EWC in all but one of the domains. Moreover, the final performance of \ouralgo{} was the best among all baselines in all but one domain.

\begin{figure}[t!]
\centering
\captionsetup[subfigure]{aboveskip=0pt,belowskip=4pt}
    \begin{subfigure}[b]{0.43\textwidth}
    \centering
        \includegraphics[height=2.7cm, trim={2.1cm 4.3cm 3cm 2.2cm}, clip]{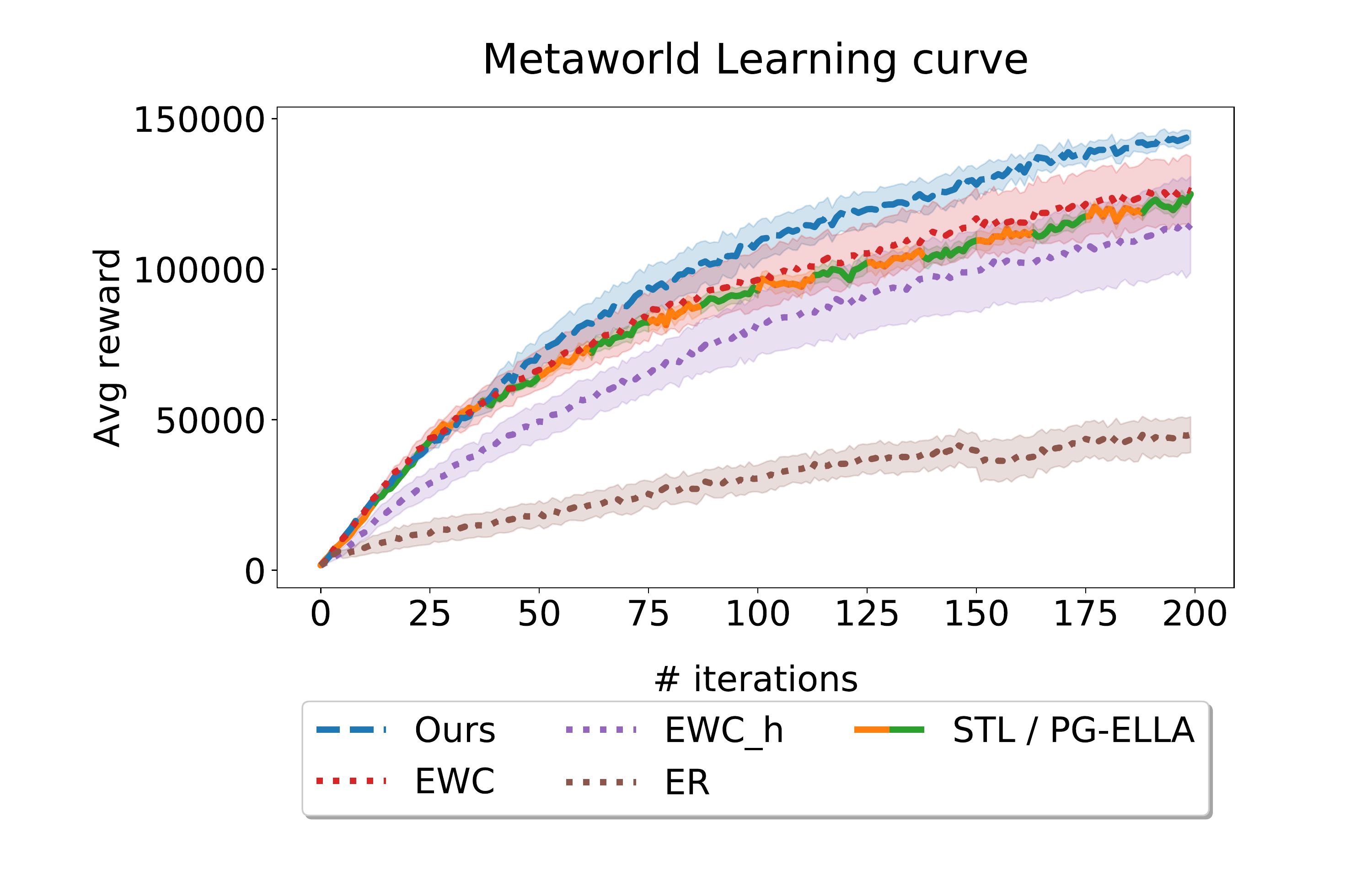}
    \end{subfigure}%
    \begin{subfigure}[b]{0.41\textwidth}
    \centering
        \includegraphics[height=2.7cm, trim={3cm 4.3cm 3cm 2.3cm}, clip]{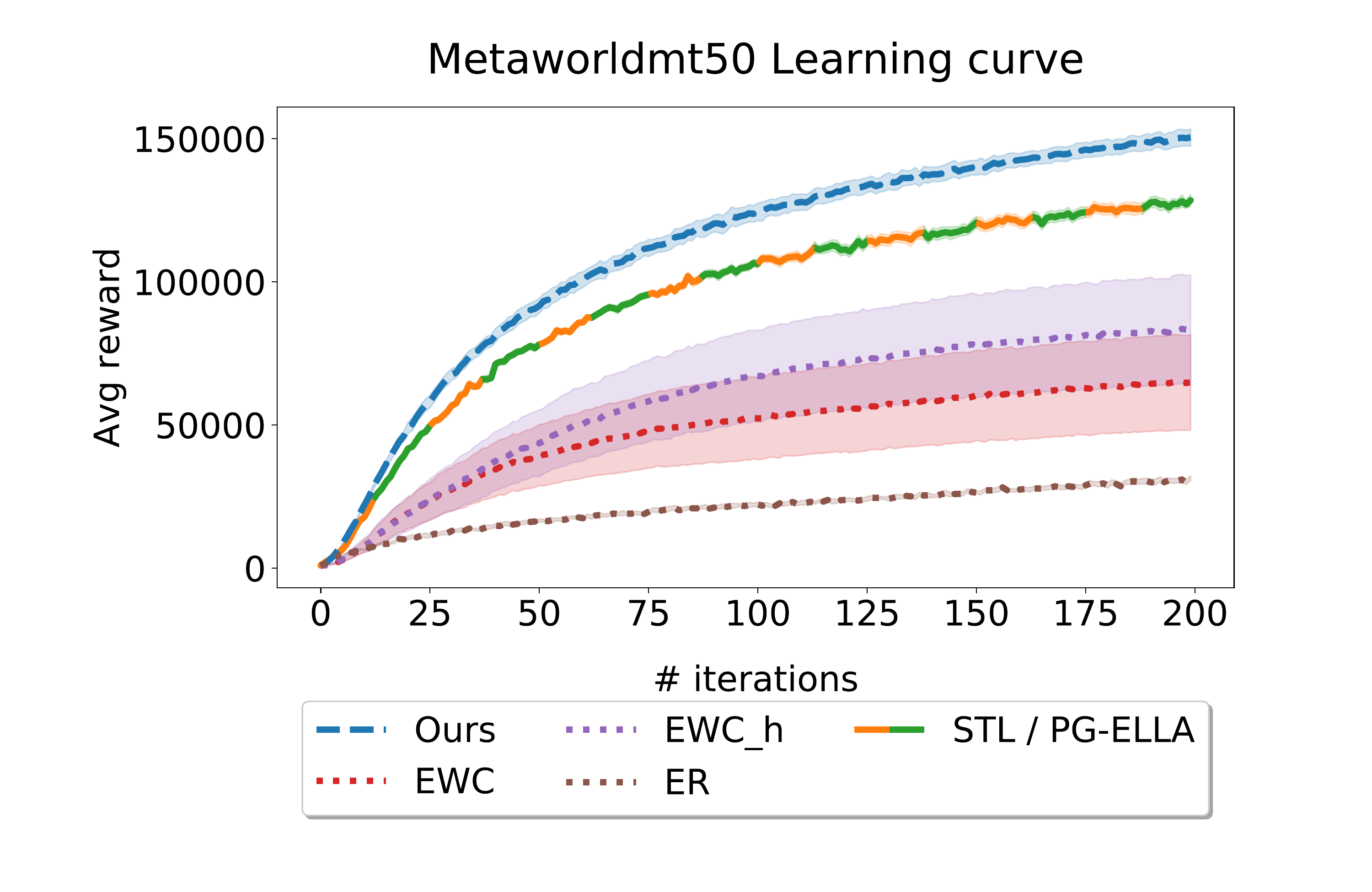}
    \end{subfigure}%
    \begin{subfigure}[b]{0.16\textwidth}
        \raisebox{3.5em}{\includegraphics[width=\linewidth, trim={0.1cm 0.1cm 0.1cm 0.1cm}, clip]{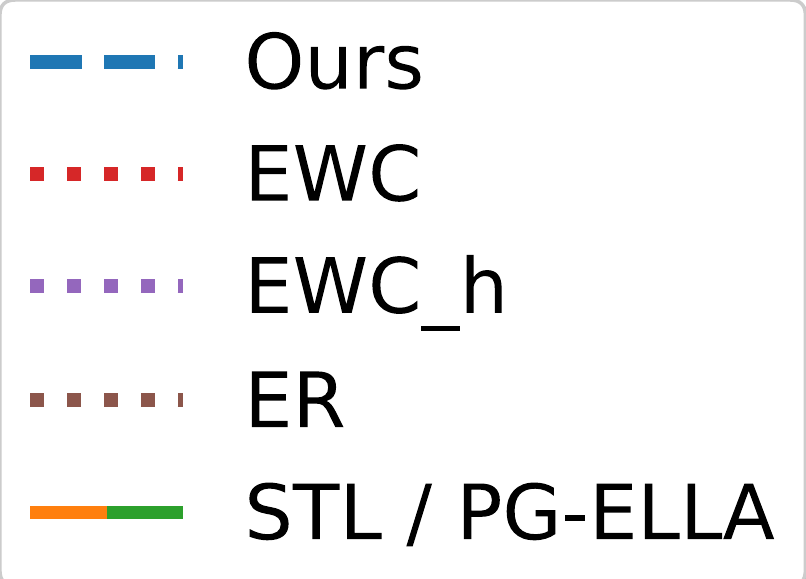}}
    \end{subfigure}
    \begin{subfigure}[b]{0.43\textwidth}
    \centering
        \includegraphics[height=2.9cm, trim={1.5cm 3.3cm 3cm 2.3cm}, clip]{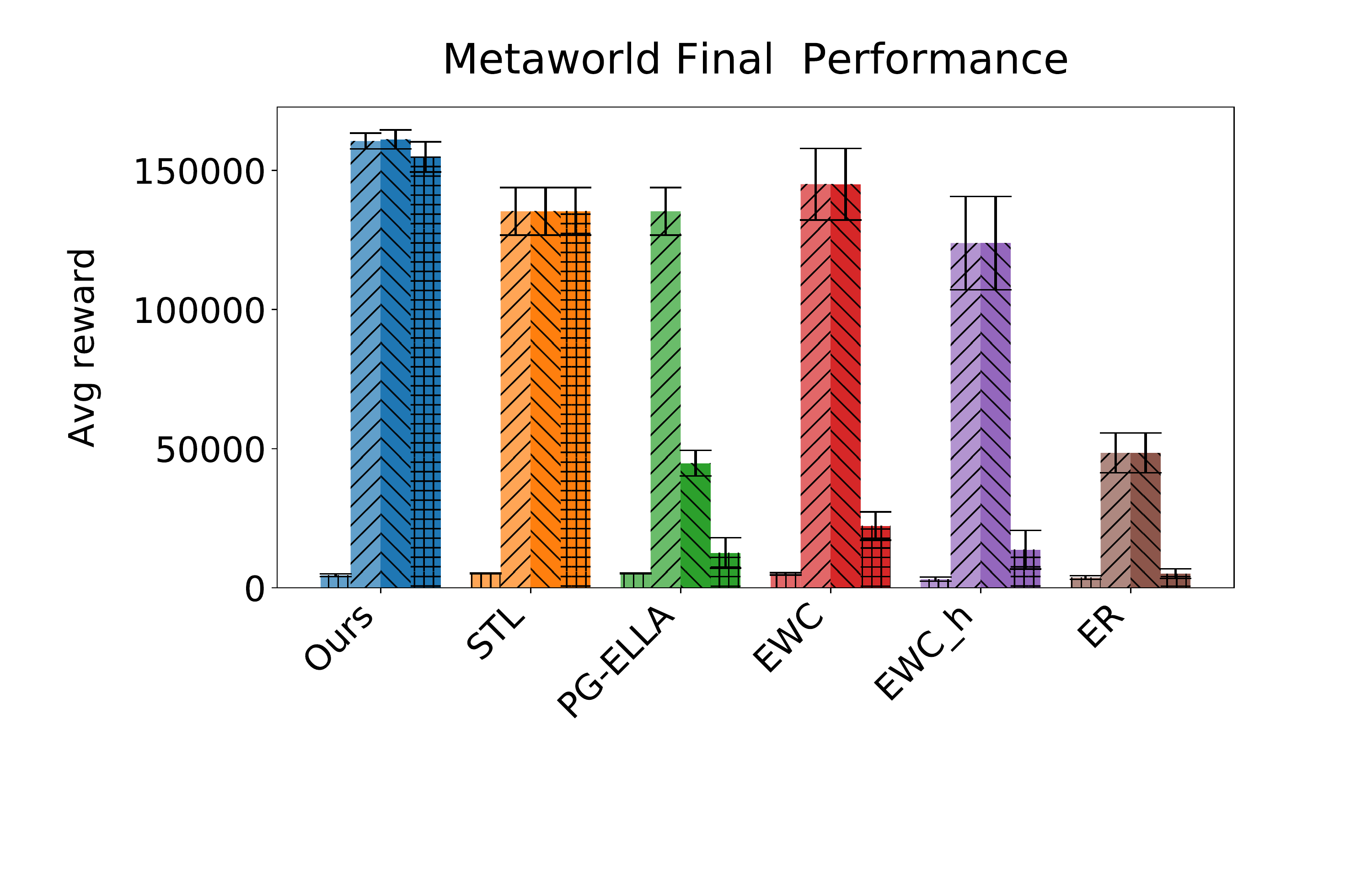}
        \caption{Meta-World MT10}
    \end{subfigure}%
    \begin{subfigure}[b]{0.41\textwidth}
    \centering
        \includegraphics[height=2.9cm, trim={3cm 3.3cm 3cm 2.3cm}, clip]{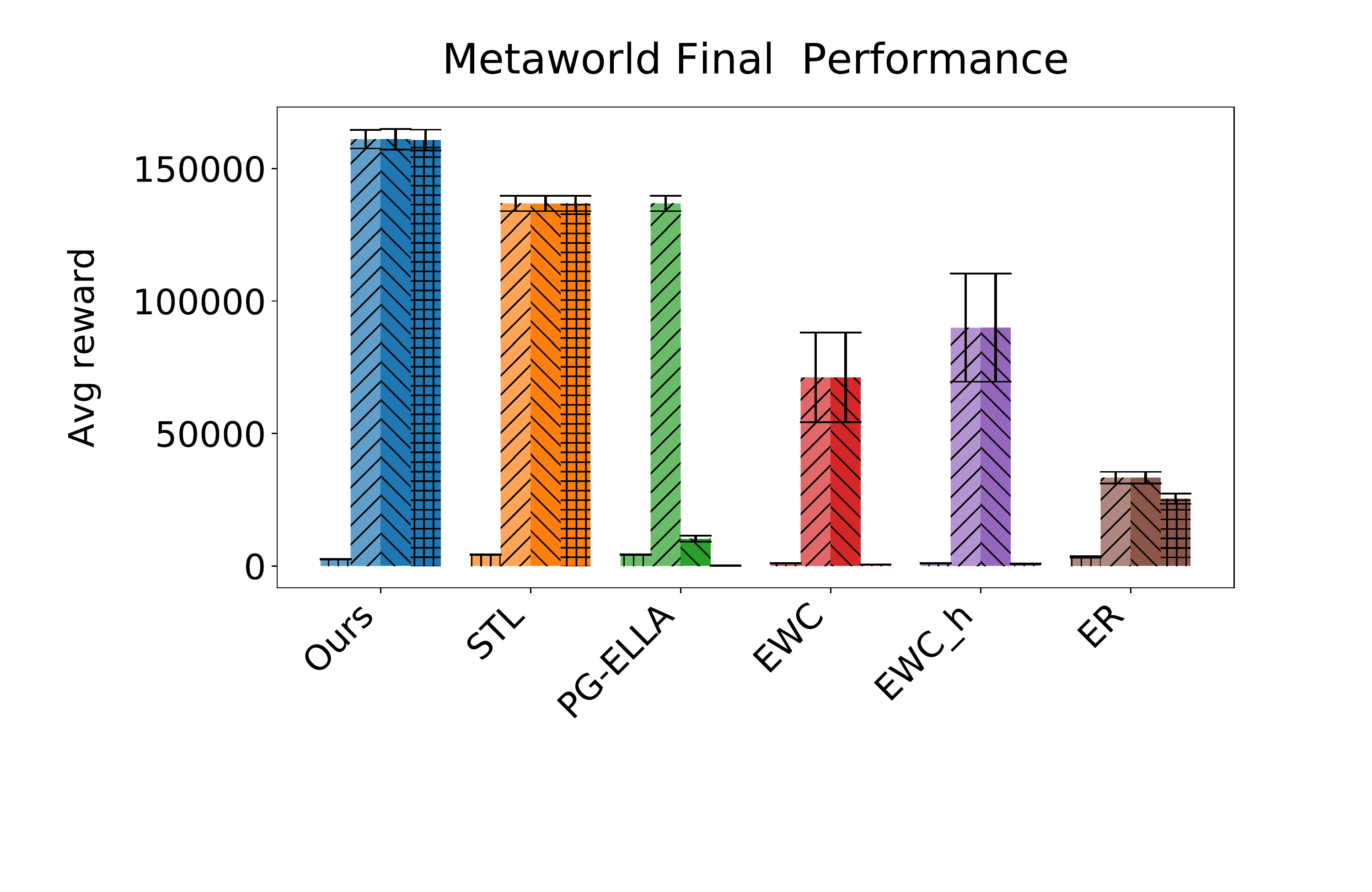}
         \caption{Meta-World MT50}
    \end{subfigure}%
    \begin{subfigure}[b]{0.16\textwidth}
        \raisebox{5em}{\includegraphics[width=0.6\linewidth, trim={0.1cm 0.1cm 0.1cm 0.1cm}, clip]{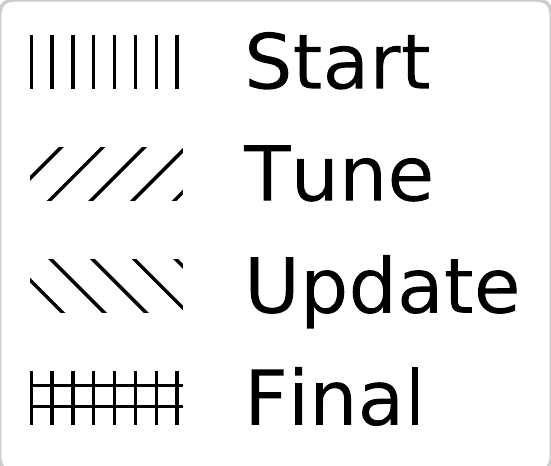}}
    \end{subfigure}
    \caption{Performance on the Meta-World benchmark. Top: average performance during training across all tasks. Bottom: average performance at the beginning of training (start), after all training iterations (tune), after the update step for PG-ELLA and \ouralgo{} (update), and after all tasks have been trained (final). In this notoriously challenging benchmark, \ouralgo{} still improves the performance of STL and all baselines, and suffers from no catastrophic forgetting.}
    \vspace{-1em}
    \label{fig:PerformanceMetaWorld}
\end{figure}

\subsection{Empirical evaluation on more challenging Meta-World domains}
\label{sec:ExperimentalResultsChallenging}

Results so far show that our method improves performance and completely avoids forgetting in simple settings. To showcase the flexibility of our framework, we evaluated it on Meta-World~\citep{yu2019meta}, a substantially more challenging benchmark, whose tasks involve using a simulated Sawyer robotic arm to manipulate various objects in diverse ways, and have been shown to be notoriously difficult for state-of-the-art multi-task learning and meta-learning algorithms. Concretely, we evaluated all methods on sequential versions of the MT10 benchmark, with $T_{\mathsf{max}}=10$ tasks, and the MT50 benchmark, using a subset of $T_{\mathsf{max}}=48$ tasks. We added an experience replay (ER) baseline that uses importance sampling over a replay buffer from all previous tasks' data to encourage knowledge retention, with a 50-50 replay rate as suggested by~\citet{rolnick2019experience}. We chose the NPG hyper-parameters on the {\tt reach} task, which is the simplest task from the benchmark. For \ouralgo{} and PG-ELLA, we fixed the number of latent components as $k=3$. All algorithms used a Gaussian policy parameterized by a multi-layer perceptron with two hidden layers of 32 units and $\tanh$ activation. We also evaluated EWC with a higher capacity (EWC\_h), with 50 hidden units for MT10 and 40 for MT50, to ensure that it was given access to approximately the same number of parameters as our method. Given the high diversity of the tasks considered in this evaluation, we allowed all algorithms to use task-specific output layers, in order to specialize policies to each individual task.

The top row of Figure~\ref{fig:PerformanceMetaWorld} shows average learning curves across tasks. \ouralgo{} again was faster in training, showing that the restriction that the agent only train the $\st$'s for each new task does not harm its ability to solve complex, highly diverse problems. The difference in learning speed was particularly noticeable in MT50, where single-model methods became saturated. To our knowledge, this is the first time lifelong transfer has been shown on the challenging Meta-World benchmark. The bottom row of Figure~\ref{fig:PerformanceMetaWorld} shows that \ouralgo{} suffered from a small amount of forgetting on MT10. However, on MT50, where $\bL$ trained on sufficient tasks for convergence, our method suffered from no forgetting. In contrast, none of the baselines was capable of accelerating the learning, and they all suffered from dramatic forgetting, particularly on MT50, when needing to learn more tasks. Adding capacity to EWC did not substantially alter these results, showing that our method's ability to handle highly varied tasks does not stem from its higher capacity from using $k$ factors, but instead to the use of \textit{different} models for each task by intelligently combining the shared latent policy factors.

\section*{Conclusion}
We proposed a method for lifelong PG learning that enables RL learners to quickly learn to solve new tasks by leveraging knowledge accumulated from earlier tasks. We showed empirically that our method, \ouralgo{}, does not suffer from catastrophic forgetting, and therefore permits learning a large number of tasks in sequence. Moreover, we prove theoretically that our algorithm is guaranteed to converge to the approximate multi-task objective, despite operating completely online. 

Our method can be viewed as an improvement over the popular PG-ELLA algorithm~\citep{bouammar2014online}, which to date had not been applicable to highly complex and diverse RL problems like we study in this work, especially in our evaluation on the Meta-World benchmark. 

Two of the primary limitations of our work are the reliance on task indicators $(t)$ to reconstruct individual task policies via the $\st$'s, and the assumption that the world is stationary and tasks are drawn \textit{i.i.d.} One possible way to address the former challenge is to assume each new batch of experiences comes from a new task, as has been done in prior work~\citep{nagabandi2018deep}. However, note that this would require a small amount of retraining at evaluation time, since the agent would need to discover which task is currently being tested. To address the latter challenge of non-stationarity, our initialization method in Section~\ref{sec:Initialization} could be extended to dynamically add and remove factors as the environment shifts over time. We leave this avenue of investigation open for future work. 



\section*{Broader Impact}

One of the key contributions of our work is reducing the amount of experience required by RL agents to achieve proficiency at a multitude of tasks. The method we present here is a first plausible solution to solving a highly diverse set of RL tasks in a lifelong setting. Research in this direction that further reduces the amount of experience required to learn proficient policies would enable RL training on systems where experience is expensive, such as training real robotic systems or learning policies for medical treatments. In these settings, training RL policies has been impractical to date, but could potentially have a large positive impact by discovering policies superior to those conceivable by human experts with domain knowledge.


\begin{ack}
We would like to thank Kyle Vedder and the anonymous reviewers for their valuable feedback on this work. The research presented in this paper was partially supported by the DARPA Lifelong Learning Machines program under grant FA8750-18-2-0117, by the DARPA SAIL-ON program under contract HR001120C0040, and by ARO MURI grant W911NF-20-1-0080. The views and conclusions in this paper are those of the authors and should not be interpreted as representing the official policies, either expressed or implied, of the Defense Advanced Research Projects Agency, the Army Research Office, or the U.S.~Government. The U.S.~Government is authorized to reproduce and distribute reprints for Government purposes notwithstanding any copyright notation herein.
\end{ack}

\bibliography{LifelongPGLearning}
\bibliographystyle{apalike}

\newpage
\appendix

\renewcommand{\thefigure}{\Alph{section}.\arabic{figure}}
\renewcommand{\thetable}{\Alph{section}.\arabic{table}}
\renewcommand{\theequation}{\Alph{section}.\arabic{equation}}
\setcounter{figure}{0}
\setcounter{table}{0}
\setcounter{equation}{0}
\setcounter{lemma}{0}
\setcounter{claim}{0}
\setcounter{proposition}{0}

\begin{center}
    {\Large \bf Appendices to }\\[0.5em]
    {\Large \bf ``Lifelong Policy Gradient Learning of Factored Policies\\[0.2em]for Faster Training Without Forgetting''}\\[1em]
     { \bf {\normalfont by} Jorge A.~Mendez{\normalfont,} Boyu Wang{\normalfont, and} Eric Eaton}
     
    {{\normalfont 34th Conference on Neural Information Processing Systems (NeurIPS 2020)}}
\end{center}


\renewcommand\qedsymbol{$\blacksquare$}

\section{Proofs of theoretical guarantees}
\label{sec:proofsOfTheoreticalGuarantees}

Here, we present complete proofs for the three results on the convergence of \ouralgo{} described in Section~5.3 of the main paper. First, recall the definitions of the actual objective we want to maximize:
\begin{align*}
    g_t(\bL) =& \frac{1}{t}\sum_{\that=1}^{t}\max_{\sthat} \bigg\{ \|\alphathat -\bL\sthat\|_{\Hthat}^2 + {\gthat}^\top(\bL\sthat-\alphathat) -  \mu\|\sthat\|_1 \bigg\} - \lambda\|\bL\|_{\mathsf{F}}^2\enspace,
\end{align*}
the surrogate objective we use for optimizing $\bL$:
\begin{align*}
    \hat{g}_{t}(\bL) =& -\hspace{-0.3em}\lambda\|\bL\|_{\mathsf{F}}^2 + \frac{1}{t} \sum_{\that=1}^{t} \hat{\ell}(\bL, \sthat, \alphathat, \Hthat, \gthat)\enspace,
\end{align*}
and the expected objective:
\begin{align*}
    g(\bL) =& \mathbb{E}_{\Ht, \gt, \alphat} \bigg[\max_{\bs}\hat{\ell}(\bL, \bs, \alphat, \Ht, \gt) \bigg]\enspace,
\end{align*}
with $
    \hat{\ell}(\bL, \bs, \balpha, \bH, \bg) = -\mu\|\bs\|_1 + \|\balpha-\bL\bs\|_{\bH}^2 + \bg^\top(\bL\bs-\balpha)$. 
The convergence results of \ouralgo{} are summarized as: 1)~the knowledge base $\Lt$ becomes increasingly stable, 2)~$\hat{g}_{t}$, $g_{t}$, and $g$ converge to the same value, and 3)~$\Lt$ converges to a stationary point of $g$. These results, given below as Propositions~\ref{prop:app_Stability}, \ref{prop:app_Convergence}, and \ref{prop:app_StationaryPoint}, are based on the following assumptions:
\renewcommand{\theenumi}{\Alph{enumi}}
\begin{enumerate}
\itemsep0em
    \item \label{asmp:app_iid}The tuples $\left(\Ht,\gt\right)$ are drawn \textit{i.i.d.}~from a distribution with compact support.
    \item \label{asmp:app_mixing}The sequence $\{\alphat\}_{t=1}^{\infty}$ is stationary and $\phi$-mixing.
    \item \label{asmp:app_magnitude} The magnitude of $\Objt(\bm{0})$ is bounded by $B$.
    \item \label{asmp:app_uniqueness}For all $\bL$, $\Ht$, $\gt$, and $\alphat$, the largest eigenvalue (smallest in magnitude) of $\bL_{\gamma}^\top\Ht\bL_{\gamma}$ is at most $-2\kappa$, with $\kappa>0$, where $\gamma$ is the set of non-zero indices of $\st=\argmax_{\bs}\hat{\ell}(\bL, \bs,\Ht, \gt, \alphat)$. The non-zero elements of the unique maximizing $\st$ are given by: $\st_{\gamma} = \big(\bL_{\gamma}^\top\Ht\bL_{\gamma}\big)^{-1}\bigg(\bL^\top\Big(\Ht\alphat - \gt\Big) - \mu\sign\Big(\st_{\gamma}\Big)\bigg)$.
\end{enumerate}
\renewcommand{\theenumi}{\arabic{enumi}}
Note that the $\alphat$'s are not independently obtained, so we cannot assume they are \textit{i.i.d.}~like \citet{ruvolo2013ella}. Therefore, we use a weaker assumption on the sequence of $\alphat$'s found by our algorithm, which enables us to use the Donsker theorem~\citep{billingsley1968convergence} and the Glivenko-Cantelli theorem~\citep{baklanov2006strong}.

\begin{claim}
    \label{claim:magnitudes}
    $\exists\ c_1,c_2,c_3 \in \Reals$ such that no element of $\Lt$, $\st$, and $\alphat$ has magnitude greater than $c_1$, $c_2$, and $c_3$, respectively,  $\forall t \in \{1,\ldots,\infty\}$.
\end{claim}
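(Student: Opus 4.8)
The plan is to produce the three bounds in the order $c_2$ (for $\st$), then $c_1$ (for $\Lt$), and finally $c_3$ (for $\alphat$), so as to break the circular dependence created by the identity $\alphat=\LtMinusOne\st$: I first control $\st$ using the \emph{true} objective, which makes its bound independent of $\bL$, and only afterwards control $\Lt$ through its regularized surrogate. As preliminaries, Assumption~\ref{asmp:app_iid} supplies uniform bounds $\|\Ht\|_2\le H_{\max}$ and $\|\gt\|_2\le g_{\max}$ from compact support, and the discounted-return structure of the policy-gradient objective together with Assumption~\ref{asmp:app_magnitude} gives a uniform bound $|\Objt(\btheta)|\le B$ for every $\btheta$.

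First I would bound $\st$. Since $\st$ maximizes $\ell(\LtMinusOne,\cdot)=\Objt(\LtMinusOne\,\cdot)-\mu\|\cdot\|_1$, comparing its value with that of $\bs=\bm{0}$ and using $\alphat=\LtMinusOne\st$ yields $\mu\|\st\|_1\le\Objt(\alphat)-\Objt(\bm{0})\le 2B$, so every entry of $\st$ is bounded by $c_2:=2B/\mu$ uniformly in $t$ and \emph{without} any reference to $\bL$. Next I would bound $\Lt$, which maximizes $\hat g_t$. Comparing $\hat g_t(\Lt)\ge\hat g_t(\bm{0})$, expanding the $\|\cdot\|_{\Hthat}^2$ terms so that the $\alphathat^\top\Hthat\alphathat$ contributions cancel, and discarding the remaining nonpositive term $(\Lt\sthat)^\top\Hthat(\Lt\sthat)$ (nonpositive by negative semidefiniteness of $\Hthat$, which holds for the NPG base learner of Section~\ref{sec:NaturalPolicyGradient} and is implicit in Assumption~\ref{asmp:app_uniqueness}) leaves the linear-in-$\bL$ remainder, giving after Cauchy--Schwarz
\[
\lambda\|\Lt\|_{\mathsf{F}}^2\le\|\Lt\|_{\mathsf{F}}\cdot\frac{1}{t}\sum_{\hat t=1}^{t}\|\sthat\|\big(\|\gthat\|+2\|\Hthat\|\,\|\alphathat\|\big)\enspace.
\]

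Dividing by $\|\Lt\|_{\mathsf{F}}$ and inserting the Step-1 bound leaves an estimate on $\|\Lt\|_{\mathsf{F}}$ that still contains $\|\alphathat\|=\|\LthatMinusOne\sthat\|$, i.e.\ $\bL$ at earlier tasks; this residual coupling is the crux of the argument. I would close it by induction on $t$: writing $M_t=\max_{s\le t}\|\bL_s\|_{\mathsf{F}}$ and using $\|\alphathat\|\le c_2 M_{t-1}$, the display becomes $\|\Lt\|_{\mathsf{F}}\le c_2 g_{\max}/\lambda+(2H_{\max}c_2^2/\lambda)\,M_{t-1}$, a contraction whose fixed point yields a uniform $c_1$; then $c_3:=c_1 c_2$ bounds $\alphat=\LtMinusOne\st$ entrywise. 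The main obstacle is precisely this coupling, and in particular guaranteeing that the induction contracts: the regularization weight $\lambda$ must dominate $H_{\max}c_2^2$ for the crude estimate to stay bounded, so a careful accounting of constants (or a sharper use of the closed-form maximizer and the eigenvalue margin $\kappa$ of Assumption~\ref{asmp:app_uniqueness}) is what makes the bound close cleanly.
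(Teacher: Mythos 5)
Your overall strategy is the same as the paper's: bound each quantity by comparing the maximizer's objective value against the value at $\bm{0}$, let the regularizers penalize large solutions, and close the argument by (strong) induction over tasks, with the same implicit strengthening of Assumption~C to a uniform bound on $\Objt$ everywhere (both you and the paper need this). However, there are two concrete problems. First, your Step~2 treats $\Lt$ as $\argmax_{\bL}\hat{g}_t(\bL)$ for every $t$, but that characterization only holds for $t>k$. During the initialization phase ($t\leq k$, Algorithm~\ref{alg:LifelongPolicyInitialization}), the dictionary is built column-by-column: each new column is an error vector $\epsilont$ obtained from $\argmax_{\bs,\bepsilon}\Objt(\LtMinusOne\bs+\bepsilon)-\mu\|\bs\|_1-\lambda\|\bepsilon\|_2^2$, and $\alphat=\LtMinusOne\st+\epsilont$. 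The paper's proof devotes its base case ($\bL_1$, with $\bs^{(1)}=1$) and the first half of its induction to exactly this phase, where the column bound $\lambda\|\epsilont\|_2^2\leq 2B$ is immediate and involves no coupling to earlier dictionaries. Your induction has no valid base case without this.

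Second, the coupling you flag is a genuine obstruction, and your proposed escape does not work as stated. The recursion $\|\Lt\|_{\mathsf{F}}\leq c_2 g_{\max}/\lambda+(2H_{\max}c_2^2/\lambda)M_{t-1}$ yields a uniform bound only if $2H_{\max}c_2^2/\lambda<1$, a condition nowhere implied by Assumptions~A--D (and badly violated by the paper's experimental choice $\lambda=10^{-5}$); no "careful accounting of constants" can manufacture it, since the assumptions impose no relation among $\lambda$, $B/\mu$, and $H_{\max}$. The eigenvalue margin $\kappa$ of Assumption~D does not help either: it controls the curvature of $\hat{\ell}$ in $\bs$, not the magnitude of the linear term $\Hthat\alphathat$ that drives the recursion. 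To your credit, this is also the soft spot in the paper's own proof: for $t>k$ the paper bounds the $\alphathat$'s by the strong induction hypothesis and then asserts that $\Lt$ is bounded "because of the regularization term," without ever verifying that the resulting bound can be chosen uniformly in $t$ --- which is precisely the contraction question you raise. So your reconstruction matches the paper's argument where it applies and even surfaces a uniformity issue the paper glosses over, but as a proof it is incomplete on two counts: the missing initialization phase, and an induction that is never actually closed.
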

\begin{proof}
    We complete this proof by strong induction. In the base case, $\bL_{1}$ is given by $\argmax_{\bepsilon} \Obj^{(1)}(\bepsilon) - \lambda\|\bepsilon\|_{2}^{2}$. If $\bepsilon=\bm{0}$, the objective becomes $\Obj^{(1)}(\bm{0})$, which is bounded by Assumption~\ref{asmp:app_magnitude}. This implies that if $\bepsilon$ grows too large, $-\lambda\|\bepsilon\|_2$ would be too negative, and then it would not be a maximizer. $\bs^{(1)}=1$ per Algorithm~\appendixRef{alg:LifelongPolicyInitialization}, and so $\balpha^{(1)}=\bL_{1}$, which we just showed is bounded. 
    
    Then, for $t\leq k$, we have that $\st$ and $\epsilont$ are given by $\argmax_{\bs,\bepsilon} \Objt(\LtMinusOne\bs + \bepsilon) - \mu\|\bs\|_1 - \lambda\|\bepsilon\|_2^2$. If $\bs=\bm{0}$ and $\bepsilon=\bm{0}$, this becomes $\Objt(\bm{0})$, which is again bounded, and therefore neither $\bepsilon$ nor $\bs$ may grow too large. The bound on $\alphat$ follows by induction, since $\alphat=\LtMinusOne\st$. Moreover, since only the $t-$th column of $\bL$ is modified by setting it to $\bepsilon$, $\Lt$ is also bounded. For $t>k$, the same argument applies to $\st$ and therefore to $\alphat$. $\Lt$ is then given by $\argmax_{\bL} -\lambda\|\bL\|_{\mathsf{F}}^2 + \frac{1}{t} \sum_{\that}^{t} \|\bL\sthat - \alphathat\|_{\Hthat}+ {\gthat}^\top(\bL\sthat - \alphathat)$. If $\Lt=\bm{0}$, the objective for task $\Task^{(\that)}$ becomes ${\alphathat}^\top \Hthat \alphathat + {\gthat}^\top \alphathat$. By Assumption~\ref{asmp:app_iid} and strong induction, this is bounded for all $\that\leq t$, so if any element of $\bL$ is too large, $\bL$ would not be a maximizer because of the regularization term. 
\end{proof}

\begin{proposition}
\label{prop:app_Stability}
    $\bL_{t} - \bL_{t-1} = O(\frac{1}{t})\enspace$.
\end{proposition}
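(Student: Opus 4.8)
The plan is to view $\Lt$ and $\LtMinusOne$ as the maximizers of two strongly concave surrogates $\hat{g}_t$ and $\hat{g}_{t-1}$ that differ only by an $O(1/t)$ perturbation, and then apply the standard stability bound: the maximizers of two $m$-strongly-concave functions differing by an $L$-Lipschitz term lie within $2L/m$ of each other. So I need three ingredients: (i) uniform bounds on all the quantities; (ii) an $O(1/t)$ Lipschitz constant for $\hat{g}_t-\hat{g}_{t-1}$; and (iii) a strong-concavity modulus for $\hat{g}_{t-1}$ that does not degrade with $t$. Ingredient (i) is furnished directly by Claim~\ref{claim:magnitudes} together with Assumption~\ref{asmp:app_iid}: the entries of $\Lt$, $\st$, $\alphat$, $\Ht$, and $\gt$ all remain in a fixed compact set, so the whole analysis may be confined to a bounded region $\mathcal{B}$ of $\bL$-space, and I only need the Lipschitz and curvature estimates to hold on $\mathcal{B}$.

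For ingredient (ii), I would compute the perturbation explicitly. Writing $\hat{\ell}_{\that}(\bL)=\hat{\ell}(\bL,\sthat,\alphathat,\Hthat,\gthat)$, the common term $-\lambda\|\bL\|_{\mathsf{F}}^2$ cancels in the difference, leaving
\[
\hat{g}_t(\bL)-\hat{g}_{t-1}(\bL) = \frac{1}{t}\,\hat{\ell}_{t}(\bL) - \frac{1}{t(t-1)}\sum_{\that=1}^{t-1}\hat{\ell}_{\that}(\bL)\enspace.
\]
Each gradient $\nabla_{\bL}\hat{\ell}_{\that}$ is affine in $\bL$ with coefficients assembled from $\sthat,\alphathat,\Hthat,\gthat$ (the $\ell_1$ penalty is $\bL$-free and drops out), so ingredient (i) bounds it by a constant $C$ uniformly over $\mathcal{B}$ and over $\that$. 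Hence $\|\nabla_{\bL}(\hat{g}_t-\hat{g}_{t-1})\|_{\mathsf{F}}\le \frac{C}{t}+\frac{(t-1)C}{t(t-1)}=\frac{2C}{t}$, giving the claimed $O(1/t)$ Lipschitz constant.

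For ingredient (iii), I observe that $\hat{g}_t$ is quadratic in $\vect(\bL)$ with Hessian equal to the matrix $\bA=-2\lambda\bI+\frac{2}{t}\sum_{\that}(\sthat\sthat{}^\top)\kron\Hthat$ from the closed-form solution. Since each $\Hthat$ is negative semidefinite (as ensured by the NPG base learner), every Kronecker summand is negative semidefinite, so $\bA\preceq -2\lambda\bI$ and $\hat{g}_{t-1}$ is $m$-strongly concave with $m=2\lambda$, a modulus independent of $t$. Combining the ingredients, strong concavity about $\LtMinusOne$ gives $\hat{g}_{t-1}(\LtMinusOne)-\hat{g}_{t-1}(\Lt)\ge \frac{m}{2}\|\Lt-\LtMinusOne\|_{\mathsf{F}}^2$; optimality of $\Lt$ for $\hat{g}_t$ gives $\hat{g}_t(\Lt)\ge\hat{g}_t(\LtMinusOne)$, which after substituting $\hat{g}_t=\hat{g}_{t-1}+(\hat{g}_t-\hat{g}_{t-1})$ and invoking ingredient (ii) yields $\hat{g}_{t-1}(\LtMinusOne)-\hat{g}_{t-1}(\Lt)\le \frac{2C}{t}\|\Lt-\LtMinusOne\|_{\mathsf{F}}$. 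Chaining these and dividing by $\|\Lt-\LtMinusOne\|_{\mathsf{F}}$ produces $\|\Lt-\LtMinusOne\|_{\mathsf{F}}\le \frac{4C}{mt}=O(1/t)$.

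The main obstacle I anticipate is making ingredient (ii) fully rigorous: I must track that the affine-in-$\bL$ gradient of each $\hat{\ell}_{\that}$ stays uniformly bounded on $\mathcal{B}$ by invoking every part of Claim~\ref{claim:magnitudes} and Assumption~\ref{asmp:app_iid} simultaneously, and verify that the two differently normalized averages ($\tfrac1t$ versus $\tfrac1{t-1}$) recombine to give exactly the $1/t$ decay rather than an $O(1)$ residual. The strong-concavity and perturbation-combination steps are then routine once these bounds are in place.
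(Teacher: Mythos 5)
Your proposal is correct and follows essentially the same route as the paper's proof: establish uniform boundedness via Claim~\ref{claim:magnitudes}, show $\hat{g}_t-\hat{g}_{t-1}$ is $O(1/t)$-Lipschitz from the same telescoping decomposition, and combine the optimality of $\Lt$ and $\LtMinusOne$ with the $2\lambda$-strong concavity of $\hat{g}_{t-1}$ to get the perturbation bound. If anything, you are slightly more careful than the paper in noting that the curvature bound $\bA\preceq-2\lambda\bI$ requires each $\Hthat$ to be negative semidefinite (guaranteed by the NPG base learner), a condition the paper's proof invokes only implicitly.
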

\begin{proof}
    First, we show that $\hat{g}_{t}\!-\!\hat{g}_{t-1}$ is Lipschitz with constant $O\left(\frac{1}{t}\right)$. To show this, we note that $\hat{\ell}$ is Lipschitz in $\bL$ with a constant independent of $t$, since it is a quadratic function over a compact region with bounded coefficients. Next, we have:
\begin{align*}
    \hat{g}_{t}(\bL) - \hat{g}_{t-1}(\bL) =& \frac{1}{t} \hat{\ell}\left(\bL,\st,\alphat, \Ht, \gt\right) + \frac{1}{t}\sum_{\that=1}^{t-1} \hat{\ell}\left(\bL, \sthat, \alphathat, \Hthat, \gthat\right) \\ &- \frac{1}{t-1}\sum_{\that=1}^{t-1}\hat{\ell}\left(\bL, \sthat, \alphathat, \Hthat, \gthat\right) \\
        =& \frac{1}{t}\hat{\ell}\left(\bL,\st,\alphat, \Ht, \gt\right) + \frac{1}{t(t-1)}\sum_{\that=1}^{t-1} \hat{\ell}\left(\bL, \sthat, \alphathat, \Hthat, \gthat\right)\enspace.
\end{align*}%
Therefore, $\hat{g}_{t}\!-\!\hat{g}_{t\!-\!1}$ has a Lipschitz constant $O\left(\frac{1}{t}\right)$, since it is the difference of two terms divided by $t$: $\hat{\ell}$ and an average over $t\!-\!1$ terms, whose Lipschitz constant is bounded by the largest Lipschitz constant of the terms.

Let $\xi_t$ be the Lipschitz constant of $\hat{g}_{t}-\hat{g}_{t-1}$. We have:
\begin{align*}
    \hat{g}_{t-1}(\bL_{t-1}) -\hat{g}_{t-1}(\bL_{t}) =&\  \hat{g}_{t-1}(\LtMinusOne) - \hat{g}_{t}(\LtMinusOne) + \hat{g}_{t}(\LtMinusOne) -  \hat{g}_{t}(\Lt) + \hat{g}_{t}(\Lt) - \hat{g}_{t-1}(\Lt)\\
    \leq&\  \hat{g}_{t-1}(\LtMinusOne) - \hat{g}_{t}(\LtMinusOne) + \hat{g}_{t}(\Lt) - \hat{g}_{t-1}(\Lt)\\
    =& -(\hat{g}_{t}-\hat{g}_{t-1})(\LtMinusOne) + (\hat{g}_{t}-\hat{g}_{t-1})(\Lt)
    \leq \xi_{t}\|\Lt-\LtMinusOne\|_\mathsf{F}\enspace.
\end{align*}
Moreover, since $\LtMinusOne$ maximizes $\hat{g}_{t-1}$ and the $\ell_2$ regularization term ensures that the maximum eigenvalue of the Hessian of $\hat{g}_{t-1}$ is upper-bounded by $-2\lambda$, we have that $\hat{g}_{t-1}(\LtMinusOne)-\hat{g}_{t-1}(\Lt)\geq\lambda\|\Lt-\LtMinusOne\|_\mathsf{F}^2$. 
Combining these two inequalities, we have:
$
    \|\Lt-\LtMinusOne\|_{\mathsf{F}} \leq \frac{\xi_{t}}{\lambda}=O\left(\frac{1}{t}\right)
$. 
\end{proof}
The critical step for adapting the proof from \citet{ruvolo2013ella} to \ouralgo{} is to introduce the following lemma, which shows the equality of the maximizers of $\ell$ and $\hat{\ell}$.

\begin{lemma}
\label{lma:app_EqualityOfMaximizers} $
    \hat{\ell}\Big(\bL_{t}, \bs^{(t+1)}, \balpha^{(t+1)}, \bH^{(t+1)}, \bg^{(t+1)}\Big) = \max_{\bs} \hat{\ell}\Big(\bL_{t}, \bs, \balpha^{(t+1)}, \bH^{(t+1)}, \bg^{(t+1)}\Big)\enspace$.
\end{lemma}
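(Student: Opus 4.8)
The plan is to verify that $\bs^{(t+1)}$ satisfies the first-order (KKT) optimality conditions for $\hat{\ell}\big(\Lt,\cdot,\balpha^{(t+1)},\bH^{(t+1)},\bg^{(t+1)}\big)$ and then use Assumption~\ref{asmp:app_uniqueness} to upgrade stationarity to global optimality. The starting point is to make precise the assertion that $\hat{\ell}$, viewed as a function of $\bs$, is the second-order Taylor expansion of $\ell(\Lt,\cdot)=\Obj^{(t+1)}(\Lt\bs)-\mu\|\bs\|_1$ about $\bs^{(t+1)}$, up to an additive constant independent of $\bs$. Writing $\bm{\delta}=\bs-\bs^{(t+1)}$ and using $\balpha^{(t+1)}=\Lt\bs^{(t+1)}$, the smooth part of $\hat{\ell}$ becomes $\bm{\delta}^\top\Lt^\top\bH^{(t+1)}\Lt\bm{\delta}+\big(\Lt^\top\bg^{(t+1)}\big)^\top\bm{\delta}$, which reproduces the value, gradient, and Hessian of $\Obj^{(t+1)}(\Lt\bs)$ at $\bs^{(t+1)}$ once we recall $\bg^{(t+1)}=\nabla_{\btheta}\Obj^{(t+1)}\big\lvert_{\btheta=\balpha^{(t+1)}}$ and $\bH^{(t+1)}=\frac{1}{2}\nabla_{\btheta,\btheta^\top}\Obj^{(t+1)}\big\lvert_{\btheta=\balpha^{(t+1)}}$; the discarded constant is $\Obj^{(t+1)}(\balpha^{(t+1)})$, which does not affect the maximizer. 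Crucially, both $\ell$ and $\hat{\ell}$ carry the \emph{same} nonsmooth term $-\mu\|\bs\|_1$.

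The consequence I would extract is that the smooth gradients of $\ell$ and $\hat{\ell}$ coincide at the expansion point. By the chain rule, $\nabla_{\bs}\Obj^{(t+1)}(\Lt\bs)\big\lvert_{\bs=\bs^{(t+1)}}=\Lt^\top\bg^{(t+1)}$, while differentiating the smooth part of $\hat{\ell}$ gives $2\Lt^\top\bH^{(t+1)}\big(\Lt\bs-\balpha^{(t+1)}\big)+\Lt^\top\bg^{(t+1)}$, whose first term vanishes at $\bs^{(t+1)}$ because the residual $\Lt\bs^{(t+1)}-\balpha^{(t+1)}$ is zero. Hence both smooth gradients equal $\Lt^\top\bg^{(t+1)}$, and since the $\ell_1$ subdifferential is identical for the two objectives, the full subdifferentials of $\ell$ and $\hat{\ell}$ agree at $\bs^{(t+1)}$. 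As $\bs^{(t+1)}$ maximizes $\ell$ by construction (Equation~\ref{equ:UpdateStGeneral}), it satisfies $0\in\partial_{\bs}\ell(\Lt,\bs)\big\lvert_{\bs=\bs^{(t+1)}}$, and therefore also $0\in\partial_{\bs}\hat{\ell}(\Lt,\bs,\ldots)\big\lvert_{\bs=\bs^{(t+1)}}$. Written out on the support $\gamma$ of $\bs^{(t+1)}$, this is the stationarity system $\big(\Lt^\top\bg^{(t+1)}\big)_{\gamma}=\mu\sign\big(\bs^{(t+1)}_{\gamma}\big)$ together with $\big|\big(\Lt^\top\bg^{(t+1)}\big)_i\big|\leq\mu$ for every $i$ off the support.

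To finish, I would invoke Assumption~\ref{asmp:app_uniqueness}. Because the eigenvalues of $(\Lt)_{\gamma}^\top\bH^{(t+1)}(\Lt)_{\gamma}$ are bounded above by $-2\kappa<0$, the quadratic $\hat{\ell}$ is strictly concave on the active subspace, so its stationarity conditions are \emph{sufficient} for optimality and single out a unique maximizer, whose nonzero block matches the closed form in the assumption. Since $\bs^{(t+1)}$ already satisfies exactly these conditions, it must coincide with that unique maximizer, yielding $\hat{\ell}(\Lt,\bs^{(t+1)},\ldots)=\max_{\bs}\hat{\ell}(\Lt,\bs,\ldots)$. The step I expect to be the main obstacle is precisely this last passage from stationarity to global optimality: because $\hat{\ell}$ is concave only on the active subspace rather than globally, one cannot simply appeal to concavity of a smooth objective but must argue through the KKT system with the support $\gamma$ held fixed---reducing to a strictly concave quadratic in $\bs_{\gamma}$---and verify that the sign pattern of $\bs^{(t+1)}$ is consistent with the subgradient of $\|\cdot\|_1$ both on and off $\gamma$.
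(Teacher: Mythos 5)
Your proposal is correct and takes essentially the same route as the paper's proof: both rest on the observation that the quadratic term in $\nabla_{\bs}\hat{\ell}$ vanishes at $\bs^{(t+1)}$ because $\balpha^{(t+1)}=\bL_{t}\bs^{(t+1)}$, so the first-order optimality conditions of $\ell$ and $\hat{\ell}$ coincide at that point, after which Assumption~\ref{asmp:app_uniqueness} (uniqueness of the maximizer of $\hat{\ell}$) closes the argument. If anything, your write-up is more careful than the paper's, which treats the $\ell_1$ term as differentiable by writing $-\mu\sign(\bs)$ in the gradient, whereas you handle the zero coordinates via the subdifferential and make explicit the passage from stationarity to global optimality on the active support.
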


\begin{proof}
    To show this, we need the following to hold:
\begin{align*}
    \bs^{(t+1)} =\argmax_{\bs} \ell(\bL_{t}, \bs) = \argmax_{\bs} \hat{\ell}\Big(\bL_{t}, \bs, \balpha^{(t+1)}, \bH^{(t+1)}, \bg^{(t+1)}\Big)\enspace.
\end{align*}

We first compute the gradient of $\ell$, given by:
\begin{align*}
    \nabla_{\bs}\ell(\Lt, \bs)
         =& -\mu\sign(\bs) + \Lt^\top \nabla_{\btheta}\Obj^{(t+1)}(\btheta)\bigg\lvert_{\btheta=\bL_{t}\bs}\enspace.
\end{align*}
Since $\bs^{(t+1)}$ is the maximizer of $\ell$, we have:
\begin{align}
    \label{equ:gradEllH}
    \nabla_{\bs}\ell(\bL_{t}, \bs)\bigg\lvert_{\bs=\bs^{(t+1)}} \hspace{-3em}= \!-\!\mu\sign\big(\bs^{(t+1)}\big) \!+\! \bL_{t}^\top\bg^{(t+1)} 
        \!=\! \bm{0}\enspace.
\end{align}
We now compute the gradient of $\hat{\ell}$ and evaluate it at $\bs^{(t+1)}$:
\begin{align*}
    \nabla_{\bs}\hat{\ell}\Big(\!\Lt, \bs, \balpha^{(t+1)}, \bH^{(t+1)}, \bg^{(t+1)}\!\Big) = -\!\mu\sign\Big(\!\bs^{(t+1)}\!\Big) + \bL_{t}\bg^{(t+1)} - 2\bL_{t}^\top\bH^{(t+1)}\Big(\!\balpha^{(t+1)} \!-\! \bL_{t}\bs\!\Big)
\end{align*}
\begin{align*}
    \nabla_{\bs}\hat{\ell}\Big(\bL_{t}, \bs, \balpha^{(t+1)}, \bH^{(t+1)}, \bg^{(t+1)}\Big) \bigg\lvert_{\bs=\bs^{(t+1)}} = -\mu\sign\Big(\bs^{(t+1)}\Big) + \bL_{t}^\top\bg^{(t+1)}= \bm{0}\enspace,
\end{align*}
since it matches Equation~\ref{equ:gradEllH}. By Assumption~\ref{asmp:app_uniqueness}, $\hat{\ell}$ has a unique maximizer $\bs^{(t+1)}$.
\end{proof}

Before stating our next lemma, we define:
\begin{align*}
    \bs^* =& \beta\Big(\bL, \alphat, \Ht, \gt\Big) = \argmax_{\bs} \hat{\ell}\Big(\bL, \bs, \alphat, \Ht, \gt\Big)\enspace.
\end{align*}

\begin{lemma}\ 
\label{lma:app_Lipschitz}
\renewcommand{\theenumi}{\Alph{enumi}}
\begin{enumerate}
\itemsep0em
    \item \label{part:app_firstLemmaLipschitz} $\max_{\bs}\hat{\ell}(\bL, \bs, \alphat, \Ht, \gt)$ is continuously differentiable in $\bL$ with $\nabla_{\bL}\max_{\bs}\hat{\ell}(\bL, \bs, \alphat, \Ht, \gt) =\Big[-2\Ht\bs^* + \gt\Big]{\bs^*}^\top$.
    \item \label{part:app_secondLemmaLipschitz} $g$ is continuously differentiable and ${\nabla_{\bL} g(\bL) \!=\! -2\lambda\bI \!+\! \mathbb{E}\!\Big[\!\nabla_{\bL} \max_{\bs}\hat{\ell}(\!\bL, \bs, \alphat, \Ht, \gt\!) \!\Big]}$.
    \item \label{part:app_thirdLemmaLipschitz} $\nabla_{\bL} g(\bL)$ is Lipschitz in the space of latent components $\bL$ that obey Claim~\ref{claim:magnitudes}.
\end{enumerate}
\renewcommand{\theenumi}{\arabic{enumi}}
\end{lemma}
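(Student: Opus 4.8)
The plan is to prove the three parts in order, with Part~\ref{part:app_firstLemmaLipschitz} supplying the pointwise gradient formula and Parts~\ref{part:app_secondLemmaLipschitz}--\ref{part:app_thirdLemmaLipschitz} then following by integrating over the task distribution, using the uniform bounds of Claim~\ref{claim:magnitudes} and Assumption~\ref{asmp:app_iid} to control everything inside the expectation. Throughout, the role of Assumption~\ref{asmp:app_uniqueness} is to guarantee that the inner maximizer $\bs^* = \beta(\bL,\alphat,\Ht,\gt)$ is unique and well-conditioned; this is what converts the nonsmooth $\ell_1$ problem into a tractable one.

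For Part~\ref{part:app_firstLemmaLipschitz}, I would invoke an envelope (Danskin-type) argument. First compute the partial gradient of $\hat{\ell}$ in $\bL$ with $\bs$ held fixed: differentiating $\|\alphat-\bL\bs\|_{\Ht}^2 + \gt^\top(\bL\bs-\alphat)$ gives $\nabla_{\bL}\hat{\ell} = \big[-2\Ht(\alphat-\bL\bs)+\gt\big]\bs^\top$. Since by Assumption~\ref{asmp:app_uniqueness} the maximizer $\bs^*$ is unique and, on its active set $\gamma$, is given by the stated closed form, $\bs^*$ is a piecewise-smooth function of $\bL$; the key step is to argue it is in fact \emph{continuous}, because as $\bL$ varies a coordinate can only enter or leave $\gamma$ by passing continuously through zero, so the smooth branches glue together. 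Because $\bs^*$ satisfies the first-order stationarity condition on $\gamma$ and is pinned at zero off $\gamma$, the implicit term $(\partial\bs^*/\partial\bL)^\top\nabla_{\bs}\hat{\ell}$ vanishes, so the total derivative of $\max_{\bs}\hat{\ell}$ equals the partial gradient evaluated at $\bs^*$, namely $\big[-2\Ht(\alphat-\bL\bs^*)+\gt\big]{\bs^*}^\top$; continuity of $\bs^*$ then makes this gradient continuous, giving $C^1$-ness.

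For Part~\ref{part:app_secondLemmaLipschitz}, $g$ decomposes into the smooth Frobenius regularization term, which differentiates directly, plus the expectation of the value function from Part~\ref{part:app_firstLemmaLipschitz}. The only substantive task is to interchange $\nabla_{\bL}$ and $\Expectation_{\Ht,\gt,\alphat}$, which I would justify by dominated convergence: the integrand $\big[-2\Ht(\alphat-\bL\bs^*)+\gt\big]{\bs^*}^\top$ is uniformly bounded, since $(\Ht,\gt)$ has compact support (Assumption~\ref{asmp:app_iid}) and $\bL,\bs^*,\alphat$ are bounded by Claim~\ref{claim:magnitudes}, so the difference quotients admit a constant (hence integrable) dominating envelope. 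For Part~\ref{part:app_thirdLemmaLipschitz}, I would show the integrand is Lipschitz in $\bL$ \emph{uniformly} in $(\Ht,\gt,\alphat)$ and then pass the Lipschitz bound through the expectation; since the regularization contributes a linear (trivially Lipschitz) term, it remains to bound the Lipschitz behavior of $\big[-2\Ht(\alphat-\bL\bs^*)+\gt\big]{\bs^*}^\top$, whose dependence on $\bL$ is through the bounded, linear $\bL\bs^*$ and through $\bs^*(\bL)$ itself.

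The main obstacle, common to Parts~\ref{part:app_firstLemmaLipschitz} and~\ref{part:app_thirdLemmaLipschitz}, is controlling $\bs^*(\bL)$ across changes of its active set, where the $\ell_1$ term is nondifferentiable. On any region of constant support the closed form $\bs^*_\gamma = (\bL_\gamma^\top\Ht\bL_\gamma)^{-1}\big(\bL^\top(\Ht\alphat-\gt)-\mu\sign(\bs^*_\gamma)\big)$ is smooth, and it is Lipschitz with a \emph{uniform} constant precisely because Assumption~\ref{asmp:app_uniqueness} forces the eigenvalues of $\bL_\gamma^\top\Ht\bL_\gamma$ to lie below $-2\kappa$, so $\|(\bL_\gamma^\top\Ht\bL_\gamma)^{-1}\|\le\frac{1}{2\kappa}$ independently of the task and of $t$; combined with the compact support of $(\Ht,\gt)$ and Claim~\ref{claim:magnitudes}, every other factor is bounded. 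The finishing argument is that these per-region Lipschitz estimates assemble into a single global constant: by the continuity established in Part~\ref{part:app_firstLemmaLipschitz}, coordinates crossing the support boundary do so continuously through zero, so the one-sided bounds agree across boundaries and no Lipschitz constant blows up there. Taking expectation then preserves the uniform Lipschitz constant, so $\nabla_{\bL}g$ is Lipschitz on the set of $\bL$ obeying Claim~\ref{claim:magnitudes}.
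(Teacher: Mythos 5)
Your plan for Parts~\ref{part:app_firstLemmaLipschitz} and~\ref{part:app_secondLemmaLipschitz} is essentially the paper's: Part~\ref{part:app_firstLemmaLipschitz} is proved there by citing a corollary of Theorem~4.1 of \citet{bonnans1998optimization} (an envelope theorem requiring exactly the two ingredients you identify: smoothness of $\hat{\ell}$ in $\bL$ and uniqueness of the maximizer from Assumption~\ref{asmp:app_uniqueness}), and Part~\ref{part:app_secondLemmaLipschitz} follows, as you say, because compact support (Assumption~\ref{asmp:app_iid} together with Claim~\ref{claim:magnitudes}) licenses differentiation under the expectation. Incidentally, the gradient you compute, $\big[-2\Ht(\alphat-\bL\bs^*)+\gt\big]{\bs^*}^\top$, is the dimensionally consistent one; the formula printed in the lemma cannot be literally correct, since $\Ht\in\Reals^{d\times d}$ cannot act on $\bs^*\in\Reals^{k}$, so this discrepancy is a typo in the paper rather than an error of yours.

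The genuine gap is in Part~\ref{part:app_thirdLemmaLipschitz}, where you depart from the paper, and two steps fail as stated. First, the per-region estimate: Assumption~\ref{asmp:app_uniqueness} bounds the eigenvalues of $\bL_\gamma^\top\Ht\bL_\gamma$ only at points $\bL$ where $\gamma$ actually \emph{is} the active set of the maximizer, i.e., on the region itself. Since a region of constant support and sign pattern need not be convex, the segment joining two of its points can leave the region, where you have no control on $(\bL_\gamma^\top\Ht\bL_\gamma)^{-1}$; hence a derivative bound valid on the region does not give Lipschitzness of $\bs^*$ between arbitrary pairs of points of that region. Second, the assembly: continuity across boundaries does not by itself convert per-region derivative bounds into a global Lipschitz constant; one would additionally need, e.g., that every segment in $\bL$-space meets each sign-pattern region in finitely many subintervals, which you neither state nor prove (and the continuity you invoke is itself only asserted in your Part~\ref{part:app_firstLemmaLipschitz}, whereas the paper derives it from uniqueness of the maximizer plus continuity of $\hat{\ell}$). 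The paper's proof avoids the piecewise structure entirely. Following \citet{fuchs2005recovery}, the optimality conditions $|\rho(\bL,\Ht,\gt,\alphat,j)|<\mu \Leftrightarrow \bs^*_j=0$ give an open neighborhood $V$ of any tuple on which all off-support coordinates of the maximizer remain zero; then, on $V$, strong concavity of the restricted objective $\bar{\ell}$ (Hessian at most $-2\kappa$, again by Assumption~\ref{asmp:app_uniqueness}) turns the gap in $\bar{\ell}$-values between the two maximizers into $\kappa\|\beta-\beta^\prime\|_2^2$, while Lipschitzness of $\bar{\ell}$ in its data bounds that same gap linearly in $\|\bL-\bL^\prime\|_{\mathsf{F}}$ and the other arguments. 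This yields a local Lipschitz constant $O(1/\kappa)$ independent of the tuple, and compactness of the domain upgrades local to uniform. Replacing your region-gluing argument with this strong-concavity comparison (or else supplying the missing convexity/finite-crossing arguments for your regions) is what is needed to make Part~\ref{part:app_thirdLemmaLipschitz} rigorous.
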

\begin{proof}
    To prove Part~\ref{part:app_firstLemmaLipschitz}, we apply a corollary to Theorem 4.1 in~\citep{bonnans1998optimization}. 
    This corollary states that if $\hat{\ell}$ is continuously differentiable in $\bL$ (which it clearly is) and has a unique maximizer $\st$ (which is guaranteed by Assumption~\ref{asmp:app_uniqueness}), then $\nabla_{\bL}\min_{\bs}\hat{\ell}(\bL, \bs, \alphat, \Ht, \gt)$ exists and is equal to $\nabla_{\bL}\hat{\ell}(\bL, \bs^*, \alphat, \Ht, \gt)$, given by ${\Big[\!\!-2\Ht\bs^* + \gt\Big]{\bs^*}^\top}$. Part~\ref{part:app_secondLemmaLipschitz} follows since by Assumption~\ref{asmp:app_iid} and Claim~\ref{claim:magnitudes} the tuple $\left(\Ht, \gt, \alphat\right)$ is drawn from a distribution with compact support. 
    
    To prove Part~\ref{part:app_thirdLemmaLipschitz}, we first show that $\beta$ is Lipschitz in $\bL$ with constant independent of $\alphat$, $\Ht$, and $\gt$. Part~\ref{part:app_thirdLemmaLipschitz} will follow due to the form of the gradient of $g$ with respect to $\bL$. The function $\beta$ is continuous in its arguments since $\hat{\ell}$ is continuous in its arguments and by Assumption~\ref{asmp:app_uniqueness} has a unique maximizer. Next, we define $\rho\Big(\bL, \Ht, \gt, \alphat, j\Big) = \bl_{j}^\top\Big[2\Ht\Big(\bL\bs^{*}-\alphat\Big)+\gt\Big]$, where  $\bl_{j}$ is the $j-$th column of $\bL$. Following the argument of \citet{fuchs2005recovery}, we reach the following conditions:
\begin{align}
    \nonumber\Big|\rho\Big(\bL, \Ht, \gt, \alphat, j\Big)\Big| =& \mu \Longleftrightarrow \bs_{j}^{*}\neq0\\
    \Big|\rho\Big(\bL, \Ht, \gt, \alphat, j\Big)\Big| <& \mu \Longleftrightarrow \bs_{j}^{*}=0\enspace. 
    \label{equ:rhoLeqMu}
\end{align}
Let $\gamma$ be the set of indices $j$ such that $\Big|\rho\Big(\bL, \Ht, \gt, \alphat, j\Big)\Big| = \mu$. Since $\rho$ is continuous in $\bL$, $\Ht$, $\gt$, and $\alphat$, there must exist an open neighborhood $V$ around $\Big(\bL, \Ht, \gt, \alphat\Big)$ such that for all $\Big(\bL^\prime, {\Ht}^\prime, {\gt}^\prime, {\alphat}^\prime\Big)\in V$ and $j\notin\gamma$, $\Big|\rho\Big(\bL^\prime, {\Ht}^\prime, {\gt}^\prime, {\alphat}^\prime, j\Big)\Big| < \mu$. By Equation~\ref{equ:rhoLeqMu}, we conclude that $\beta\Big(\bL^\prime, {\Ht}^\prime, {\gt}^\prime, {\alphat}^\prime\Big)_{j}=0, \forall j\notin\gamma$.

Next, we define a new objective:
\begin{align*}
    \bar{\ell}(\bL_{\gamma},\bs_{\gamma},\balpha,\bH,\bg) =& \|\balpha-\bL_{\gamma}\bs_{\gamma}\|_{\bH}^2 + \bg^\top(\bL_{\gamma}\bs_{\gamma}-\balpha)- \mu\|\bs_{\gamma}\|_1\enspace.
\end{align*}
By Assumption~\ref{asmp:app_uniqueness}, $\bar{\ell}$ is strictly concave with a Hessian upper-bounded by $-2\kappa$. We can conclude that:
\begin{align}
    \nonumber
    \bar{\ell}\Big(\!\bL_{\gamma}, \beta\Big(\!\bL, {\alphat}, {\Ht},&\, {\gt}\!\Big)_{\gamma}, \alphat, \Ht, \gt\!\Big)-\bar{\ell}\Big(\!\bL_{\gamma}, \beta\Big(\!\bL^\prime, {\alphat}^\prime, {\Ht}^\prime, {\gt}^\prime\!\Big)_{\gamma}, \alphat, \Ht, \gt\!\Big)\\
        \geq& \kappa\Big\|\beta\Big(\bL^\prime, {\alphat}^\prime, {\Ht}^\prime, {\gt}^\prime\Big)_{\gamma}
        \label{equ:ellBarDistance}
            - \beta\Big(\bL, {\alphat}, {\Ht}, {\gt}\Big)_{\gamma} \Big\|_2^2\enspace.
\end{align}

On the other hand, by Assumption~\ref{asmp:app_iid} and Claim~\ref{claim:magnitudes}, $\bar{\ell}$ is Lipschitz in its second argument with constant ${e_1\|\bL_{\gamma} - \bL_{\gamma}^\prime\|_\mathsf{F}} + {e_2 \|\balpha-\balpha^\prime\|_2} + {e_3\|\bH-\bH^\prime\|_\mathsf{F}}+{e_4\|\bg-\bg^\prime\|_2}$, where $e_{1\text{--}4}$ are all constants independent of any of the arguments. Combining this with Equation~\ref{equ:ellBarDistance}, we obtain:
\begin{align*}
    \Big\|\beta\Big(\bL^\prime, {\alphat}^\prime,&\, {\Ht}^\prime, {\gt}^\prime\Big) - \beta\Big(\bL, {\alphat}, {\Ht}, {\gt}\Big) \Big\|=\\
        &\Big\|\beta\Big(\bL^\prime, {\alphat}^\prime, {\Ht}^\prime, {\gt}^\prime\Big)_{\gamma}- \beta\Big(\bL, {\alphat}, {\Ht}, {\gt}\Big)_{\gamma} \Big\|\\
        \leq& \frac{e_1\|\bL_{\gamma} - \bL_{\gamma}^\prime\|_\mathsf{F} + e_2 \|\alphat-{\alphat}^\prime\|_2}{\kappa}+ \frac{e_3\|\Ht-{\Ht}^\prime\|_\mathsf{F} + e_4\|\gt-{\gt}^\prime\|_2}{\kappa}\enspace.
\end{align*}
Therefore, $\beta$ is locally Lipschitz. Since the domain of $\beta$ is compact by Assumption~\ref{asmp:app_iid} and Claim~\ref{claim:magnitudes}, this implies that $\beta$ is uniformly Lipschitz, and we can conclude that $\nabla g$ is Lipschitz as well.
\end{proof}

\newpage

\begin{proposition}
\label{prop:app_Convergence}~
    \vspace{-2.3em}
    \begin{multicols}{2}
    \begin{enumerate}
    \itemsep-0.5em
        \addtolength{\itemindent}{4.9em}
        \item \label{part:app_firstPropositionConvergence} $\hat{g}_{t}(\bL_{t})$ converges a.s.
        \addtolength{\itemindent}{0em}
        \item \mbox{\label{part:app_secondPropositionConvergence}$g_{t}(\bL_{t}) -\hat{g}_{t}(\bL_{t})$ converges a.s. to 0}
        \addtolength{\itemindent}{-2.8em}
        \item \label{part:app_thirdPropositionConvergence} {$g_{t}(\bL_{t}) - \hat{g}(\bL_{t})$ converges a.s. to 0}
        \item \label{part:app_fourthPropositionConvergence} $g(\bL_{t})$ converges a.s.
        \addtolength{\itemindent}{-1.6em}
    \end{enumerate}
    \end{multicols}
\end{proposition}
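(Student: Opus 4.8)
The plan is to adapt the online-dictionary-learning convergence argument of \citet{ruvolo2013ella} to our maximization setting by treating $u_{t}=\hat{g}_{t}(\Lt)$ as a stochastic process and showing it is a quasi-martingale. Throughout I would rely on Claim~\ref{claim:magnitudes} (all iterates lie in a compact set), Proposition~\ref{prop:app_Stability} (stability, $\Lt-\LtMinusOne=O(1/t)$), and Lemma~\ref{lma:app_Lipschitz} (Lipschitz gradients), but the linchpin is Lemma~\ref{lma:app_EqualityOfMaximizers}, which equates the value of $\hat{\ell}$ at the \emph{stored} code $\bs^{(t+1)}$ with the freshly re-optimized maximum $\max_{\bs}\hat{\ell}(\Lt,\bs,\balpha^{(t+1)},\bH^{(t+1)},\bg^{(t+1)})$; this is precisely what absorbs the non-optimality of the $\alphat$'s and the mismatch between $\ell$ and $\hat{\ell}$. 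Because we maximize rather than minimize, all signs are reversed relative to \citet{ruvolo2013ella}: the surrogate obeys $\hat{g}_{t}(\bL)\le g_{t}(\bL)$ for every $\bL$, so it is the \emph{negative} variations of $u_{t}$ that I must control.

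For Part~\ref{part:app_firstPropositionConvergence}, I would write
\[
u_{t+1}-u_{t} = \underbrace{\big[\hat{g}_{t+1}(\bL_{t+1})-\hat{g}_{t+1}(\Lt)\big]}_{(\mathrm{i})} + \underbrace{\big[\hat{g}_{t+1}(\Lt)-\hat{g}_{t}(\Lt)\big]}_{(\mathrm{ii})}\enspace.
\]
Term $(\mathrm{i})$ is nonnegative because $\bL_{t+1}$ maximizes $\hat{g}_{t+1}$, and since the Hessian of $\hat{g}_{t+1}$ is bounded with largest eigenvalue $\le-2\lambda$, it is $O\!\big(\|\bL_{t+1}-\Lt\|_{\mathsf F}^{2}\big)=O(1/t^{2})$ by Proposition~\ref{prop:app_Stability}, hence summable. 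Expanding the averaging recursion of $\hat{g}_{t+1}$ and invoking Lemma~\ref{lma:app_EqualityOfMaximizers} to replace $\hat{\ell}(\Lt,\bs^{(t+1)},\dots)$ by the corresponding maximum, term $(\mathrm{ii})$ becomes $\tfrac{1}{t+1}\big(\tilde{f}_{t+1}(\Lt)-\hat{g}_{t}(\Lt)\big)$, where $\tilde{f}_{t+1}(\Lt)=\max_{\bs}\hat{\ell}(\Lt,\bs,\balpha^{(t+1)},\bH^{(t+1)},\bg^{(t+1)})-\lambda\|\Lt\|_{\mathsf F}^{2}$ is the instantaneous true cost. Taking conditional expectations and using $\hat{g}_{t}(\Lt)\le g_{t}(\Lt)$ bounds the negative part of the expected increment by $\tfrac{1}{t+1}\big(\hat{g}_{t}(\Lt)-g(\Lt)\big)^{+}\le\tfrac{1}{t+1}\|g_{t}-g\|_{\infty}$; Donsker's theorem~\citep{billingsley1968convergence} supplies $\mathbb{E}\|g_{t}-g\|_{\infty}=O(1/\sqrt{t})$, so the expected negative variations sum to $O(\sum_{t}t^{-3/2})<\infty$, and the quasi-martingale convergence theorem yields that $u_{t}$ converges a.s.

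For Part~\ref{part:app_secondPropositionConvergence}, I would sum the conditional expected increments (the martingale-difference part sums a.s.\ by summable conditional variances, and the telescoped drift converges since $u_{t}$ does) and subtract the a.s.\ absolutely convergent series $\sum_{t}\tfrac{1}{t+1}(g-g_{t})(\Lt)$, isolating $\sum_{t}\tfrac{1}{t+1}b_{t}<\infty$ a.s.\ with $b_{t}=g_{t}(\Lt)-\hat{g}_{t}(\Lt)\ge0$. Since $|b_{t+1}-b_{t}|=O(1/t)$ (from Proposition~\ref{prop:app_Stability} and the bounded Lipschitz constants of $g_{t}$ and $\hat{g}_{t}$), a standard lemma on positive sequences forces $b_{t}\to0$, which is Part~\ref{part:app_secondPropositionConvergence}. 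Part~\ref{part:app_thirdPropositionConvergence} then follows directly from the Glivenko--Cantelli theorem for $\phi$-mixing sequences~\citep{baklanov2006strong} under Assumptions~\ref{asmp:app_iid}--\ref{asmp:app_uniqueness}, giving $\sup_{\bL}|g_{t}(\bL)-g(\bL)|\to0$ a.s., so in particular $g_{t}(\Lt)-g(\Lt)\to0$. Finally, Part~\ref{part:app_fourthPropositionConvergence} is the decomposition $g(\Lt)=\hat{g}_{t}(\Lt)+\big(g_{t}(\Lt)-\hat{g}_{t}(\Lt)\big)+\big(g(\Lt)-g_{t}(\Lt)\big)$, whose three pieces converge a.s.\ by Parts~\ref{part:app_firstPropositionConvergence}, \ref{part:app_secondPropositionConvergence}, and \ref{part:app_thirdPropositionConvergence}, respectively.

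The main obstacle I anticipate is that the $\alphat$'s are \emph{not} i.i.d.\ (only $\phi$-mixing, Assumption~\ref{asmp:app_mixing}), so the clean identity $\mathbb{E}[\tilde{f}_{t+1}(\Lt)\mid\mathcal{F}_{t}]=g(\Lt)$ used in the i.i.d.\ analysis no longer holds exactly, and naive one-step conditioning would leave a non-vanishing mixing bias. The fix is to route every law-of-large-numbers step through the \emph{whole} empirical process $g_{t}$ rather than single-step conditionals, replacing the i.i.d.\ uniform LLN with the Glivenko--Cantelli theorem for dependent sequences (for the a.s.\ statements in Parts~\ref{part:app_thirdPropositionConvergence}--\ref{part:app_fourthPropositionConvergence}) and the i.i.d.\ CLT rate with Donsker's functional CLT (for the $O(1/\sqrt{t})$ bound driving the quasi-martingale summability in Part~\ref{part:app_firstPropositionConvergence}). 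Verifying the regularity needed to apply these theorems—uniform boundedness, measurability, and a uniform Lipschitz constant for the function class $\{\bL\mapsto\max_{\bs}\hat{\ell}(\bL,\bs,\cdot)\}$—is where Claim~\ref{claim:magnitudes} and Lemma~\ref{lma:app_Lipschitz} do the heavy lifting.
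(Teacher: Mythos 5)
Your proposal is correct and follows essentially the same path as the paper's proof: treat $u_t=\hat{g}_{t}(\Lt)$ as a quasi-martingale, use Lemma~\ref{lma:app_EqualityOfMaximizers} to rewrite $\hat{g}_{t+1}(\Lt)$ via the re-optimized maximum, bound the negative expected variations by $\|g_{t}-g\|_{\infty}/(t+1)$ with the Donsker corollary supplying the $O(1/\sqrt{t})$ rate that makes them summable, then extract $\sum_{t}\frac{g_{t}(\Lt)-\hat{g}_{t}(\Lt)}{t+1}<\infty$ and apply the positive-sequence lemma for Part~\ref{part:app_secondPropositionConvergence}, finishing Parts~\ref{part:app_thirdPropositionConvergence} and~\ref{part:app_fourthPropositionConvergence} with Glivenko--Cantelli and the three-term decomposition of $g(\Lt)$. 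Your minor additions---the deterministic $O(1/t^{2})$ bound on $\hat{g}_{t+1}(\bL_{t+1})-\hat{g}_{t+1}(\Lt)$, the more careful handling of the $-\lambda\|\Lt\|_{\mathsf{F}}^{2}$ term in the averaging recursion, and the explicit flagging of the $\phi$-mixing bias in one-step conditioning (which the paper glosses over)---are refinements that do not change the structure of the argument.
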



\begin{proof}
We begin by defining the stochastic process $u_t = \hat{g}_{t}(\bL)$.
The outline of the proof is to show that this process is a quasi-martingale and by a theorem by \citet{fisk1965quasi}, it converges almost surely. 
\begin{align}
    \nonumber
    u_{t+1} - u_{t} =& \hat{g}_{t+1}(\bL_{t+1}) - \hat{g}_{t}(\bL_{t}) = \hat{g}_{t+1}(\bL_{t+1}) - \hat{g}_{t+1}(\bL_{t})+ \hat{g}_{t+1}(\bL_{t}) - \hat{g}_{t}(\bL_{t})\\
        \nonumber
        =& (\hat{g}_{t+1}(\bL_{t+1}) - \hat{g}_{t+1}(\bL_{t}))+ \frac{g_{t}(\bL_{t}) - \hat{g}_{t}(\bL_{t})}{t+1}\\
            &+ \frac{\max_{\bs}\hat{\ell}(\bL_{t}, \bs, \balpha^{(t+1)}, \bH^{(t+1)}, \bg^{(t+1)})}{t+1}- \frac{ g_{t}(\bL_{t})}{t+1}
        \enspace, \label{equ:uTSequence}
\end{align}
where we made use of the fact that:
\begin{align*}
    \hat{g}_{t+1}(\bL_{t}) =& \frac{\hat{\ell}\Big(\bL_{t}, \bs^{(t+1)}, \balpha^{(t+1)}, \bH^{(t+1)}, \bg^{(t+1)}\Big) }{t+1} + \frac{t}{t+1}\hat{g}_{t}(\bL_{t})\\
        =& \frac{\max_{\bs}\hat{\ell}\Big(\bL_{t}, \bs, \balpha^{(t+1)}, \bH^{(t+1)}, \bg^{(t+1)}\Big)}{t+1}  + \frac{t}{t+1}\hat{g}_{t}(\bL_{t})\enspace,
\end{align*}
where the second equality holds by Lemma~\ref{lma:app_EqualityOfMaximizers}.

We now need to show that the sum of positive and negative variations in Equation~\ref{equ:uTSequence} are bounded. By an argument similar to a lemma by \citet{bottou1998online}, the sum of positive variations of $u_t$ is bounded, since $\hat{g}$ is upper-bounded by Assumption~\ref{asmp:app_magnitude}. Therefore, it suffices to show that the sum of negative variations is bounded. The first term on the first line of Equation~\ref{equ:uTSequence} is guaranteed to be positive since $\bL_{t+1}$ maximizes $\hat{g}_{t+1}$. Additionally, since $g_{t}$ is always at least as large as $\hat{g}_{t}$, the second term on the first line is also guaranteed to be positive. Therefore, we focus on the second line.
\begin{align}
    \nonumber
    \mathbb{E}[u_{t+1}-u_{t} \mid \mathcal{I}_{t}] \geq& \frac{\mathbb{E}\Big[\max_{\bs}\hat{\ell}\Big(\bL_{t}, \bs, \balpha^{(t+1)}, \bH^{(t+1)}, \bg^{(t+1)}\Big)\mid\mathcal{I}_t\Big]}{t+1} - \frac{g_t(\Lt)}{t+1}\\
    \nonumber
    =& \frac{g(\Lt)-g_{t}(\Lt)}{t+1} 
    \geq -\frac{\|g - g_{t}\|_\infty}{t+1}\enspace,
\end{align}
where $\mathcal{I}_{t}$ represents all the $\alphathat$'s, $\Hthat$'s, and $\gthat$'s up to time $t$. Hence, showing that $\sum_{t=1}^{\infty}\frac{\|g-g_t\|_\infty}{t+1}<\infty$ will prove that $u_t$ is a quasi-martingale that converges almost surely.

In order to prove this, we apply the following corollary of the Donsker theorem~\citep{van2000asymptotic}:

\hfill\begin{adjustwidth}{1cm}{1cm}
    Let $\mathcal{F} = \{f_{\btheta} : \mathcal{X} \mapsto \Reals, \btheta\in \bm{\Theta}\}$ be a set of measurable functions indexed by a bounded subset $\bm{\Theta}$ of $\Reals^{d}$. Suppose that there exists a constant $K$ such that:
    \begin{equation*}
        |f_{\btheta_1}(x)- f_{\btheta_2}(x)| \leq K \|\btheta_1-\btheta_2\|_2
    \end{equation*}
    for every $\btheta_1,\btheta_2\in\bm{\Theta}$ and $x\in \mathcal{X}$. Then, $\mathcal{F}$ is P-Donsker and for any $f\in\mathcal{F}$, we define:
    \begin{align*}
        \mathbb{P}_n f =& \frac{1}{n} \sum_{i=1}^n f(X_i)\\
        \mathbb{P}f =& \mathbb{E}_{X} [f(X)]\\
        \mathbb{G}_n f =& \sqrt{n} (\mathbb{P}_n f -\mathbb{P}f)\enspace.
    \end{align*}
    If $\mathbb{P}f^2\leq\delta^2$ and $\|f\|_\infty<B$ and the random elements are Borel measurable, then:
    \begin{equation*}
        \mathbb{E}[\sup_{f\in\mathcal{F}} |\mathbb{G}_n f|] = O(1)\enspace.
    \end{equation*}
\end{adjustwidth}
In order to apply this corollary to our analysis, consider a set of functions $\mathcal{F}$ indexed by $\bL$, given by $f_{\bL}\big(\Ht,\gt,\alphat\big)=\max_{\bs} \hat{\ell}\big(\bL,\bs,\alphat,\Ht,\gt\big)$, whose domain is all possible tuples $\big(\Ht, \gt, \alphat\big)$. The expected value of $f^2$ is bounded for all $f\in\mathcal{F}$ since  $\hat{\ell}$ is bounded by Claim~\ref{claim:magnitudes}. Second, $\|f\|_\infty$ is bounded given Claim~\ref{claim:magnitudes} and Assumption~\ref{asmp:app_iid}. Finally, by Assumptions~\ref{asmp:app_iid} and \ref{asmp:app_mixing}, the corollary applies to the tuples $\big(\Ht, \gt, \alphat\big)$ \citep{billingsley1968convergence}. Therefore, we can state that:
\begin{align*}
    \mathbb{E} \Bigg[\sqrt{t} \Bigg\|\Bigg(\frac{1}{t} \sum_{\that=1}^{t} \max_{\bs} \hat{\ell}\Big(\bL,\bs,\alphathat,\Hthat,\gthat\Big)\Bigg) -&\, \mathbb{E}\Big[ \max_{\bs}\hat{\ell}\Big(\bL,\bs,\alphathat,\Hthat,\gthat\Big)\Big]\Bigg\|_\infty\Bigg] = O(1)\\
    \Longrightarrow& \mathbb{E}[\|g_{t}(\bL) - g(\bL) \|_\infty] = O\left(\frac{1}{\sqrt{t}}\right)\enspace.
\end{align*}
Therefore, $\exists\ c_3 \in \Reals$ such that $\mathbb{E}[\|g_{t}-g\|_\infty]<\frac{c_3}{\sqrt{t}}$:
\begin{align*}
    \sum_{t=1}^\infty \mathbb{E}\Big[\mathbb{E}[u_{t+1}-u_{t}\mid\mathcal{I}_t]^{-}\Big] \geq& \sum_{t=1}^\infty -\frac{\mathbb{E}[\|g_t - g\|_\infty]}{t+1}
        > \sum_{t=1}^\infty -\frac{c_3}{t^{\frac{3}{2}}} = - O(1)\enspace,
\end{align*}
where $i^{-}\!=\!\min(i, 0)$. This shows that the sum of negative variations of $u_t$ is bounded, so $u_{t}$ is a quasi-martingale and thus converges almost surely~\citep{fisk1965quasi}. This proves Part~\ref{part:app_firstPropositionConvergence} of Proposition~\ref{prop:app_Convergence}.

Next, we show that $u_t$ being a quasi-martingale implies the almost sure convergence of the fourth line of Equation~\ref{equ:uTSequence}. To see this, we note that since $u_t$ is a quasi-martingale and the sum of its positive variations is bounded, and since the term on the fourth line of Equation~\ref{equ:uTSequence}, $\frac{g_{t}(\Lt)-\hat{g}_{t}(\Lt)}{t+1}$, is guaranteed to be positive, the sum of that term from 1 to infinity must be bounded:
\begin{equation}
    \label{equ:PropositionConvergenceSecondPart}
    \sum_{t=1}^\infty \frac{g_{t}(\Lt)-\hat{g}_{t}(\Lt)}{t+1} < \infty\enspace.
\end{equation}
To complete the proof of Part~\ref{part:app_secondPropositionConvergence} of Proposition~\ref{prop:app_Convergence}, consider the following lemma: Let $a_n, b_n$ be two real sequences such that for all $n$, $a_n\geq0, b_n\geq0, \sum_{j=1}^\infty a_j=\infty, \sum_{j=1}^\infty a_j b_j<\infty, \exists\ K > 0$ such that $|b_{n+1} - b_n| < Ka_n$. Then, $\lim_{n\to\infty} b_n=0$. If we define $a_t=\frac{1}{t+1}$ and $b_t=g_t(\Lt) - \hat{g}_t(\Lt)$, then clearly these are both positive sequences and $\sum_{t=1}^{\infty}a_t=\infty$. By Equation~\ref{equ:PropositionConvergenceSecondPart}, $\sum_{t=1}^\infty a_n b_n < \infty$. Finally, since $g_t$ and $\hat{g}_t$ are bounded and Lipschitz with constant independent of $t$ and $\bL_{t+1}-\Lt=O\left(\frac{1}{t}\right)$, we have all of the assumptions verified, which implies that $g_{t} - \hat{g}_{t}$ converges a.s. to 0.

By Part~\ref{part:app_secondPropositionConvergence} and the Glivenko-Cantelli theorem, $\lim_{t\to\infty}\|g - g_{t}\|_\infty=0$, which implies that $g$ must converge almost surely. By transitivity, $\lim_{t\to\infty} g(\Lt) - \hat{g}_{t}(\Lt)=0$, showing Parts~\ref{part:app_thirdPropositionConvergence} and \ref{part:app_fourthPropositionConvergence}.
\end{proof}

\begin{proposition}
\label{prop:app_StationaryPoint}
    The distance between $\Lt$ and the set of all stationary points of $g$ converges a.s. to 0.
\end{proposition}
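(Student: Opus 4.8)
The plan is to show that $\nabla_{\bL} g(\Lt)$ converges almost surely to $\bm{0}$, and then to upgrade this to convergence of $\Lt$ to the stationary set via a compactness argument. The starting observation is that $\Lt$ is defined as the unconstrained maximizer of the smooth, strongly concave surrogate $\hat{g}_{t}$ (Equation~\ref{equ:UpdateL}; recall its Hessian is bounded above by $-2\lambda\bI$), so first-order optimality gives $\nabla_{\bL}\hat{g}_{t}(\Lt) = \bm{0}$. Hence it suffices to prove $\nabla_{\bL}\big(g - \hat{g}_{t}\big)(\Lt) \to \bm{0}$, and I would split this difference through the empirical objective as $g - \hat{g}_{t} = (g - g_{t}) + (g_{t} - \hat{g}_{t})$, controlling each piece separately.

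For the second piece, the key point is that $b_{t} := g_{t} - \hat{g}_{t} \ge 0$ everywhere, since $g_{t}$ maximizes over each $\sthat$ whereas $\hat{g}_{t}$ plugs in the fixed coefficients found during training. Following Lemma~\ref{lma:app_Lipschitz} together with Claim~\ref{claim:magnitudes}, I would argue that $\nabla_{\bL} b_{t}$ is Lipschitz with a constant $L$ independent of $t$ (both $g_{t}$ and $\hat{g}_{t}$ are averages of terms with uniformly bounded, uniformly Lipschitz gradients). A nonnegative $L$-smooth function obeys the gradient-domination bound $\|\nabla_{\bL} b_{t}(\Lt)\|^2 \le 2L\, b_{t}(\Lt)$: apply the descent inequality at $\Lt - \tfrac{1}{L}\nabla_{\bL} b_{t}(\Lt)$ and use $b_{t}\ge 0$. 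Since $b_{t}(\Lt) = g_{t}(\Lt) - \hat{g}_{t}(\Lt) \to 0$ a.s.\ by Proposition~\ref{prop:app_Convergence}, Part~\ref{part:app_secondPropositionConvergence}, this forces $\nabla_{\bL} b_{t}(\Lt) \to \bm{0}$, i.e.\ $\nabla_{\bL} g_{t}(\Lt) \to \bm{0}$.

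For the first piece I would exploit uniform convergence of function values. The Donsker estimate used in the proof of Proposition~\ref{prop:app_Convergence} already yields $\|g_{t} - g\|_\infty = O(t^{-1/2})$ a.s., and both $g_{t}$ and $g$ have gradients Lipschitz with a common constant. A standard Taylor/descent-lemma argument then transfers uniform closeness of values into uniform closeness of gradients: for any unit direction and step $\delta$, the second-order remainders are $O(\delta^2)$, so optimizing over $\delta$ gives $\|\nabla_{\bL} g_{t} - \nabla_{\bL} g\|_\infty = O\big(\|g_{t} - g\|_\infty^{1/2}\big) \to 0$. Evaluating at $\Lt$ and combining with the previous paragraph yields $\nabla_{\bL} g(\Lt) \to \bm{0}$ a.s.

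Finally, I would conclude by compactness: $\Lt$ remains in the compact set of matrices obeying Claim~\ref{claim:magnitudes}, and $\nabla_{\bL} g$ is continuous by Lemma~\ref{lma:app_Lipschitz}, so every limit point of $\{\Lt\}$ satisfies $\nabla_{\bL} g = \bm{0}$ and is therefore stationary; a subsequence argument then shows the distance from $\Lt$ to the stationary set tends to $0$. I expect the main obstacle to be the transfer from convergence of objective \emph{values} to convergence of \emph{gradients}. This is exactly where the three-function structure matters: one cannot compare $\hat{g}_{t}$ to $g$ directly, because only $g_{t} - \hat{g}_{t}$ is sign-definite (which is what enables the smoothness bound), so the argument must be routed through $g_{t}$. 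Both the $t$-independence of the Lipschitz constants (from Claim~\ref{claim:magnitudes} and Assumption~\ref{asmp:app_iid}) and the $O(t^{-1/2})$ uniform rate are essential for the two halves to go through.
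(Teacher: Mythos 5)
Your proposal is correct and, while it shares the paper's overall skeleton (route the argument through $g_t$ using $g_t \ge \hat{g}_t$, invoke the value convergence of Proposition~\ref{prop:app_Convergence}, use first-order optimality $\nabla\hat{g}_t(\Lt)=\bm{0}$, and rely on $t$-independent gradient Lipschitz constants), your key step converting convergence of \emph{values} into convergence of \emph{gradients} is genuinely different. The paper perturbs by an arbitrary nonzero matrix $\bm{U}$ with vanishing steps $h_t$, takes first-order Taylor expansions of $g_t$ and $\hat{g}_t$ on both sides of the inequality $g_t(\Lt+\bm{U})\ge\hat{g}_t(\Lt+\bm{U})$, cancels the zeroth-order terms using $g(\Lt)-\hat{g}_t(\Lt)\to 0$, and deduces $\lim_t\bm{U}^\top\nabla g(\Lt) \ge \lim_t \bm{U}^\top\nabla\hat{g}_t(\Lt)$ for every $\bm{U}$, hence equality of the gradient limits. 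You instead split $g-\hat{g}_t=(g-g_t)+(g_t-\hat{g}_t)$ and dispatch each piece with a standard, self-contained inequality: gradient domination $\|\nabla b_t(\Lt)\|^2\le 2L\,b_t(\Lt)$ for the nonnegative smooth difference $b_t=g_t-\hat{g}_t$, and the value-to-gradient interpolation bound $\|\nabla(g_t-g)\|_\infty=O\big(\|g_t-g\|_\infty^{1/2}\big)$. Your route buys rigor: the paper's Taylor-with-limits bookkeeping is informal, whereas your two inequalities are clean, and you additionally supply the compactness/subsequence argument that upgrades $\nabla g(\Lt)\to\bm{0}$ to convergence of the distance to the stationary set --- a step the paper asserts without proof. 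One correction: the Donsker corollary yields $\mathbb{E}[\|g_t-g\|_\infty]=O(t^{-1/2})$, an expectation bound, not an almost-sure rate; what holds almost surely (via the Glivenko--Cantelli theorem, which is what the paper invokes) is $\|g_t-g\|_\infty\to 0$ with no rate. This does not break your proof --- contrary to your closing remark, the rate is not essential, since qualitative uniform convergence suffices for the interpolation step --- but the attribution should be fixed. (Both of your inequalities also require the Lipschitz constants to hold on a slight enlargement of the compact set of Claim~\ref{claim:magnitudes}, since the descent step and the $\delta$-perturbation may exit it; this is harmless because $\nabla b_t(\Lt)$ is uniformly bounded, and it matches the level of informality in the paper's own argument.)
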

\begin{proof}
First, $\nabla_{\bL}\hat{g}_{t}$ is Lipschitz with a constant independent of $t$, since the gradient of $\hat{g}_{t}$ is linear, $\st$, $\Ht$, $\gt$, and $\alphat$ are bounded, and the summation in $\hat{g}_{t}$ is normalized by $t$. Next, we define an arbitrary non-zero matrix $\bm{U}$ of the same dimensionality as $\bL$. Since $g_{t}$ upper-bounds $\hat{g}_{t}$, we have:
\begin{align*}
    g_{t}(\Lt+\bm{U}) \geq& \hat{g}_{t}(\Lt+\bm{U}) \Longrightarrow
    \lim_{t\to\infty} g(\Lt+\bm{U}) \geq \lim_{t\to\infty} \hat{g}_{t}(\Lt+\bm{U})\enspace,
\end{align*}
where we used the fact that $\lim_{t\to\infty}g_{t}=\lim_{t\to\infty}g$. Let $h_{t}>0$ be a sequence of positive real numbers that converges to $0$. If we take the first-order Taylor expansion on both sides of the inequality and use the fact that $\nabla g$ and $\nabla \hat{g}$ are both Lipschitz with constant independent of $t$, we get:
\begin{align*}
    \lim&_{t\to\infty} g_{t}(\Lt) + \tr\!\big(h_t\bm{U}^\top\nabla g_{t}(\Lt)\big) + O(h_t\bm{U}) \geq \lim_{t\to\infty} \hat{g}_{t}(\Lt) + \tr\!\big(h_t\bm{U}^\top\nabla \hat{g}_{t}(\Lt)\big) + O(h_t\bm{U})\enspace\!.
\end{align*}
Since $\lim_{t\to\infty} g(\Lt) - \hat{g}(\Lt)=0$ a.s. and $\lim_{t\to\infty}h_t=0$, we have:
\begin{align*}
    \lim_{t\to\infty} \Bigg(\frac{1}{\|\bm{U}\|_\mathsf{F}}\bm{U}^\top&\nabla g(\Lt)\Bigg) \geq \lim_{t\to\infty} \Bigg(\frac{1}{\|\bm{U}\|_\mathsf{F}}\bm{U}^\top\nabla \hat{g}(\Lt)\Bigg) \enspace.
\end{align*}
Since this inequality has to hold for every $\bm{U}$, we require that $\lim_{t\to\infty}\nabla g(\Lt) = \lim_{t\to\infty} \nabla\hat{g}_{t}(\Lt)$. Since $\Lt$ minimizes $\hat{g}_{t}$, we require that $\nabla\hat{g}_{t}(\Lt)=\bm{0}$. This implies that $\nabla g(\Lt)=\bm{0}$, which is a sufficient first-order condition for $\Lt$ to be stationary point of $g$.
\end{proof}

\section{Experimental setting}
\label{sec:app_ExperimentalSetting}

\begin{table*}[t!]
    \centering
    \caption{Summary of hyper-parameters. The first digit in EWC versions differentiates variants with shared $\sigma$ (1) and task-specific $\sigma$ (2), and the second digit differentiates between Husz{\'a}r regularization (1), EWC regularization scaled by $\frac{1}{t-1}$ (2), and the original EWC regularization (3). The boldfaced version of EWC was used for our experiments in the paper.}
    \label{tab:hyperparameters}
    \addtolength{\tabcolsep}{-0.05em}
    \begin{tabular}{@{}l|c|c|c|c|c|c|c|c@{}}
    & Hyper-parameter & HC-G & HC-BP & Ho-G & Ho-BP & W-G & W-B & MT10/50\\
    \hline\hline
    \multirow{5}{*}{NPG} & \# iterations & $50$ & $50$ & $100$ & $100$ & $200$ & $200$ & $200$\\
    & \# trajectories / iter & $10$ & $10$ & $50$ & $50$ & $50$ & $50$ & $50$\\
    & step size & $0.5$ & $0.5$ & $0.005$ & $0.005$ & $0.05$ & $0.05$ & $0.005$\\
    & $\lambda$ (GAE) & $0.97$ & $0.97$ & $0.97$ & $0.97$ & $0.97$ & $0.97$ &$0.97$\\
    & $\gamma$ (MDP) & $0.995$ & $0.995$ & $0.995$ & $0.995$ & $0.995$ & $0.995$ & $0.995$\\
    \hline
    \multirow{3}{*}{\ouralgo{}} & $\lambda$ & $1e\!-\!5$ & $1e\!-\!5$ & $1e\!-\!5$ & $1e\!-\!5$ & $1e\!-\!5$ & $1e\!-\!5$ & $1e\!-\!5$\\
    & $\mu$ & $1e\!-\!5$ & $1e\!-\!5$ & $1e\!-\!5$ & $1e\!-\!5$ & $1e\!-\!5$ & $1e\!-\!5$ & $1e\!-\!5$\\
    & $k$ & $5$ & $5$ & $5$ & $5$ & $5$& $10$ & $3$\\
    \hline
    \multirow{3}{*}{PG-ELLA} & $\lambda$ & $1e\!-\!5 $& $1e\!-\!5$ & $1e\!-\!5$ & $1e\!-\!5$ & $1e\!-\!5$ & $1e\!-\!5$ & $1e\!-\!5$\\
    & $\mu$ & $1e\!-\!5$ & $1e\!-\!5$ & $1e\!-\!5$ & $1e\!-\!5$ & $1e\!-\!5$ & $1e\!-\!5$ & $1e\!-\!5$\\
    & $k$ & $5$ & $5$ & $5$ & $5$ & $5$& $10$ & $3$\\
    \hline
    EWC 1, 1& $\lambda$ & $1e\!-\!3$ & $1e\!-\!6$ & $1e\!-\!7$ & $1e\!-\!7$ & $1e\!-\!4$ & $1e\!-\!5$ &---\\
    EWC 1, 2 & $\lambda$ & $1e\!-\!3$ & $1e\!-\!4$ & $1e\!-\!7$ & $1e\!-\!3$ & $1e\!-\!3$ & $1e\!-\!7$ &---\\
    \textbf{EWC 1, 3} & $\lambda$ & $1e\!-\!6$ & $1e\!-\!6$ & $1e\!-\!7$ & $1e\!-\!4$ & $1e\!-\!7$ & $1e\!-\!7$ &---\\
    EWC 2, 1 & $\lambda$ & $1e\!-\!6$ & $1e\!-\!5$ & $1e\!-\!6$ & $1e\!-\!7$ & $1e\!-\!4$ & $1e\!-\!4$ &---\\
    EWC 2, 2 & $\lambda$ & $1e\!-\!4$ & $1e\!-\!3$ & $1e\!-\!5$ & $1e\!-\!7$ & $1e\!-\!4$ & $1e\!-\!6$ &---\\
    EWC 2, 3 & $\lambda$ & $1e\!-\!4$ & $1e\!-\!4$ & $1e\!-\!6$ & $1e\!-\!7$ & $1e\!-\!4$ & $1e\!-\!4$ & $1e\!-\!7$\\
    \end{tabular}
\end{table*}

This section provides additional details of the experimental setting used to arrive at the results presented in Sections~\ref{sec:ExperimentalResultsSimple} and~\ref{sec:ExperimentalResultsChallenging} in the main paper. Table~\ref{tab:hyperparameters} summarizes the hyper-parameters of all algorithms used for our experiments.

\paragraph{OpenAI Gym MuJoCo domains} The hyper-parameters for NPG were manually selected by running an evaluation on the nominal task for each domain (without gravity or body part modifications). We tried various combinations of the number of iterations, number of trajectories per iteration, and step size, until we reached a learning curve that was fast and reached proficiency. Once these hyper-parameters were found, they were used for all lifelong learning algorithms. For \ouralgo{}, we chose typical hyper-parameters and held them fixed through all experiments, forgoing potential additional benefits from a hyper-parameter search. The only exception was the number of latent components used for the Walker-2D \textit{body-parts} domain, as we found empirically that $k=5$ led to saturation of the learning process early on. For PG-ELLA, we kept the same hyper-parameters as used for \ouralgo{}, since they are used in exactly the same way for both methods. Finally, for EWC, we ran a grid search over the value of the regularization term, $\lambda$, among $\{1e\!-\!7,1e\!-\!6, 1e\!-\!5, 1e\!-\!4, 1e\!-\!3\}$. The search was done by running five consecutive tasks for fifty iterations over five trials with different random seeds. We chose $\lambda$ independently for each domain to maximize the average performance after all tasks had been trained. We also tried various versions of EWC, as described in Appendix~\ref{sec:EWCAdditionalResults}, modifying the regularization term and selecting whether to share the policy's variance across tasks. The only version that worked in all domains was the original EWC penalty with a shared variance across tasks, so results in the main paper are based on that version. To make comparisons fair, EWC used the full Hessian instead of the diagonal Hessian proposed by the authors. 

\paragraph{Meta-World domains} In this case, we manually tuned the hyper-parameters for NPG on the {\tt reach} task, which we considered to be the easiest to solve in the benchmark, and again kept those fixed for all lifelong learners. We chose typical values for \ouralgo{} for $k$, $\lambda$, and $\mu$, and reused those for PG-ELLA. We used fewer latent components ($k=3$), since MT10 contains only $T_{\mathsf{max}}=10$ tasks and we considered that using more than three policy factors would give our algorithm an unfair advantage over single-model methods. For EWC and EWC\_h, we ran a grid search for $\lambda$ in the same way as for the previous experiments. For ER, we used a 50-50 ratio of experience replay as suggested by \citet{rolnick2019experience}, and ensured that each batch sampled from the replay buffer had the same number of trajectories from each previous task. \ouralgo{}, PG-ELLA, and EWC all had access to the full Hessian, and we chose for EWC not to share the variance across tasks since the outputs of the policies were task-specific. We ran all Meta-World tasks on version 1.5 of the MuJoCo physics simulator, to match the remainder of our experimental setting~\citep{todorov2012mujoco}. We used the robot hand and the object location (6-D) as the observation space for all tasks. Note that the goal, which was kept fixed for each task, was not given to the agent. For this reason, we removed 2 tasks from MT50 that use at least 9-D observations---{\tt stick pull} and {\tt stick push}---for a total of $T_{\mathsf{max}}=48$ tasks.

\section{Diversity of simple domains}
\label{sec:taskDiscrepancy}
\begin{wrapfigure}{r}{0.5\textwidth}
\vspace{-1.5em}
\begin{minipage}{0.96\linewidth}
\begin{figure}[H]
\centering
\vspace{-3em}
    \includegraphics[width=0.7\linewidth, trim={0.5cm 0.6cm 0.5cm 1.7cm}, clip]{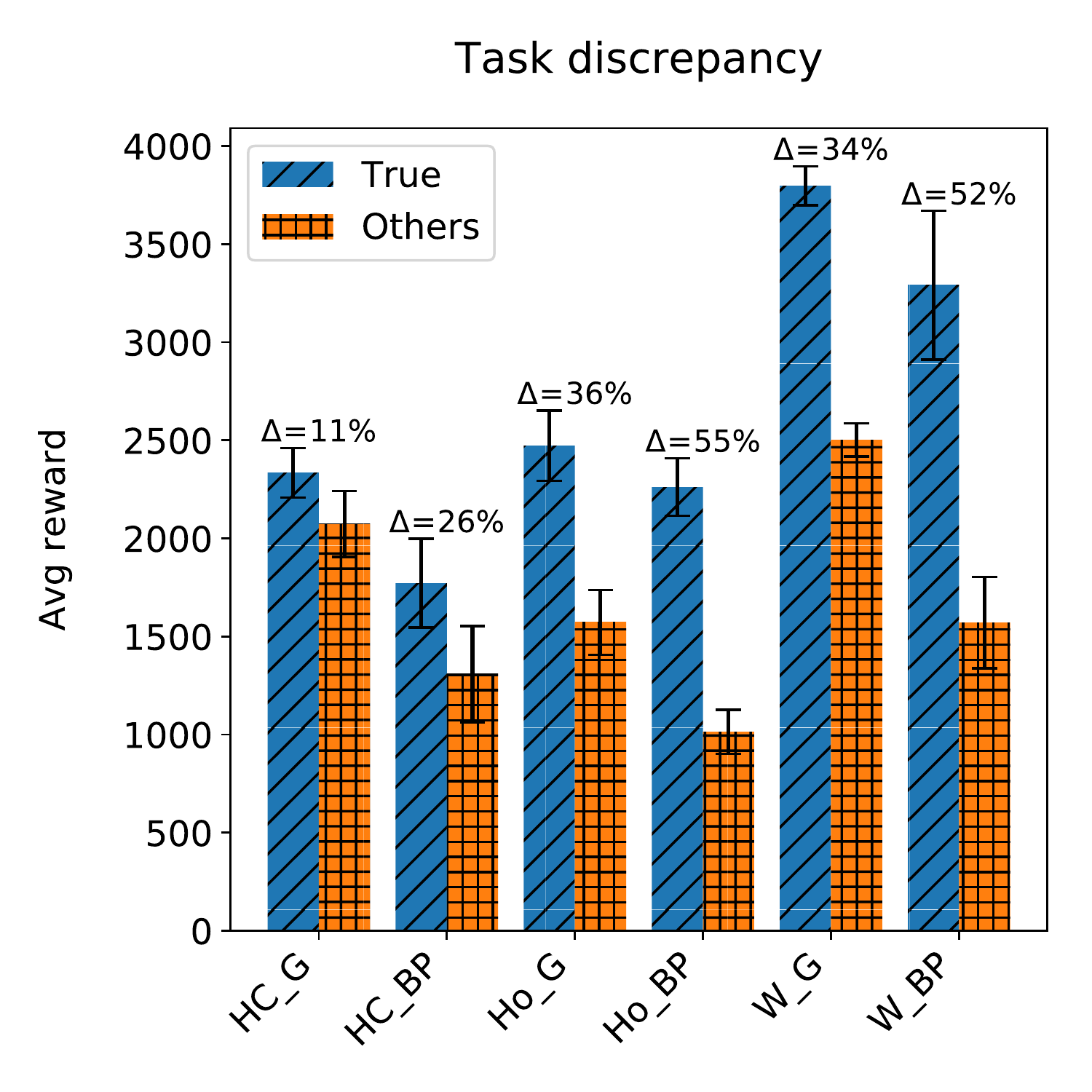}
    \vspace{-0.5em}
    \caption{Performance with the true policy vs other policies. Percent gap ($\Delta$) indicates task diversity. Body parts (BP) domains are more diverse than gravity (G) domains, and Walker-2D (W) and Hopper (Ho) domains are more varied than HalfCheetah (HC) domains.}
    \label{fig:taskDiscrepancy}
    \vspace{2em}
\end{figure}
\end{minipage}
\end{wrapfigure}
One important question in the study of lifelong RL is how diverse the tasks used for evaluation are. To measure this in the OpenAI Gym MuJoCo domains, we evaluated each task's performance using the final policy trained by \ouralgo{} on the correct task and compared it to the average performance using the policies trained on all other tasks. Figure~\ref{fig:taskDiscrepancy} shows that the policies do not work well across different tasks, demonstrating that the tasks are diverse. Moreover, the most highly-varying domains, Hopper and Walker-2D \textit{body-parts}, are precisely those for which EWC struggled the most, suffering from catastrophic forgetting, as shown in Figure~\ref{fig:testPerformance} in the paper. This is consistent with the fact that a single policy does not work across various tasks. In those domains, \ouralgo{} reached the performance of STL with a high speedup while retaining knowledge from early tasks.

\section{EWC additional results}
\label{sec:EWCAdditionalResults}
\setcounter{table}{0}
\begin{wrapfigure}{r}{0.5\textwidth}
\vspace{-9.0em}
\begin{minipage}{0.96\linewidth}
    \captionof{table}{Results with different versions of EWC. EWC regularization with $\sigma$ shared across tasks (boldfaced) was the most consistent, so we chose this for our experiments in the main paper. NaN's indicate that the learned policies became unstable, leading to failures in the simulator.}
    \label{tab:EWCAdditionalResults}
    \vspace{-0.5em}
    \centering
    \addtolength{\tabcolsep}{-0.35em}
    \footnotesize
    \begin{tabular}{@{}l|l|c|c|c@{}}
                Domain  & Algorithm     & Start   & Tune    &Final\\
\hline\hline
\multirow{6}{*}{HC\_G}
  & EWC 1, 1   & $-1245$  & $-2917$  & $-3.97e4$\\
  & EWC 1, 2   & $1796$  & $2409$   & $2603$\\
  & \textbf{EWC 1, 3}   & $\bf{1666}$   & $\bf{2225}$   & \bf{2254}\\
  & EWC 2, 1   & $-778$   & $1384$   & $1797$\\
  & EWC 2, 2   & $-762$   & $1565$   & $2238$\\
  & EWC 2, 3   & $-6.58e5$  & $-7e5$  & $-1.05e7$\\
\hline
\multirow{6}{*}{HC\_BP}
  & EWC 1, 1   & $1029$   & $1748$   & $1522$\\
  & EWC 1, 2   & $1132$   & $1769$   & $1588$\\
  & \textbf{EWC 1, 3}   & $\bf{1077}$   & $\bf{1716}$   & $\bf{1571}$\\
  & EWC 2, 1   & $-892$   & $1308$   & $1521$\\
  & EWC 2, 2   & $-1.79e5$  & $-2.16e5$  & $-1.53e6$\\
  & EWC 2, 3   & $-1.1e6$   & $-1.01e6$   & $-5.23e6$\\
\hline
\multirow{6}{*}{Ho\_G}
  & EWC 1, 1   & $1301$   & $2252$   & $1522$\\
  & EWC 1, 2   & $1339$   & $2322$   & $1836$\\
  & \textbf{EWC 1, 3}   & $\bf{1434}$   & $\bf{2488}$   & $\bf{1732}$\\
  & EWC 2, 1   & $872$  & $2616$   & $2089$\\
  & EWC 2, 2   & $930$  & $2582$   & $1900$\\
  & EWC 2, 3   & $939$  & $2520$   & $2029$\\
\hline
\multirow{6}{*}{Ho\_BP}
  & EWC 1, 1   & $613$  & $1508$   & $793$\\
  & EWC 1, 2   & $385$  & $920$  & $43$\\
  & \textbf{EWC 1, 3}   & $\bf{424}$  & $\bf{936}$  & $\bf{31}$\\
  & EWC 2, 1   & $615$  & $2142$   & $1011$\\
  & EWC 2, 2   & $620$  & $2119$   & $1120$\\
  & EWC 2, 3   & $613$  & $2138$   & $928$\\
\hline
\multirow{6}{*}{W\_G}
  & EWC 1, 1   & $1293$   & $2052$   & $303$\\
  & EWC 1, 2   & $-2132$  & $-2181$  &NaN\\
  & \textbf{EWC 1, 3}   & $\bf{2192}$   & $\bf{3901}$   & $\bf{2325}$\\
  & EWC 2, 1   & $-2269$  & $-1915$  & $-2490$\\
  & EWC 2, 2   & $-3.12e4$   & $-3.23e4$   & $-1.15e5$\\
  & EWC 2, 3   & $-8.98e4$   & $-9.65e4$   & $-1.59e5$\\
\hline
\multirow{6}{*}{W\_BP}
  & EWC 1, 1   & $1237$   & $3055$   & $1382$\\
  & EWC 1, 2   & $1148$   & $2800$   & $1306$\\
  & \textbf{EWC 1, 3}   & $\bf{744}$  & $\bf{2000}$   & $-\bf{128}$\\
  & EWC 2, 1   &NaN   &NaN   &NaN\\
  & EWC 2, 2   & $1027$   & $3687$   & $1416$\\
  & EWC 2, 3   &NaN   &NaN   &NaN
    \end{tabular}
\end{minipage}
\end{wrapfigure}

While testing on OpenAI MuJoCo domains, we experimented with six different variants of EWC, by varying two different choices. The first choice was whether to share the variance of the Gaussian policies across the different tasks or not. Sharing the variance enables the algorithm to start from a more deterministic policy, thereby achieving higher initial performance, at the cost of reducing task-specific exploration. The second choice was the exact form of the regularization penalty. In the original EWC formulation, the regularization term applied to the PG objective was $-\lambda\sum_{\that=1}^{t-1}\|\btheta-\alphathat\|_{\Hthat}^2$. \citet{huszar2018note} noted that this does not correspond to the correct Bayesian formulation, and proposed to instead use $-\lambda \|\btheta-\balpha^{(t-1)}\|_{\bH^{(t-1)}}^2$, where $\balpha^{(t-1)}$ and $\bH^{(t-1)}$ capture all the information from tasks $1$ through $t-1$ in the Bayesian setting. We experimented with these two choices of regularization, plus an additional one where $\lambda$ is scaled by $\frac{1}{t-1}$ in order for the penalty not to increase linearly with the number of tasks. For all versions, we independently tuned the hyper-parameters as described in Appendix~\ref{sec:app_ExperimentalSetting}.

Table~\ref{tab:EWCAdditionalResults} summarizes the results obtained for each variant of EWC. The only version that consistently learned each task's policy (tune) was the original EWC regularization with the variance shared across tasks. This was also the only variant for which the final performance was never unreasonably low. Therefore, we used this version for all experiments in the main paper.

\end{document}